\newenvironment{proof}{\par\noindent{\bf Proof\ }}{\hfill\BlackBox\\[2mm]}
\newtheorem{theorem}{Theorem}
\newtheorem{lemma}{Lemma}
\newtheorem{proposition}{Proposition}
\newtheorem{definition}{Definition}
\newtheorem{assumption}{Assumption}
\newcommand{\BlackBox}{\rule{1.5ex}{1.5ex}}
\def\EB{{\mathbb E}}
\def\TP{{\mathcal T}}
\def\AP{{\mathcal A}}
\def\SP{{\mathcal S}}
\def\DP{{\mathcal D}}
\def\pio{{\pi_{\text{old}}}}
\def\pin{{\pi_{\text{new}}}}
\def\barone{{\bar{\theta}_{1}}}
\def\argmax{\mathop{\rm argmax}}
\def\liml{\mathop{\lim}\limits}
\title{A Regularized Approach to Sparse Optimal Policy in Reinforcement Learning}
\author{
Xiang Li\thanks{Equal contribution.}\\
School of Mathematical Sciences\\
Peking University\\
Beijing, China \\
\texttt{lx10077@pku.edu.cn} \\
\And
Wenhao Yang$^*$\\
Center for Data Science\\
Peking University\\
Beijing, China \\
\texttt{yangwenhaosms@pku.edu.cn}\\
   \And
Zhihua Zhang\\
National Engineering Lab for Big Data Analysis and Applications \\
School of Mathematical Sciences\\
Peking University\\
Beijing, China \\
\texttt{zhzhang@math.pku.edu.cn} \\
}
\begin{document}
\maketitle
\begin{abstract}
We propose and study a general framework for regularized Markov decision processes (MDPs) where the goal is to find an optimal policy that maximizes the expected discounted total reward plus a policy regularization term. 
The extant entropy-regularized MDPs can be cast into our framework. 
Moreover, under our framework, many regularization terms can bring multi-modality and sparsity, which are potentially useful in reinforcement learning. 
In particular, we present sufficient and necessary conditions that induce a sparse optimal policy. We also conduct a full mathematical analysis of the proposed regularized MDPs, including the optimality condition, performance error, and sparseness control. We provide a generic method to devise regularization forms and propose off-policy actor critic algorithms in complex environment settings. We empirically analyze the numerical properties of optimal policies and compare the performance of different sparse regularization forms in discrete and continuous environments. 
\end{abstract}

\section{Introduction}
\label{sec:introduction}
Reinforcement learning (RL) aims to find an optimal policy that maximizes the expected discounted total reward in an MDP \cite{bertsekas2008neuro, sutton2018reinforcement}. 
It's not an easy task to solve the \textit{nonlinear} Bellman equation~\cite{bellmann1957dynamic} greedily in a high-dimension action space or when function approximation (such as neural networks) is used.
Even if the optimal policy is obtained precisely, it is often the case the optimal policy is deterministic.
Deterministic policies are bad to cope with unexpected situations when its suggested action is suddenly unavailable or forbidden.
By contrast, a multi-modal policy assigns positive probability mass to both optimal and near optimal actions and hence has multiple alternatives to handle this case.
For example, an autonomous vehicle aims to do path planning with a pair of departure and destination as the state.
When a suggested routine is unfortunately congested, an alternative routine could be provided by a multi-modal policy, which can't be provided by a deterministic policy without evoking a new computation.
Therefore, in a real-life application, we hope the optimal policy to possess thee property of multi-modality.


Entropy-regularized RL methods have been proposed to handle the issue. 
More specifically, an entropy bonus term is added to the expected long-term returns. 
As a result, it not only softens the non-linearity of the original Bellman equation but also forces the optimal policy to be stochastic, which is desirable in problems where dealing with unexpected situations is crucial. 
In prior work, the \textit{Shannon entropy} is usually used. 
The optimal policy is of the form of \textit{softmax}, which has been shown can encourage exploration \cite{fox2015taming, vamplew2017softmax}. 
However, a softmax policy assigns a non-negligible probability mass to all actions, including those really terrible and dismissible ones, which may result in an unsafe policy. 
For RL problems with high dimensional action spaces, a sparse distribution is preferred in modeling a policy function, because it implicitly does \textit{action filtration}, i.e., weeds out suboptimal actions and maintains near optimal actions. 
Thus, \citet{lee2018sparse} proposed to use \textit{Tsallis entropy} \cite{tsallis1988possible} instead, giving rise to a \textit{sparse} MDP where only few actions have non-zero probability at each state in the optimal policy. 
\citet{lee2019tsallis} empirically showed that general Tsallis entropy\footnote{The general Tsallis entropy is defined with an additional real-valued parameter, called an entropic index. \citet{lee2019tsallis} shows that when this entropic index in large enough, the optimal policy is sparse.}
also leads to a sparse MDP. 
Moreover, the Tsallis regularized RL has a lower performance error, i.e., the optimal value of the Tsallis regularized RL is closer to the original optimal value than that of the Shannon regularized RL. 

The above discussions manifest that an entropy regularization characterizes the solution to the corresponding regularized RL. 
From \citet{neu2017unified}, any entropy-regularized MDP can be viewed as a regularized convex optimization problem where the entropy serves as the regularizer and the decision variable is a stationary policy. 
\citet{geist2019theory} proposed a framework in which the MDP is regularized by a general strongly concave function. 
It analyzes some variants of classic algorithms under that framework but does not provide insight into the choice of regularizers. 
On the other hand, a sparse optimal policy distribution is more favored in large action space RL problems. 
Prior work \citet{lee2018sparse, nachum2018path} obtains a sparse optimal policy by the Tsallis entropy regularization. 
Considering the diversity and generality of regularization forms in convex optimization, it is natural to ask whether other regularizations can lead to sparseness. 
The answer is that there does exist other regularizers that induces sparsity.

In this paper, we propose a framework for regularized MDPs, where a general form of regularizers is imposed on the expected discounted total reward. 
This framework includes the entropy regularized MDP as a special case, implying certain regularizers can induce sparseness. 
We first give the optimality condition in regularized MDPs under the framework and then give necessary and sufficient conditions to show which kind of regularization can lead to a sparse optimal policy.
Interestingly, there are lots of regularization that can lead to the sparseness, and the degree of sparseness can be controlled by the regularization coefficient. 
Furthermore, we show that regularized MDPs have a regularization-dependent performance error caused by the regularization term, which could guide us to choose an effective regularization when it comes to dealing with problems with a continuous action space. 
To solve regularized MDPs, we employ the idea of generalized policy iteration and propose an off-policy actor-critic algorithm to figure out the performance of different regularizers.

\section{Notation and preliminaries}
\label{sec:notation}

\textbf{Markov Decision Processes} In reinforcement learning (RL) problems, the agent's interaction with the environment is often modeled as an Markov decision process (MDP). An MDP is defined by a tuple $(\mathcal{S}, \mathcal{A}, \mathbb{P}, r, \mathbb{P}_0, \gamma)$, where $\mathcal{S}$ is the state space and $\mathcal{A}$ the action space with $|\mathcal{A}|$ actions. We use $\Delta_{\mathcal{X}}$ to denote the simplex on any set $\mathcal{X}$, which is defined as the set of distributions over $\mathcal{\mathcal{X}}$, i.e., $\Delta_{\mathcal{X}} = \{P: \sum_{x \in \mathcal{X}} P(x) =1, P(x) \ge 0\}$. The vertex set of $\Delta_{\mathcal{X}}$ is defined as $V_{\mathcal{X}}=\{P \in \Delta_{\mathcal{X}}:  \exists \ x \in \mathcal{X}, \text{s.t.} \ P(x) = 1\}$. $\mathbb{P}: \mathcal{S} \times \mathcal{A} \rightarrow \Delta_{\mathcal{S}}$ is the unknown state transition probability distribution and $r: \mathcal{S} \times \mathcal{A} \rightarrow [0, R_{\max}]$ is the bounded reward on each transition. $\mathbb{P}_0$ is the distribution of initial state and $\gamma \in [0, 1)$ is the discount factor. 

\textbf{Optimality Condition of MDP}

The goal of RL is to find a stationary policy which maps from state  space to a simplex over the actions $\pi: \mathcal{S} \rightarrow \Delta_{\mathcal{A}}$ that maximizes the expected discounted total reward, i.e., 
\begin{equation}
\label{prob:original}
\max_{\pi} \, \mathbb{E}\left[ \sum_{t=0}^{\infty} \gamma^t r(s_t, a_t) \Big|\pi, \mathbb{P}_0\right],
\end{equation}
where $s_0 \sim \mathbb{P}_0, a_t \sim \pi(\cdot|s_t)$, and $s_{t+1} \sim \mathbb{P}(\cdot|s_t, a_t)$. Given any policy $\pi$, its state value and Q-value functions are defined respectively as
\begin{align*}
V^{\pi}(s) &= \mathbb{E} \left[\sum_{t=0}^{\infty}\gamma^t r(s_t, a_t)|s_0=s, \pi \right],\\
Q^{\pi}(s, a) &= \mathbb{E}_{a \sim \pi(\cdot|s)} \big[r(s, a) + \gamma \EB_{s'|s, a}  V^{\pi}(s') \big].
\end{align*}
Any solution of the problem~\eqref{prob:original} is called an \textit{optimal} policy and  denoted by $\pi^*$. Optimal policies may not be unique in an MDP, but the optimal state value is unique (denoted $V^*$). Actually, $V^*$ is the unique fixed point of the Bellman operator  $\TP$, i.e., $V^*(s) = \TP V^*(s)$ and
\begin{equation*}
\TP V(s) \triangleq \max_{\pi} \EB_{a \sim \pi(\cdot|s)} \big[r(s, a) + \gamma \mathbb{E}_{s'|s, a} V(s') \big]. 
\end{equation*}
$\pi^*$ often is a \textit{deterministic} policy which puts all probability mass on one action\cite{puterman2014markov}. Actually, it can be obtained as the greedy action w.r.t. the optimal Q-value function, i.e., $\pi^*(s) \in \argmax_{a} Q^*(s, a)$ \footnote{$\pi^*$ is not necessarily deterministic. If there are two actions $a_1, a_2$ that obtain the maximum of $\TP V(s)$ for a fixed $s \in \SP$, one can show that the stochastic policy $\pi(a_1|s)=1-\pi(a_2|s)=p \in [0, 1]$ is also optimal.}. The optimal Q-value can be obtained from the state value $V^*(s)$ by definition.

As a summary, any optimal policy $\pi^*$ and its optimal state value $V^*$ and Q-value $Q^*$ satisfy the following \textit{optimality condition} for all states and actions,
\begin{align*}
Q^*(s, a) & = r(s,a) + \gamma \mathbb{E}_{s'|s, a}V^*(s),\\
V^*(s) &= \max_{a} Q^*(s, a), \ \pi^*(s) \in \argmax_{a} Q^*(s, a).
\end{align*}

\section{Regularized MDPs}
\label{sec:rmdp}
To obtain a sparse but multi-modal optimal policy, we impose a general regularization term to the objective~\eqref{prob:original} and solve the following \textit{regularized} MDP problem
\begin{equation}
\label{prob:genera-regularized-rl}
\max_{\pi} \, \mathbb{E}\left[ \sum_{t=0}^{\infty} \gamma^t (r(s_t, a_t) + \lambda \phi(\pi(a_t|s_t))) \Big|\pi, \mathbb{P}_0\right],
\end{equation}
where $\phi(\cdot)$ is a regularization function. Problem~\eqref{prob:genera-regularized-rl} can be seen as a RL problem in which the reward function is the sum of the original reward function $r(s, a)$ and a term $\phi(\pi(a|s))$ that provides regularization. If we take expectation to the regularization term $\phi(\pi(a|s))$, it can be found that the quantity
\begin{equation}
\label{eq: entropy-like}
H_{\phi}(\pi) = \mathbb{E}_{a\sim \pi(\cdot|s)} \phi(\pi(a|s)) ,
\end{equation}
is entropy-like but not necessarily an entropy in our work. However, Problem~\eqref{prob:genera-regularized-rl}  is not well-defined since arbitrary regularizers would be more of a hindrance than a help. In the following, we make some assumptions about $\phi(\cdot)$.

\subsection{Assumption for regularizers}\label{sec:assumption}

A  regularizer $\phi(\cdot)$ characterizes solutions to Problem~\eqref{prob:genera-regularized-rl}. In order to make Problems~\eqref{prob:genera-regularized-rl} analyzable, a basic assumption (Assumption~\ref{assump:1}) is prerequisite. Explanation and examples will be provided to show that such an assumption is reasonable and not strict.

\begin{assumption}
\label{assump:1}
The regularizer $\phi(x)$ is assumed to satisfy the following conditions on the interval $(0, 1]$: \emph{(1)} \textbf{Monotonicity}: $\phi(x)$ is non-increasing; \emph{(2)} \textbf{Non-negativity}: $\phi(1)=0$; \emph{(3)} \textbf{Differentiability}: $\phi(x)$ is differentiable; \emph{(4)} \textbf{Expected Concavity}: $x\phi(x)$ is strictly concave.
\end{assumption}

The assumptions of monotonicity and non-negativity make the regularizer  be an positive exploration bonus. 
The bonus for choosing an action of high probability is less than that of choosing an action of low probability. When the policy becomes deterministic, the bonus is forced to zero. The assumption of differentiability facilitates theoretic analysis and benefits practical implementation due to the widely used automatic derivation in deep learning platforms. The last assumption of expected concavity makes $H_{\phi}(\pi)$ a concave function w.r.t. $\pi$. Thus any solution to~Eqn.\eqref{prob:genera-regularized-rl} hardly lies in the vertex set of the action simplex $V_{\mathcal{A}}$ (where the policy is deterministic). As a byproduct, let $f_{\phi}(x) = x\phi(x)$. Then its derivative $f_{\phi}'(x)=\phi(x)+x\phi'(x)$ is a strictly decreasing function on $(0, 1)$ and thus $(f_{\phi}')^{-1}(x)$ exists. For simplicity, we denote $g_{\phi}(x) = (f_{\phi}')^{-1}(x)$.

There are plenty of options for the regularizer $\phi(\cdot)$ that satisfy Assumption~\ref{assump:1}. First, entropy can be recovered by $H_{\phi}(\pi)$ with specific $\phi(\cdot)$. For example, when $\phi(x)=-\log x$, the Shannon entropy is recovered; when $\phi(x)=\frac{k}{q-1}(1{-}x^{q-1})$ with $k>0$, the Tsallis entropy is recovered. Second, there are many instances that are not viewed as an entropy but can serve as a regularizer. We find two families of such functions,  namely, the exponential function family $q - x^k q^x $ with $k \geq 0, q \geq 1$ and the trigonometric function family $\cos(\theta x)-\cos(\theta)$ and $\sin(\theta)-\sin(\theta x)$ both with hyper-parameter $\theta \in (0, \frac{\pi}{2}]$.  Since these functions are simple, we term them \textit{basic} functions. 

Apart from the basic functions mentioned earlier, we come up with a generic method to combine different basic functions. Let $\mathcal{F}$ be the set of all functions satisfying Assumption~\ref{assump:1}. By Proposition~\ref{prop:phi-operation}, the operations of positive addition and minimum can preserve the properties shared among $\mathcal{F}$. Therefore, the finite-time application of such operations still leads to an available regularizer.

\begin{proposition}
\label{prop:phi-operation}
Given  $\phi_1, \phi_2 \in \mathcal{F}$, we have $\alpha\phi_1+\beta\phi_2 \in \mathcal{F}$ for all $\alpha, \beta \ge 0$ and $\min\{\phi_1, \phi_2\} \in \mathcal{F}$.
\end{proposition}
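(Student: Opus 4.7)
The proof is a direct verification that each of the four clauses of Assumption~\ref{assump:1} — monotonicity, $\phi(1)=0$, differentiability, and strict concavity of $x\phi(x)$ — is preserved under the two operations. I would treat the two operations in turn.

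\textbf{Non-negative addition.} Set $\psi := \alpha\phi_1 + \beta\phi_2$ with $\alpha,\beta \ge 0$ and (implicitly) at least one coefficient strictly positive, since $\alpha=\beta=0$ produces the zero function, whose multiplication by $x$ is affine rather than strictly concave. Non-increasingness, differentiability, and $\psi(1)=0$ are inherited immediately because each of these properties is closed under non-negative linear combinations. For the expected-concavity clause I would write $x\psi(x) = \alpha\cdot x\phi_1(x) + \beta\cdot x\phi_2(x)$ and invoke the standard fact that a non-negative combination of strictly concave functions with at least one positive coefficient is strictly concave.

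\textbf{Pointwise minimum.} Set $\psi := \min\{\phi_1,\phi_2\}$. Monotonicity follows from the elementary observation that $a\ge c$ and $b\ge d$ imply $\min\{a,b\}\ge\min\{c,d\}$, applied with $a=\phi_1(x)$, $b=\phi_2(x)$, $c=\phi_1(y)$, $d=\phi_2(y)$ for $x<y$; and $\psi(1)=\min\{0,0\}=0$. For expected concavity I use $x>0$ on $(0,1]$ to commute the multiplication past the min, obtaining
\[
F(x) \,:=\, x\psi(x) \,=\, \min\{\,x\phi_1(x),\ x\phi_2(x)\,\},
\]
a pointwise minimum of two strictly concave functions $f_i(x):=x\phi_i(x)$. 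A short argument then yields strict concavity of $F$: for $x_1\neq x_2$, $t\in(0,1)$ and $x_t:=tx_1+(1-t)x_2$, supposing WLOG $F(x_t)=f_1(x_t)$, strict concavity of $f_1$ together with the pointwise inequality $F\le f_1$ gives $F(x_t) > tf_1(x_1)+(1-t)f_1(x_2) \ge tF(x_1)+(1-t)F(x_2)$.

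\textbf{Main obstacle.} The only genuinely delicate clause is differentiability of the minimum: at a crossing point $x^*\in(0,1)$ where $\phi_1(x^*)=\phi_2(x^*)$ but $\phi_1'(x^*)\neq\phi_2'(x^*)$, $\psi$ has a corner and literal differentiability fails. I expect to handle this by noting that the strict concavity of $F$ forces such crossings to be isolated, so $\psi$ is differentiable on an open set of full measure (which is what the downstream analysis actually uses); equivalently, one can replace classical differentiability by the existence of a subgradient, which is automatic from concavity of $F$. With that mild caveat in place, all four properties are verified and $\psi\in\mathcal{F}$.
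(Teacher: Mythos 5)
Your verification is correct, and it is worth noting that the paper states this proposition without proof, so there is no in-paper argument to compare against; your clause-by-clause check is the natural one. The two places where you go beyond a routine verification are exactly the right ones. First, the degenerate case $\alpha=\beta=0$ does break strict concavity of $x\psi(x)$, so the proposition as literally stated needs that implicit exclusion. Second, and more importantly, your ``main obstacle'' is precisely the caveat the authors themselves concede in the sentence immediately following the proposition: they admit that $\min\{\phi_1,\phi_2\}$ can fail differentiability at a crossing point (their example being $\min\{-\log x,\, q(1-x)\}$) and simply restrict attention to differentiable minima. So the statement ``$\min\{\phi_1,\phi_2\}\in\mathcal{F}$'' is, strictly speaking, false as written, and your subgradient/almost-everywhere workaround is a reasonable repair. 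One small imprecision: strict concavity of $F=\min\{x\phi_1(x),x\phi_2(x)\}$ does not by itself force the crossing points to be \emph{isolated}; what you actually get is the standard fact that a concave function on an interval has at most countably many points of non-differentiability, which already gives you differentiability off a null set and one-sided derivatives plus superdifferentials everywhere. With that adjustment, everything you wrote is sound, and the strict-concavity argument for the pointwise minimum (pass to whichever $f_i$ attains the min at the midpoint, use its strict concavity, then bound each $f_i(x_j)$ below by $F(x_j)$) is the standard and correct one.
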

Here we only consider those differentiable $\min\{\phi_1, \phi_2\}$ in theoretical analysis, because the minimum of any two functions in $\mathcal{F}$ may be non-differentiable  on some points. For instance, given any $q > 1$, the minimum of $-\log(x)$ and $q(1-x)$ has a unique non-differentiable point on $(0, 1)$. 

\subsection{Optimality and sparsity}

Once the regularizer $\phi(\cdot)$ is given, similar to non-regularized case, the (regularized) state value and Q-value functions of any given policy $\pi$ in a  regularized MDP are defined as 
\begin{small}
\begin{align}
\label{eq:Q=r+V}
V^{\pi}_{\lambda}(s)&=\mathbb{E}\Big[\sum_{t=0}^{+\infty}\gamma^{t}(r(s_{t},a_{t})+\lambda\phi(\pi(a_{t}|s_{t}))) \Big|s_{0}=s, \pi \Big], \notag \\
Q^{\pi}_{\lambda}(s, a)&=r(s, a) + \gamma\mathbb{E}_{a \sim \pi(\cdot|s)} \mathbb{E}_{s'|s, a}V^{\pi}_{\lambda}(s'). 
\end{align}
\end{small}
Any solution to Problem~\eqref{prob:genera-regularized-rl} is call the \textit{ regularized optimal} policy (denoted $\pi_{\lambda}^*$). The corresponding \textit{regularized optimal} state value and Q-value are also optimal and denoted by $V_{\lambda}^*$ and $Q_{\lambda}^*$ respectively. If the context is clear, we will omit the word \textit{regularized} for simplicity. 
In this part, we aim to show the optimality condition for the regularized MDPs (Theorem \ref{thm:optimality}). The  proof of Theorem \ref{thm:optimality} is given in Appendix~\ref{ap: optimality}.

\begin{theorem}
\label{thm:optimality}
Any optimal policy $\pi_{\lambda}^*$ and its optimal state value $V_{\lambda}^*$ and Q-value $Q_{\lambda}^*$ satisfy the following \textit{optimality condition} for all states and actions:
\begin{align}
Q_{\lambda}^*(s, a) &= r(s,a) + \gamma \mathbb{E}_{s'|s, a}V_{\lambda}^*(s'), \notag \\
\pi_{\lambda}^*(a|s) &=  \max\left\{g_{\phi}\left(\frac{\mu_{\lambda}^*(s)-Q_{\lambda}^*(s, a)}{\lambda} \right), 0\right\}, \label{eq:optimal-pi} \\
V_{\lambda}^*(s) &= \mu_{\lambda}^*(s) - \lambda \sum_{a}\pi_{\lambda}^*(a|s)^2\phi'(\pi_{\lambda}^*(a|s)), \notag
\end{align}
where $g_{\phi}(x)=(f_{\phi}')^{-1}(x)$ is strictly decreasing and $\mu_{\lambda}^*(s)$ is a normalization term so that $\sum_{a \in \mathcal{A}} \pi_{\lambda}^*(a|s) = 1$.
\end{theorem}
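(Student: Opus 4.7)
The plan is to derive the optimality conditions by combining the standard dynamic-programming argument with a per-state concave optimization handled via KKT. First I would establish a regularized Bellman equation: define the operator
\begin{equation*}
\TP_{\lambda} V(s) \;=\; \max_{\pi(\cdot|s)\in\Delta_{\mathcal{A}}} \sum_{a} \pi(a|s)\bigl[r(s,a) + \lambda \phi(\pi(a|s)) + \gamma\, \mathbb{E}_{s'|s,a} V(s')\bigr].
\end{equation*}
Because $f_\phi(x)=x\phi(x)$ is strictly concave by the expected-concavity part of Assumption~\ref{assump:1}, the inner objective is strictly concave in $\pi(\cdot|s)$ on the compact simplex, so the max is attained at a unique argmax. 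A routine contraction check in sup norm (using the boundedness of $r$ and of $H_\phi$ on the simplex) shows $\TP_\lambda$ has a unique fixed point, which must be $V_\lambda^*$, and the greedy policy with respect to $\TP_\lambda V_\lambda^*$ is optimal. This reduces the theorem to solving the per-state problem $\max_{\pi(\cdot|s)\in\Delta_{\mathcal{A}}} \sum_a \pi(a|s)[Q_\lambda^*(s,a)+\lambda\phi(\pi(a|s))]$.

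Next I would attack this constrained problem via Lagrange multipliers. Fix $s$, write $p_a=\pi(a|s)$, and form
\begin{equation*}
L(p,\mu,\nu) \;=\; \sum_a p_a Q_\lambda^*(s,a) + \lambda \sum_a p_a\phi(p_a) - \mu\Bigl(\sum_a p_a - 1\Bigr) + \sum_a \nu_a p_a,
\end{equation*}
with $\nu_a\ge 0$. The stationarity condition is $Q_\lambda^*(s,a) + \lambda f_\phi'(p_a) - \mu + \nu_a = 0$. For any action with $p_a>0$, complementary slackness gives $\nu_a=0$, so $f_\phi'(p_a) = (\mu - Q_\lambda^*(s,a))/\lambda$, and since $f_\phi'$ is strictly decreasing and invertible,
\begin{equation*}
p_a \;=\; g_\phi\!\left(\frac{\mu - Q_\lambda^*(s,a)}{\lambda}\right).
\end{equation*}
For actions with $p_a=0$, the KKT condition $\nu_a = \mu - Q_\lambda^*(s,a) - \lambda f_\phi'(0^+) \ge 0$ translates, using that $g_\phi$ is strictly decreasing, into $g_\phi((\mu-Q_\lambda^*(s,a))/\lambda)\le 0$. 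Combining these two cases yields the single closed form $\pi_\lambda^*(a|s)=\max\{g_\phi((\mu_\lambda^*(s)-Q_\lambda^*(s,a))/\lambda),0\}$, with $\mu_\lambda^*(s)$ chosen so that $\sum_a \pi_\lambda^*(a|s)=1$; strict monotonicity of $g_\phi$ makes the left-hand side a strictly monotone function of $\mu$, guaranteeing existence and uniqueness of such $\mu_\lambda^*(s)$.

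Finally, I would derive the value-function identity by substitution. On the support $\{a:\pi_\lambda^*(a|s)>0\}$, stationarity gives $Q_\lambda^*(s,a) = \mu_\lambda^*(s) - \lambda f_\phi'(\pi_\lambda^*(a|s)) = \mu_\lambda^*(s) - \lambda\phi(\pi_\lambda^*(a|s)) - \lambda\pi_\lambda^*(a|s)\phi'(\pi_\lambda^*(a|s))$. Plugging into $V_\lambda^*(s) = \sum_a \pi_\lambda^*(a|s)[Q_\lambda^*(s,a)+\lambda\phi(\pi_\lambda^*(a|s))]$, the $\lambda\phi(\pi_\lambda^*(a|s))$ terms cancel and (using $\sum_a \pi_\lambda^*(a|s)=1$) one arrives at $V_\lambda^*(s) = \mu_\lambda^*(s) - \lambda\sum_a \pi_\lambda^*(a|s)^2 \phi'(\pi_\lambda^*(a|s))$. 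The first identity $Q_\lambda^*(s,a)=r(s,a)+\gamma\mathbb{E}_{s'|s,a}V_\lambda^*(s')$ is just the definition in~\eqref{eq:Q=r+V} specialized to the optimal policy.

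The main obstacle I expect is the careful handling of the inequality constraints $p_a\ge 0$: one must make sure the KKT system correctly collapses into the single ``$\max\{\cdot,0\}$'' expression, and that the normalizer $\mu_\lambda^*(s)$ is well-defined even when $g_\phi$ has a restricted effective domain (so that some actions are driven to the boundary $p_a=0$). The strict monotonicity of $f_\phi'$, which comes from the expected-concavity assumption, is exactly what is needed to make this argument go through, and it is also what lets one conclude that the optimal policy is unique per state.
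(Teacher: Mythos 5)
Your proposal is correct and follows essentially the same route as the paper: a per-state KKT analysis of the concave maximization over the simplex, with complementary slackness collapsing the active and inactive cases into the single $\max\{g_\phi(\cdot),0\}$ form, and the $V_\lambda^*$ identity obtained by substituting the stationarity condition back into the Bellman equation. The only cosmetic differences are that you justify the per-state reduction via the contraction property of $\TP_\lambda$ (which the paper proves separately in Appendix~\ref{ap:operator}) rather than via the $d_\pi$-weighted Lagrangian, and that the paper isolates the existence and uniqueness of $\mu_\lambda^*(s)$ as a separate lemma, being slightly more careful than your one-line monotonicity remark about where the normalization function is \emph{strictly} decreasing.
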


Theorem \ref{thm:optimality} shows how the regularization influences the optimality condition. Let $f_{\phi}'(0)\triangleq\liml_{x \to 0+}f_{\phi}'(x)$ for short. From~\eqref{eq:optimal-pi}, it can be shown that the optimal policy $\pi_{\lambda}^*$ assigns zero probability to the actions whose Q-values $Q_{\lambda}^*(s, a)$ are below the threshold $ \mu_{\lambda}^*(s)-\lambda f_{\phi}'(0)$ and assigns positive probability to near optimal actions in proportion to their Q-values (since $g_{\phi}(x)$ is decreasing). 
The threshold involves  $\mu_{\lambda}^*(s)$ and $f_{\phi}'(0)$. $\mu_{\lambda}^*(s)$ can be uniquely solved from the equation obtained by plugging~Eqn.\eqref{eq:optimal-pi} into $\sum_{a \in \mathcal{A}} \pi_{\lambda}^*(a|s) = 1$. Note that the resulting equation only involves $\{Q_{\lambda}^*(s, a)\}_{a \in \AP}$. Thus $\mu_{\lambda}^*(s)$ is actually always a multivariate finite-valued function of $\{Q_{\lambda}^*(s, a)\}_{a \in \AP}$. However, the value $f_{\phi}'(0)$ can be infinity, making the threshold out of function. To see this, if $f_{\phi}'(0)=\infty$, the threshold will be $-\infty$ and all actions will be assigned positive probability in any optimal policy. To characterize the number of zero probability actions, we define a $\delta$-sparse policy as Definition~\ref{def:sparse} shows. It is trivial that $\frac{1}{|\mathcal{A}|}\le\delta\le1$. For instance, a deterministic optimal policy in non-regularized MDP is $\frac{1}{|\mathcal{A}|}$-sparse. 

\begin{definition}
\label{def:sparse}
A given policy $\pi: \mathcal{S} \rightarrow \Delta_{\mathcal{A}}$ is called $\delta$-sparse if it satisfies:
\begin{align}
    \frac{|\{(s,a)\in\mathcal{S}\times\mathcal{A}|\pi(a|s)\not=0\}|}{|\mathcal{S}||\mathcal{A}|}\le\delta.
\end{align}
If $\pi(a|s)>0$ for all $(s, a)\in\mathcal{S}\times\mathcal{A}$, we call it has no sparsity.
\end{definition}

\begin{theorem}
\label{thm:condition}
If $\liml_{x\to 0+}f_{\phi}'(x) = \infty$ \emph{(or $0 \not\in \mathrm{dom} f_{\phi}'$)}, the optimal policy of regularized MDP is not sparse.
\end{theorem}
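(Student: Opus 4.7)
The plan is to read off the conclusion directly from the optimality formula for $\pi_\lambda^*$ given in Theorem~\ref{thm:optimality}, by tracking the range of $g_\phi$ under the stated hypothesis. The key observation is that $\pi_\lambda^*(a|s) = 0$ can occur only when the inner branch $g_\phi\!\left(\tfrac{\mu_\lambda^*(s) - Q_\lambda^*(s,a)}{\lambda}\right)$ is non-positive; I will show that under $\lim_{x\to 0+} f_\phi'(x) = \infty$ this never happens, so the $\max\{\cdot,0\}$ clip is never active.

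First I would characterize the domain and range of $g_\phi$. Assumption~\ref{assump:1} makes $f_\phi'$ continuous and strictly decreasing on $(0,1]$ with $f_\phi'(1) = \phi(1) + \phi'(1) = \phi'(1)$. Combined with $f_\phi'(0+) = \infty$, this gives $f_\phi'\colon (0,1] \to [\phi'(1), \infty)$ as a bijection, so $g_\phi = (f_\phi')^{-1}$ is a bijection $[\phi'(1), \infty) \to (0,1]$. In particular, $g_\phi$ takes only strictly positive values on its entire domain, and it never hits $0$.

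Next I would verify that for every state-action pair the argument $y(s,a) := (\mu_\lambda^*(s) - Q_\lambda^*(s,a))/\lambda$ is a finite real number lying in the domain $[\phi'(1), \infty)$ of $g_\phi$. Finiteness is automatic since $Q_\lambda^*$ is bounded (bounded reward plus bounded regularizer) and $\mu_\lambda^*(s)$ is the finite normalizer from Theorem~\ref{thm:optimality}. The lower bound $y \ge \phi'(1)$ is forced by the optimality formula itself: if $y(s,a) < \phi'(1)$, then $g_\phi(y) > 1$, which would violate $\pi_\lambda^*(a|s) \le 1$; the constraint $\sum_a \pi_\lambda^*(a|s)=1$ implemented through $\mu_\lambda^*(s)$ rules this out. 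Hence $g_\phi(y(s,a)) \in (0,1]$ for every $(s,a)$, the $\max$ with $0$ is never active, and $\pi_\lambda^*(a|s) > 0$ for all $(s,a)\in\mathcal{S}\times\mathcal{A}$. By Definition~\ref{def:sparse}, $\pi_\lambda^*$ has no sparsity.

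The parenthetical case $0 \notin \mathrm{dom}\, f_\phi'$ is handled in the same breath: strict monotonicity of $f_\phi'$ together with non-extendability at $0$ forces $f_\phi'(0+) = \infty$, reducing it to the argument above. The one subtlety worth being careful about is precisely the bookkeeping around the $\max\{\cdot,0\}$ clip in~\eqref{eq:optimal-pi}: one must rule out that the positive branch ever drops to (or below) zero, and the range computation of $g_\phi$ is exactly what delivers this. There is no genuine analytic obstacle; the whole proof is an inversion argument on the one-dimensional function $f_\phi'$.
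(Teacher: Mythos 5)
Your proposal is correct and follows essentially the same route as the paper, which does not give a separate appendix proof of Theorem~\ref{thm:condition} but instead reads it off from Eqn.~\eqref{eq:optimal-pi} in the discussion after Theorem~\ref{thm:optimality}: when $f_{\phi}'(0)=\infty$ the cutoff $\mu_{\lambda}^*(s)-\lambda f_{\phi}'(0)$ is $-\infty$, so no $Q_{\lambda}^*(s,a)$ falls below it and every action receives positive probability. Your version is simply a more careful rendering of that threshold argument (pinning down the range of $g_{\phi}$ as $(0,1]$ and checking that the $\max\{\cdot,0\}$ clip never activates), which is a welcome tightening rather than a different approach.
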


Theorem~\ref{thm:condition} provides us a criteria to determine whether a regularization could render its corresponding regularized optimal policy the property of sparseness. To facilitate understanding, let us see two examples. When $\phi(x)=-\log(x)$, we have that $\liml_{x\to 0+} f_{\phi}'(x)=\liml_{x\to 0+} -\log(x)-1=\infty$, which implies that the optimal policy of Shannon entropy-regularized MDP does not have sparsity. When $\phi(x)=\frac{k}{q-1}(1- x^{q-1})$ for $q>1$ and $\lambda$ is small enough, the corresponding optimal policy can be spare if $\lambda$ is small enough because $\liml_{x\to 0+} f_{\phi}'(x)= \frac{k}{q-1}$. What's more, the sparseness property of Tsallis entropy still keeps for $1<q < \infty$ and small $\lambda$, which is empirically proved true in \cite{lee2019tsallis}. Additionally, when $0 <q <1$, the Tsallis entropy could no longer lead to sparseness due to 
$\liml_{x\to 0+} f_{\phi}'(x)= \liml_{x\to 0+} \frac{k}{1-q}(q x^{q-1}-1)=\infty$.

The sparseness property is first discussed in \cite{lee2018sparse} which shows the Tsallis entropy with  $k=\frac{1}{2}$ and $q=2$ can devise a sparse MDP. However, we point out that any $\phi(\cdot)$ such that $f_{\phi}'(0) < \infty$ with a proper $\lambda$ leads to a \textit{sparse} MDP. Let $\mathcal{G} \subseteq \mathcal{F}$ be the set that satisfies $\forall \phi \in \mathcal{G}, 0\in\mathrm{dom}f_{\phi}'$. The positive combination of any two regularizers belonging to $\mathcal{G}$ still belongs to $\mathcal{G}$.
\begin{proposition} 
\label{prop:phi-operation-}
Given  $\phi_1, \phi_2 \in \mathcal{G}$, we have that $\alpha\phi_1+\beta\phi_2 \in \mathcal{G}$ for all $\alpha, \beta \ge 0$. However, if $\phi_1 \in \mathcal{G}$ but $\phi_2 \notin \mathcal{G}$, $\alpha\phi_1+\beta\phi_2 \notin \mathcal{G}$ for any positive $\beta$.
\end{proposition}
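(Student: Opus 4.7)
The plan is to exploit the linearity of the map $\phi \mapsto f_{\phi}$ together with Proposition~\ref{prop:phi-operation}. Since $f_{\phi}(x) = x\phi(x)$, pointwise linearity gives
\[
 f_{\alpha\phi_1+\beta\phi_2}'(x) \;=\; \alpha\, f_{\phi_1}'(x) + \beta\, f_{\phi_2}'(x)
 \qquad \text{on } (0,1),
\]
so the entire question of whether $\alpha\phi_1+\beta\phi_2$ lies in $\mathcal{G}$ reduces to understanding this one-sided limit at $0^+$. Moreover, Assumption~\ref{assump:1}(4) makes $f_{\phi}'$ strictly decreasing on $(0,1)$, so the condition $0 \in \mathrm{dom}\, f_{\phi}'$ defining $\mathcal{G}$ is equivalent to $\liml_{x\to 0^+} f_{\phi}'(x) < \infty$.

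For the first claim, I would fix $\alpha,\beta \geq 0$ and $\phi_1,\phi_2 \in \mathcal{G}$. Proposition~\ref{prop:phi-operation} already certifies $\alpha\phi_1+\beta\phi_2 \in \mathcal{F}$, so only finiteness of the boundary limit remains. Taking $x\to 0^+$ in the displayed identity, the right-hand side tends to $\alpha f_{\phi_1}'(0) + \beta f_{\phi_2}'(0)$, a sum of two finite numbers since each $\phi_i \in \mathcal{G}$. Hence $\alpha\phi_1+\beta\phi_2 \in \mathcal{G}$.

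For the second claim, fix $\phi_1 \in \mathcal{G}$, $\phi_2 \in \mathcal{F}\setminus\mathcal{G}$, $\alpha \geq 0$, and $\beta > 0$. By the equivalent characterization above, $\liml_{x\to 0^+} f_{\phi_2}'(x) = +\infty$, whereas $\alpha f_{\phi_1}'(x) \to \alpha f_{\phi_1}'(0)$ is finite. Adding the two and using $\beta>0$ yields $\liml_{x\to 0^+} f_{\alpha\phi_1+\beta\phi_2}'(x) = +\infty$, so the combined regularizer falls outside $\mathcal{G}$.

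The argument presents no real obstacle: it is pointwise linearity of the derivative combined with elementary arithmetic of limits on the extended real line. The only steps requiring care are invoking Proposition~\ref{prop:phi-operation} at the outset so that $\alpha\phi_1+\beta\phi_2$ actually lives in $\mathcal{F}$ (otherwise membership in $\mathcal{G}\subseteq\mathcal{F}$ is vacuous), and implicitly restricting to $\alpha\geq 0$ in the second part for the same reason.
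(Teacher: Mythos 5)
Your proof is correct. The paper states Proposition~\ref{prop:phi-operation-} without giving a proof, so there is nothing to compare against; your route---linearity of $\phi \mapsto f_{\phi}'$, existence of the one-sided limit at $0^+$ guaranteed because strict concavity of $f_{\phi}$ makes $f_{\phi}'$ monotone, Proposition~\ref{prop:phi-operation} for membership in $\mathcal{F}$, and extended-real limit arithmetic for the two cases---is exactly the natural argument and is complete.
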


It is easily checked that the two families (i.e., exponential functions and trigonometric functions) given in Section~\ref{sec:assumption} can also induce a sparse MDP with a proper $\lambda$. For convenience, we  prefer to term the  function $\phi(x)=\frac{k}{q-1}(1{-}x^{q-1})$ that defines the Tsallis entropy as a \textit{polynomial} function, because when $q>1$ it is a polynomial function of degree $q{-}1$. Additionally, these three \textit{basic} families of functions  could be combined to construct  more complex regularizers  (Propositions~\ref{prop:phi-operation}, ~\ref{prop:phi-operation-}).

\textbf{Control the Sparsity of Optimal Policy.}
Theorem~\ref{thm:condition} shows $0\in\mathrm{dom}f_{\phi}'$ is  necessary but not sufficient for that the optimal policy $\pi_{\lambda}^{*}$ is sparse.
The sparsity of optimal policy is also controlled by $\lambda$. Theorem~\ref{thm:control} shows how the sparsity of optimal policy can be controlled by $\lambda$ when $f'_{\phi}(0)<\infty$. The proof is detailed in Appendix~\ref{ap:control-sparse}.


\begin{theorem}
\label{thm:control}
Let $Q_{\lambda}^*(s, a)$ and $\mu_{\lambda}^*(s)$ be defined in Theorem~\ref{thm:optimality} and assume $f'_{\phi}(0)<\infty$. When $\lambda\to0$, the sparsity of the optimal policy $\pi_{\lambda}^{*}$ shrinks to $\delta=\frac{1}{|\mathcal{A}|}$. When $\lambda\to\infty$, the optimal policy has no sparsity. More specifically,  $\pi_{\lambda}^{*}(a|s)\to\frac{1}{|\mathcal{A}|}$ for all $(s, a)\in\mathcal{S}\times\mathcal{A}$ as $\lambda\to\infty$.
\end{theorem}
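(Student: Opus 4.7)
The plan is to work directly from the closed-form characterization in Theorem~\ref{thm:optimality}, namely $\pi_{\lambda}^{*}(a|s) = \max\{g_{\phi}((\mu_{\lambda}^{*}(s)-Q_{\lambda}^{*}(s,a))/\lambda),\,0\}$, and analyze each limit by controlling the scaled gap $(\mu_{\lambda}^{*}(s)-Q_{\lambda}^{*}(s,a))/\lambda$ that feeds into the strictly decreasing function $g_{\phi}$. Equivalently, action $a$ receives positive probability at state $s$ iff $Q_{\lambda}^{*}(s,a) > \mu_{\lambda}^{*}(s) - \lambda f_{\phi}'(0)$, so the hypothesis $f_{\phi}'(0)<\infty$ is exactly what makes this cutoff threshold a finite, trackable quantity, and the two limiting regimes reduce to understanding how the threshold behaves relative to $Q_{\lambda}^{*}(s,a)$.

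For $\lambda\to 0$ I would first show that $V_{\lambda}^{*}\to V^{*}$ and $Q_{\lambda}^{*}\to Q^{*}$ (the non-regularized optima). Since $f_{\phi}(x)=x\phi(x)$ is concave with $f_{\phi}(1)=0$, Jensen's inequality bounds the per-step bonus $H_{\phi}(\pi(\cdot|s))=\sum_{a}\pi(a|s)\phi(\pi(a|s))$ by a finite constant $H_{\max}$, yielding $\|V_{\lambda}^{*}-V^{*}\|_{\infty}\le \lambda H_{\max}/(1-\gamma)$ via a standard Bellman-operator comparison, and similarly for $Q$. Using the identity $V_{\lambda}^{*}(s)=\mu_{\lambda}^{*}(s)-\lambda\sum_{a}\pi_{\lambda}^{*}(a|s)^{2}\phi'(\pi_{\lambda}^{*}(a|s))$ together with $\pi^{2}\phi'(\pi)=\pi f_{\phi}'(\pi)-f_{\phi}(\pi)$, which is uniformly bounded on $(0,1]$ under $f_{\phi}'(0)<\infty$, I deduce $\mu_{\lambda}^{*}(s)\to V^{*}(s)=\max_{a}Q^{*}(s,a)$. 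Thus the threshold $\mu_{\lambda}^{*}(s)-\lambda f_{\phi}'(0)$ also tends to $V^{*}(s)$, and any strictly suboptimal action with $Q^{*}(s,a)<V^{*}(s)$ falls below it for all sufficiently small $\lambda$ and is assigned zero probability. Under the generic unique-argmax case this leaves exactly one active action per state, so $\delta\downarrow 1/|\mathcal{A}|$.

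For $\lambda\to\infty$ the idea is that the regularization dominates the bounded reward. Rescaling the objective by $1/\lambda$ produces an MDP with per-step reward $r(s,a)/\lambda + \phi(\pi(a_{t}|s_{t}))$, whose first term vanishes uniformly; the limiting problem is therefore $\max_{\pi(\cdot|s)}\sum_{a}\pi(a|s)\phi(\pi(a|s))$, and strict concavity of $f_{\phi}$ together with the Lagrange condition $f_{\phi}'(\pi(a|s))=\text{const}$ and strict monotonicity of $f_{\phi}'$ force the unique maximizer to be uniform. To translate this into the optimality condition of Theorem~\ref{thm:optimality}, I would note that $|Q_{\lambda}^{*}(s,a)-Q_{\lambda}^{*}(s,a')|$ is controlled by the reward range plus bounded regularization gaps, so $(\mu_{\lambda}^{*}(s)-Q_{\lambda}^{*}(s,a))/\lambda$ has a finite limit independent of $a$, forced to equal $f_{\phi}'(1/|\mathcal{A}|)$ by the simplex constraint $\sum_{a} g_{\phi}(\cdot)=1$; applying the continuous function $g_{\phi}$ then yields $\pi_{\lambda}^{*}(a|s)\to 1/|\mathcal{A}|$.

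The main obstacle is the implicitly defined normalizer $\mu_{\lambda}^{*}(s)$, which is coupled to the truncation $\max\{\cdot,0\}$ and to an active set of actions that itself varies with $\lambda$. For $\lambda\to 0$ the delicate part is the uniform control of $\lambda\sum_{a}\pi_{\lambda}^{*}(a|s)^{2}\phi'(\pi_{\lambda}^{*}(a|s))\to 0$ despite $\phi'$ possibly diverging at $0$; this is where $f_{\phi}'(0)<\infty$ does the essential work via the decomposition $\pi^{2}\phi'(\pi)=\pi f_{\phi}'(\pi)-f_{\phi}(\pi)$. For $\lambda\to\infty$ the care is in showing that the argument of $g_{\phi}$ converges to the same finite limit across all $a$ uniformly, rather than merely that the support stabilizes; monotonicity and continuity of $g_{\phi}$ together with the normalization $\sum_{a}\pi_{\lambda}^{*}(a|s)=1$ close this gap.
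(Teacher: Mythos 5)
Your proposal is correct, and its $\lambda\to\infty$ half follows essentially the paper's own route: both arguments show that $Q_{\lambda}^{*}(s,a)/\lambda$ converges to the same constant $\frac{\gamma}{1-\gamma}\max_{\pi}H_{\phi}(\pi)$ for every $(s,a)$, use the normalization $\sum_{a}\max\{g_{\phi}(\cdot),0\}=1$ to pin the common limit of $(\mu_{\lambda}^{*}(s)-Q_{\lambda}^{*}(s,a))/\lambda$ at $f_{\phi}'(1/|\mathcal{A}|)$, and finish by continuity of $g_{\phi}$. The $\lambda\to0$ half is where you genuinely diverge. The paper argues by contradiction: if a non-argmax action $a_{k}$ kept positive probability along a sequence $\lambda_{n}\to0$, the stationarity condition applied to the top action gives $\mu_{\lambda_{n}}(s)=Q_{\lambda_{n}}^{*}(s,a_{|\mathcal{A}|})+\lambda_{n}f_{\phi}'(\pi_{\lambda_{n}}^{*}(a_{|\mathcal{A}|}|s))$, which together with $0\le Q_{\lambda}^{*}-Q^{*}\le\lambda\frac{\gamma}{1-\gamma}H$ forces $Q^{*}(s,a_{|\mathcal{A}|})-Q^{*}(s,a_{k})<\lambda_{n}\bigl(f_{\phi}'(0)-f_{\phi}'(1)+\frac{\gamma}{1-\gamma}H\bigr)\to0$, contradicting a fixed positive gap. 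You instead track the cutoff directly: from $V_{\lambda}^{*}(s)=\mu_{\lambda}^{*}(s)-\lambda\sum_{a}\pi_{\lambda}^{*}(a|s)^{2}\phi'(\pi_{\lambda}^{*}(a|s))$ and the identity $x^{2}\phi'(x)=xf_{\phi}'(x)-f_{\phi}(x)$ (uniformly bounded on $(0,1]$ precisely because $f_{\phi}'(0)<\infty$), you deduce $\mu_{\lambda}^{*}(s)\to V^{*}(s)$, so the threshold $\mu_{\lambda}^{*}(s)-\lambda f_{\phi}'(0)$ eventually strictly exceeds $Q_{\lambda}^{*}(s,a)\to Q^{*}(s,a)$ for every strictly suboptimal $a$. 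This is arguably cleaner: it avoids the sequence-and-contradiction machinery and makes transparent where the hypothesis $f_{\phi}'(0)<\infty$ does its work, at the cost of leaning on the explicit $\mu$-identity from Theorem~\ref{thm:optimality}. Your caveat about the ``generic unique-argmax case'' is not an extra restriction relative to the paper: its proof likewise assumes $G(s)=\min_{a_{1}\neq a_{2}}|Q^{*}(s,a_{1})-Q^{*}(s,a_{2})|>0$, i.e., that all unregularized Q-values at each state are distinct, so both arguments carry the same implicit genericity hypothesis (and the statement as written would need a tie-breaking discussion otherwise).
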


\subsection{Properties of regularized MDPs}

In this section, we present some properties of regularized MDPs. We first prove the uniqueness of the optimal policy and value. Next, we give the bound of the performance error between $\pi_{\lambda}^*$ (the optimal policy obtained by a  regularized MDP) and $\pi^*$ (the policy obtained by the original MDP). In the proofs of this section, we need an additional assumption for regularizers. Assumption~\ref{assump:control-entropy} is quite weak. All the functions introduced in Section~\ref{sec:assumption} satisfy it.

\begin{assumption}
\label{assump:control-entropy}
The regularizer $\phi(\cdot)$ satisfies $ f_{\phi}(0) \triangleq \liml_{x \to 0^+} x\phi(x)=0$.
\end{assumption} 

\textbf{Generic Bellman Operator $\mathcal{T}_{\lambda}$}
We define a new operator $\mathcal{T}_{\lambda}$ for  regularized MDPs, which defines a smoothed maximum. Given one state $s \in \mathcal{S}$ and current value function $V_{\lambda}$,  $\mathcal{T}_{\lambda}$ is defined as
\begin{equation}\label{prob:current}
 \mathcal{T}_{\lambda} V_{\lambda}(s) \triangleq \max_{\pi} \sum_{a}\pi(a|s)\left[Q_{\lambda}(s, a) {+} \lambda \phi(\pi(a|s))\right],
\end{equation}
where $Q_{\lambda}(s, a) = r(s, a) + \gamma \mathbb{E}_{s'|s, a}V_{\lambda}(s')$ is Q-value function derived from one-step foreseeing according to $V_{\lambda}$. By definition, $\mathcal{T}_{\lambda}$ maps $V_{\lambda}(s)$ to its possible highest value which considers both future discounted rewards and regularization term. We provide simple upper and lower bounds of $\mathcal{T}_{\lambda}$ w.r.t. $\mathcal{T}$, i.e., 

\begin{theorem}
\label{thm: max}
Under Assumptions \ref{assump:1} and \ref{assump:control-entropy}, for any value function $V$ and $s \in \SP$, we have
\begin{equation}
\label{eq:operator-bound}
\TP V(s) \le  \mathcal{T}_{\lambda} V(s) \le \TP V(s)  + \lambda\phi(\frac{1}{|\mathcal{A}|}). 
\end{equation}
\end{theorem}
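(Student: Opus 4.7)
The plan is to prove the two inequalities separately, since the $Q$-function $Q_\lambda(s,a) = r(s,a) + \gamma\mathbb{E}_{s'|s,a}V(s')$ appearing inside $\mathcal{T}_\lambda$ coincides with the $Q$-function used in $\mathcal{T}$ for the same input $V$; this lets me compare the two operators termwise.

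For the lower bound $\mathcal{T} V(s)\le \mathcal{T}_\lambda V(s)$, I would first observe that $\phi\ge 0$ on $[0,1]$: by Assumption~\ref{assump:1} monotonicity on $(0,1]$ together with $\phi(1)=0$, so $\phi(x)\ge 0$ for $x\in(0,1]$; the boundary case $x=0$ is handled via the convention $0\cdot\phi(0)=f_\phi(0)=0$ guaranteed by Assumption~\ref{assump:control-entropy}. Hence for every policy $\pi$ the added term $\lambda\sum_{a}\pi(a|s)\phi(\pi(a|s))\ge 0$, so
\begin{equation*}
\sum_{a}\pi(a|s)\bigl[Q_\lambda(s,a)+\lambda\phi(\pi(a|s))\bigr]\ge \sum_a \pi(a|s)Q_\lambda(s,a).
\end{equation*}
Taking $\max_\pi$ on both sides gives $\mathcal{T}_\lambda V(s)\ge \mathcal{T} V(s)$.

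For the upper bound I would split the objective into its reward part and its regularization part and bound them independently. The reward part satisfies $\sum_a\pi(a|s)Q_\lambda(s,a)\le \mathcal{T} V(s)$ for every $\pi$ by definition of $\mathcal{T}$. For the regularization part, the key step is to show $H_\phi(\pi(\cdot|s))=\sum_a\pi(a|s)\phi(\pi(a|s))\le \phi(1/|\mathcal{A}|)$. This follows from Jensen's inequality applied to $f_\phi(x)=x\phi(x)$, which is concave by Assumption~\ref{assump:1}(4):
\begin{equation*}
\frac{1}{|\mathcal{A}|}\sum_a f_\phi(\pi(a|s))\le f_\phi\!\left(\frac{1}{|\mathcal{A}|}\sum_a \pi(a|s)\right)=f_\phi(1/|\mathcal{A}|)=\tfrac{1}{|\mathcal{A}|}\phi(1/|\mathcal{A}|),
\end{equation*}
so multiplying by $|\mathcal{A}|$ yields $H_\phi(\pi(\cdot|s))\le \phi(1/|\mathcal{A}|)$. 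Adding the two bounds and taking $\max_\pi$ gives $\mathcal{T}_\lambda V(s)\le \mathcal{T} V(s)+\lambda\phi(1/|\mathcal{A}|)$.

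Neither direction presents a serious obstacle; the only subtle point is being careful with the boundary value $x=0$ in the regularization sum, which is exactly why Assumption~\ref{assump:control-entropy} is invoked here. The Jensen step is the lone nontrivial ingredient, and it gives a tight bound in the sense that the upper bound is attained by the uniform policy whenever $Q_\lambda(s,\cdot)$ is constant in $a$.
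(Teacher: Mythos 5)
Your proof is correct, and for the lower bound it coincides with the paper's argument (non-negativity of $\phi$ from monotonicity plus $\phi(1)=0$). Where you genuinely diverge is in the key step of the upper bound: bounding the regularization term $H_\phi(\pi)=\sum_a \pi(a|s)\phi(\pi(a|s))$ by $\phi(1/|\mathcal{A}|)$. The paper does this by explicitly solving $\max_\pi H_\phi(\pi)$ via a Lagrangian/KKT argument, showing the maximizer is uniform on its support set $S$ and then invoking monotonicity of $\phi$ to get $\phi(1/|S|)\le \phi(1/|\mathcal{A}|)$. You instead apply Jensen's inequality to the concave function $f_\phi(x)=x\phi(x)$ with uniform weights $1/|\mathcal{A}|$, which bounds $H_\phi(\pi)$ for \emph{every} policy simultaneously without ever characterizing the maximizer. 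Your route is more elementary and arguably cleaner: it avoids the need to argue that the KKT stationary point is the global maximum and that the full-support uniform distribution dominates the restricted-support ones. What the paper's approach buys in exchange is the explicit identification of the maximizing policy (uniform), which is mildly informative but not needed for the bound. Both arguments require Assumption~\ref{assump:control-entropy} to extend $f_\phi$ continuously (and concavely) to $x=0$ so that zero-probability actions are handled; you flag this correctly, though you should note it is needed in the Jensen step as well as in the lower bound, since Assumption~\ref{assump:1} only asserts concavity on $(0,1]$.
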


The bound~\eqref{eq:operator-bound} shows that $\mathcal{T}_{\lambda}$ is a bounded and smooth approximation of $\TP$. When $\lambda = 0$, $\mathcal{T}_{\lambda}$ degenerates to the Bellman operator $\mathcal{T}$. Moreover, it can be proved that $\mathcal{T}_{\lambda}$ is a $\gamma$-contraction. By the Banach fixed point theorem \cite{smart1980fixed}, $V^*_{\lambda}$, the fixed point of $\mathcal{T}_{\lambda}$, is unique.  As a result of Theorem~\ref{thm:optimality}, $Q_{\lambda}^*$ and $\pi_{\lambda}^*$ are both unique. We formally state the conclusion and give the proof in Appendix~\ref{ap:operator}.

\textbf{Performance Error Between $V_{\lambda}^*$ and $V^*$}
In general, $V^* \neq V_{\lambda}^*$. But their difference is controlled by both $\lambda$ and $\phi(\cdot)$. The behavior of $\phi(x)$ around the origin represents the regularization ability of $\phi(x)$. Theorem~\ref{thm: difference} shows that when $|\mathcal{A}|$ is quite large (which means $\phi(\frac{1}{|\mathcal{A}|}$) is close to $\phi(0)$ due to its continuity), the closeness of $\phi(0)$ to 0 also determines their difference. As a result, the Tsallis entropy regularized MDPs have always tighter error bounds than the Shannon entropy regularized MDPs, because the value at the origin of the concave function $\frac{k}{q-1}(1-x^{q-1})(q > 1)$ is much lower than that of  $-\log x$, both function satisfying in Assumption ~\ref{assump:control-entropy}. Our theory incorporates the result of \citet{lee2018sparse, lee2019tsallis} which shows a similar performance error for (general) Tsallis entropy RL. The proof of Theorem~\ref{thm: difference} is detailed in Appendix~\ref{ap:error}.

\begin{theorem}
\label{thm: difference}
Under Assumptions \ref{assump:1} and \ref{assump:control-entropy}, the error between $V_{\lambda}^*$ and $V^*$ can be bounded as \[\|V_{\lambda}^*-V^*\|_{\infty} \le \frac{\lambda}{1-\gamma}\phi(\frac{1}{|\mathcal{A}|}).\]
\end{theorem}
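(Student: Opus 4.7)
The plan is to leverage the sandwich bound from Theorem~\ref{thm: max}, namely $\TP V \le \mathcal{T}_\lambda V \le \TP V + \lambda\phi(1/|\AP|)$, together with the fact that $V^*$ and $V_\lambda^*$ are the unique fixed points of $\TP$ and $\mathcal{T}_\lambda$ respectively (both being $\gamma$-contractions). The proof splits into establishing $V^* \le V_\lambda^*$ and $V_\lambda^* \le V^* + \frac{\lambda}{1-\gamma}\phi(1/|\AP|)\mathbf{1}$ pointwise; the $\|\cdot\|_\infty$ bound then follows immediately.

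For the lower direction, I would start by applying $\mathcal{T}_\lambda$ to $V^*$. The left half of the sandwich yields $\mathcal{T}_\lambda V^*(s) \ge \TP V^*(s) = V^*(s)$ for every $s$. Since $\mathcal{T}_\lambda$ is monotone (the Q-values $r + \gamma\EB V$ are monotone in $V$, and the outer maximum over $\pi$ preserves pointwise inequalities), iterating gives $\mathcal{T}_\lambda^n V^* \ge V^*$ for every $n$. The contraction property ensures $\mathcal{T}_\lambda^n V^* \to V_\lambda^*$, and passing to the limit yields $V_\lambda^* \ge V^*$.

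For the upper direction, I would introduce the constant $c = \frac{\lambda}{1-\gamma}\phi(1/|\AP|)$ and prove the key shift identity $\mathcal{T}_\lambda(V + c\mathbf{1}) = \mathcal{T}_\lambda V + \gamma c\mathbf{1}$, which follows because adding $c$ to $V$ shifts each $Q_\lambda(s,a) = r(s,a) + \gamma\EB_{s'|s,a} V(s')$ by $\gamma c$, and the regularized inner maximization in~\eqref{prob:current} is invariant under a constant offset of $Q_\lambda$. Combining the shift identity with the right half of the sandwich applied to $V^*$ gives
\begin{equation*}
\mathcal{T}_\lambda(V^* + c\mathbf{1}) = \mathcal{T}_\lambda V^* + \gamma c\mathbf{1} \le \TP V^* + \lambda\phi(1/|\AP|)\mathbf{1} + \gamma c\mathbf{1} = V^* + (\lambda\phi(1/|\AP|) + \gamma c)\mathbf{1}.
\end{equation*}
The choice of $c$ makes $\lambda\phi(1/|\AP|) + \gamma c = c$, so $V^* + c\mathbf{1}$ is a super-fixed-point of $\mathcal{T}_\lambda$. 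By monotonicity, the iterates $\mathcal{T}_\lambda^n(V^* + c\mathbf{1})$ form a non-increasing sequence whose contractive limit is $V_\lambda^*$, giving $V_\lambda^* \le V^* + c\mathbf{1}$.

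The main obstacle is essentially conceptual rather than technical: one must recognize that the correct ``test function'' for the upper bound is $V^* + c\mathbf{1}$ with $c$ chosen so that the $\lambda\phi(1/|\AP|)$ gap gets amplified by the geometric series $\sum_{t\ge 0}\gamma^t$. Once this ansatz is in place, the verification reduces to the constant-shift identity for $\mathcal{T}_\lambda$ and monotonicity of the contraction, both of which are essentially immediate from the definitions. Assumption~\ref{assump:control-entropy} is not used directly in this argument but underlies Theorem~\ref{thm: max}, whose conclusion is invoked twice here.
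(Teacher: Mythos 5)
Your proof is correct, and it relies on exactly the same three structural facts the paper isolates as lemmas --- the one-step sandwich $\TP V \le \mathcal{T}_{\lambda} V \le \TP V + \lambda\phi(1/|\mathcal{A}|)$, monotonicity of $\mathcal{T}_{\lambda}$, and the constant-shift identity $\mathcal{T}_{\lambda}(V+c) = \mathcal{T}_{\lambda}V + \gamma c$ --- but you assemble them differently. The paper starts from an arbitrary $V_0$ and proves by induction on $n$ the two-sided estimate $\TP^n V_0 \le \mathcal{T}_{\lambda}^n V_0 \le \TP^n V_0 + \lambda\phi(1/|\mathcal{A}|)\sum_{t=0}^{n-1}\gamma^t$, accumulating the geometric series term by term before letting $n \to \infty$ on both sides. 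You instead seed the iteration at $V^*$ itself: the lower bound follows from $\mathcal{T}_{\lambda}V^* \ge V^*$ plus monotone iteration, and the upper bound from verifying that $V^* + c\mathbf{1}$ with $c = \frac{\lambda}{1-\gamma}\phi(1/|\mathcal{A}|)$ is a supersolution of $\mathcal{T}_{\lambda}$ via the self-consistency equation $\lambda\phi(1/|\mathcal{A}|) + \gamma c = c$. Your version trades the explicit induction for a fixed-point comparison principle, which is slightly cleaner (no partial sums to track) at the cost of having to guess the constant $c$ in advance rather than discovering it as a limit; both arguments are equally rigorous, and your remark that Assumption~2 enters only through Theorem~4 matches the paper's usage.
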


\section{Regularized Actor-Critic}


To solve the problem~\eqref{prob:genera-regularized-rl} in complex environments, we propose an off-policy algorithm \textit{Regularized Actor-Critic} (RAC), which alternates between policy evaluation and policy improvement. In practice, we apply neural networks to parameterize the Q-value and policy to increase expressive power. In particular, we model the regularized Q-value function $Q_{\theta}(s, a)$ and a tractable policy $\pi_{\psi}(a|s)$. We use Adam \cite{kingma2014adam} to optimize $\psi, \theta$. Actually, RAC is created by consulting the previous work SAC \cite{haarnoja2018soft, haarnoja2018soft1} and making some necessary changes so that it is able to be agnostic to the form of regularization.

The goal for training regularized Q-value parameters is to minimize the  general Bellman residual:
\begin{equation}
\label{eq:loss-Q}
J_{Q}(\theta) = \frac{1}{2}\hat{\EB}_{\DP}(Q_{\theta}(s_t, a_t) - y)^2,
\end{equation}
where $\DP$ is the replay buffer used to eliminate the correlation of sampled trajectory data and $y$ is the target function defined as follows
\begin{equation*}
\label{eq:target}
y = r(s_t, a_t) {+} \gamma\left[ Q_{\bar{\theta}}(s_{t+1}, a_{t+1})  {+} \lambda \phi(\pi_{\psi}(a_{t+1}|s_{t+1})) \right].
\end{equation*}
The target involves a target regularized Q-value function with parameters $\bar{\theta}$ that are updated in a moving average fashion, which can stabilize the training process \cite{mnih2015human, haarnoja2018soft}. Thus the  gradient of $J_{Q}(\theta)$ w.r.t. $\theta$ can be estimated by
\begin{equation*}
\label{eq:grad-Q}
\hat{\nabla}J_{Q}(\theta) = \hat{\EB}_{\DP}  \nabla_{\theta}Q_{\theta}(s_t, a_t)\left(Q_{\theta}(s_t, a_t) {-} y \right).
\end{equation*}

For training policy parameters, we minimize the negative total reward:
\begin{equation}
\label{eq:loss-policy}
J_{\pi}(\psi) = \hat{\EB}_{\DP}\left[ \mathbb{E}_{a\sim\pi_{\psi}(\cdot|s_t)}\left[-\lambda \phi(\pi_{\psi}(a|s_t))  - Q_{\theta}(s_t, \phi(\pi_{\psi}(a|s_t)) )\right] \right].
\end{equation}

RAC is formally described in Algorithm~\ref{alg:rac}. The method alternates between data collection and parameter updating. Trajectory data is collected by executing the current policy in environments and then stored in a replay buffer. Parameters of the function approximators are updated by descending along the stochastic gradients computed from the batch sampled from that replay buffer. The method makes use of two Q-functions to overcome the positive bias incurred by overestimation of Q-value, which is known to yield a poor performance \cite{hasselt2010double, fujimoto2018addressing}. Specifically, these two Q-functions are parametrized by different parameters $\theta_i$ and are independently trained to minimize $J_{Q}(\theta_i)$. The minimum of these two Q-functions is used to compute the target value $y$ which is involved in the computation of $\hat{\nabla}J_{Q}(\theta)$ and $\Hat{\nabla}J_{\pi}(\psi)$. 

\begin{wrapfigure}{R}{0.5\textwidth}
\begin{minipage}{0.5\textwidth}
\vspace{-10pt}
\begin{algorithm}[H]
    \caption{Regularized Actor-Critic (RAC)}
    \label{alg:rac}
    \begin{algorithmic}
        \STATE {\bfseries Input:} $\theta_1, \theta_2, \psi$
        \STATE {\bfseries Initialization:} $\barone \leftarrow \theta_{1}, \barone \leftarrow \theta_{2}, \DP \leftarrow \emptyset$
        \FOR{ each iteration}
                \FOR{ each environment step}
                \STATE sample action, $a_t \sim \pi_{\psi}(\cdot|s_t)$
                \STATE receive reward $r_t \sim r_t(s_t, a_t)$
                \STATE receive next state $s_{t+1}$ from environment
                \STATE $\DP \leftarrow \DP \bigcup \{ (s_t, a_t, r_t, s_{t+1}) \}$
                \ENDFOR
                \FOR{ each gradient step}
                \STATE $\theta_{i} \leftarrow\theta_{i} - \eta_{Q} \hat{\nabla}J_{Q}(\theta_i)$ for $i \in \{1, 2\}$
                \STATE $\psi \leftarrow \psi - \eta_{\pi} \hat{\nabla}J_{\pi}(\psi)$
                \STATE $\bar{\theta}_{i} \leftarrow \tau \theta_{i} + (1-\tau) \bar{\theta}_{i} $ for $i \in \{1, 2\}$
                \ENDFOR  
        \ENDFOR
    \STATE {\bfseries Output:} $\theta_1, \theta_2, \psi$
    \end{algorithmic}
\end{algorithm}
\vspace{-50pt}
\end{minipage}
\end{wrapfigure}

\section{Experiments}
\label{sec: experiments}
We investigate the performance of different regularizers among diverse environments. We first test basic and combined regularizers in two numerical environments. Then we test basic regularizers in Atari discrete problems. In the end, we explore the possible application in Mujoco control environments.

\subsection{Numerical results}
The two discrete numerical environments we consider include a simple random generated MDP ($S=50, A=10$) and a Gridworld environment ($S=81, A=4$). Refer to Appendix~\ref{ap:exp_discrete} for more detail settings.

\textbf{Regularizers.}
Four \textit{basic} regularizers include \texttt{shannon} $(-\log x)$, \texttt{tsallis} $(\frac{1}{2}(1-x))$, \texttt{cos} $(\cos(\frac{\pi}{2}x))$ and \texttt{exp} $(\exp(1)-\exp(x))$. Proposition~\ref{prop:phi-operation} and~\ref{prop:phi-operation-} allow three combined regularizers: (1) \texttt{min}: the minimum of \texttt{tsallis} and \texttt{shannon}, i.e., $\min\{-\log(x), 2(1-x)\}$, (2) \texttt{poly}: the positive addition of two polynomial functions, i.e., $\frac{1}{2}(1-x) + (1-x^2)$, and (3) \texttt{mix}: the positive addition of \texttt{tsallis} and \texttt{shannon}, i.e., $-\log(x) + \frac{1}{2}(1-x)$.

\textbf{Sparsity and Convergence.}
From (a)(b) in Figure~\ref{fig:numerical}, when $\lambda$ is extremely large, $\delta = 1$ for all regularizers. (c) shows how the probability of each action in the optimal policy at a given state varies with $\lambda$ (one curve represents one action). These results validate the Theorem~\ref{thm:control}. A reasonable explanation is that large $\lambda$ reduces the importance of discounted reward sum and makes $H_{\phi}(\pi)$ dominate the loss, which forces the optimal policy to put probability mass evenly on all actions in order to maximize $H_{\phi}(\pi)$. We regard the ability to defend the tendency towards converging to a uniform distribution as sparseness power. From our additional experiments in Appendix~\ref{ap:exp}, \texttt{cos} has the strongest sparseness power. (d) shows the convergence speed of RPI on different regularizers. It also shows that $\|V^{*}-V^{\pi_{\lambda}^{*}}\|_{\infty}$ is bounded as Theorem~\ref{thm: max} states. 

\begin{figure*}[ht]
    \vspace{-10pt}
    \centering
    \subfloat[Random MDP]{
    \includegraphics[width=0.25\textwidth] {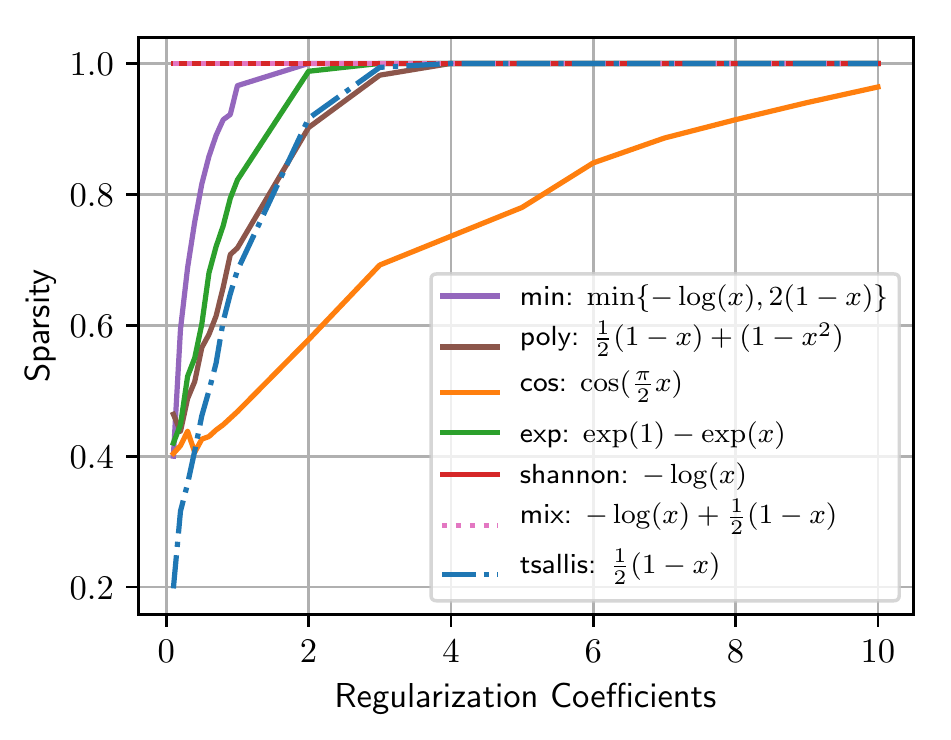}}
    \hspace{-.1in} 
    \subfloat[Gridworld]{
    \includegraphics[width=0.25\textwidth]{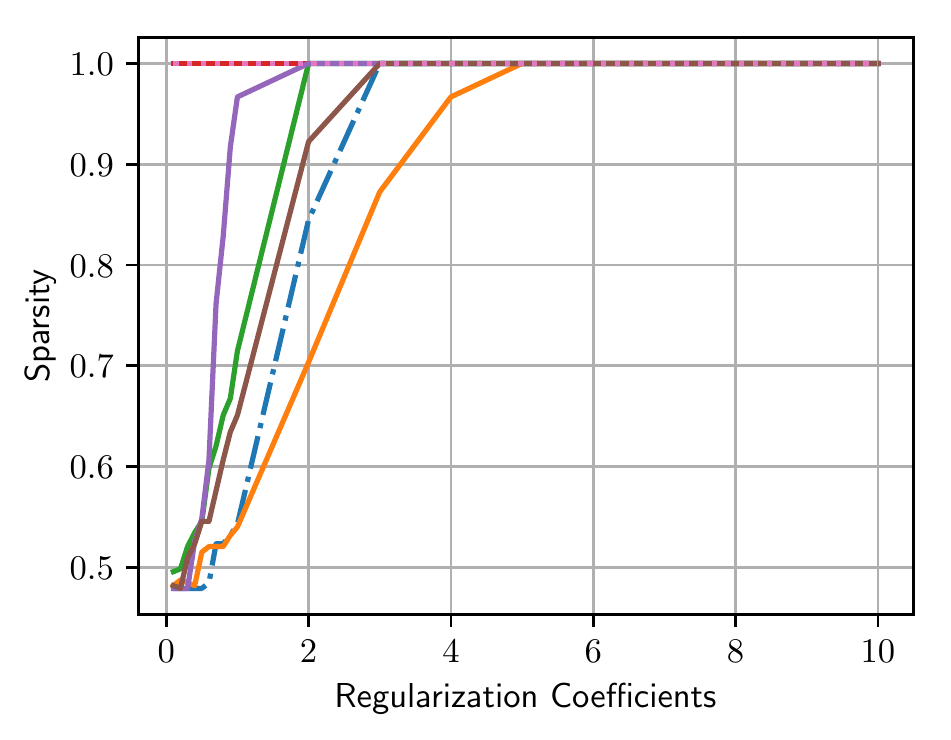}}    \hspace{-.1in}    
    \subfloat[\texttt{cos}: $\cos(\frac{\pi}{2}x)$]{
    \includegraphics[width=0.25\textwidth]{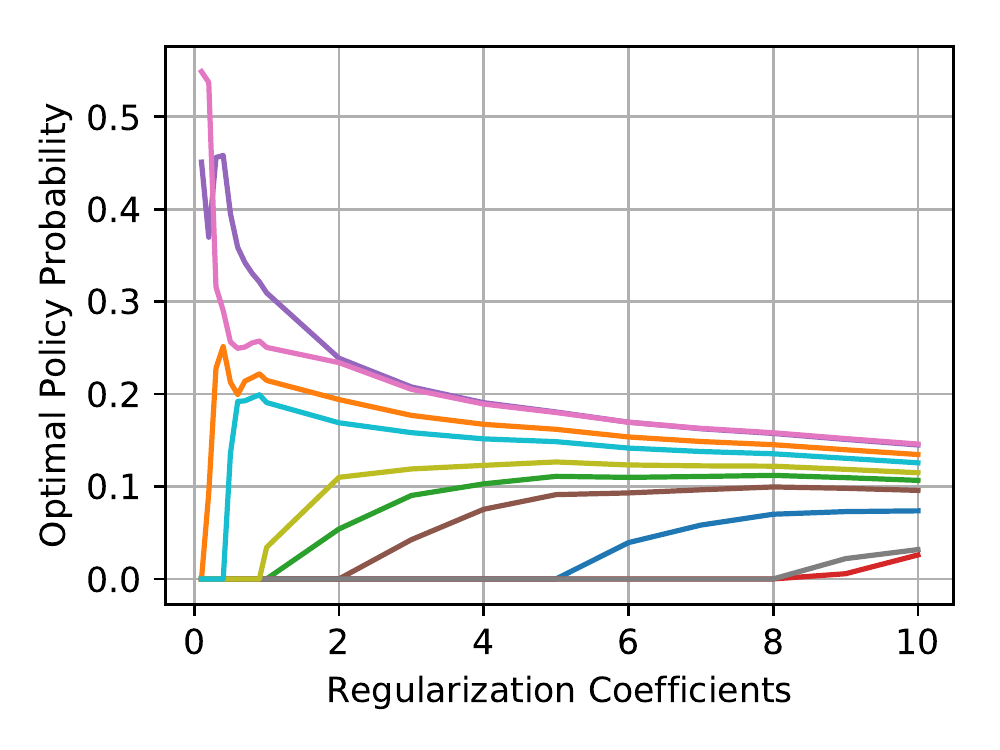}}    
    \hspace{-.1in}
    \subfloat[Error of different regularizers on Random MDP]{
    \includegraphics[width=0.25\textwidth]{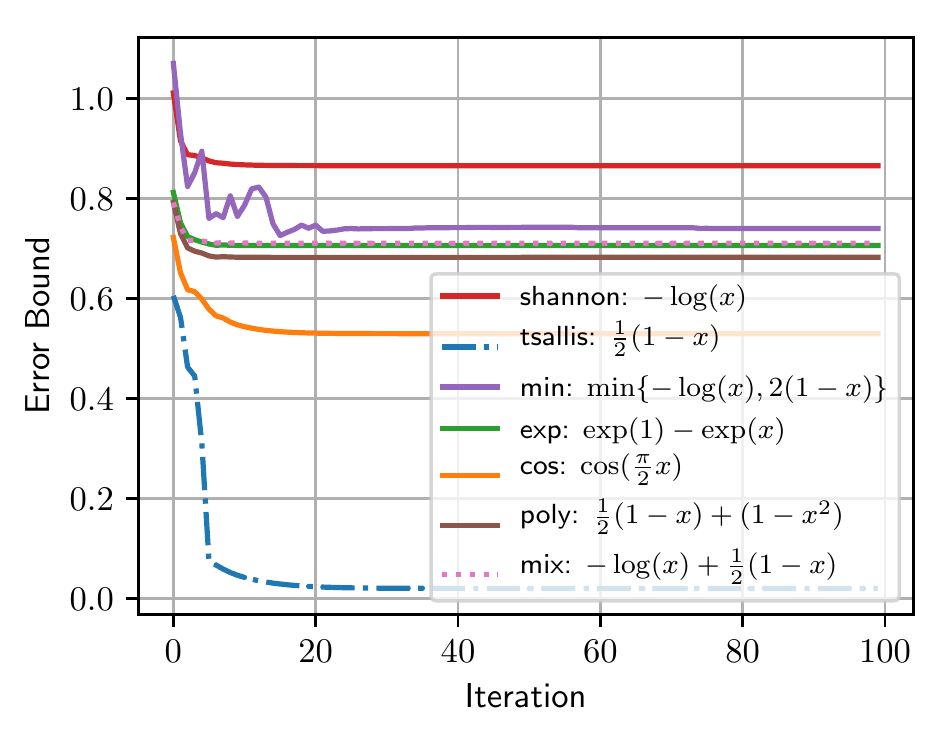}}\\
    \caption{ (a) and (b) show the results of the sparsity $\delta$ of optimal policies on Random MDP and Gridworld. (c) shows the changing process of the probability of each action in optimal policy regularized by $\cos(\frac{\pi}{2}x)$ on Random MDP. (d) shows the $\ell_{\infty}$-error between $V^{*}$ and $V^{\pi_{\lambda}^{*}}$.}
    \label{fig:numerical}
    \vspace{-10pt}
\end{figure*}

\subsection{Atari results}
\textbf{Regularizers.} We test four basic regularizers across four discrete control tasks from OpenAI Gym benchmark \cite{brockman2016openai}. All the training details are in Appendix~\ref{ap:exp_arari}.

\textbf{Performance.} Figure~\ref{atari_train} shows the score during training for RAC with four regularization forms with best performance over $\lambda=\{0.01, 0.1, 1.0\}$. Except Breakout, \texttt{Shannon} performs worse than other three regularizers. \texttt{Cos} performs best in Alien and Seaquest while \texttt{tsallis} performs best in Boxing and \texttt{exp} performs quite normally. Appendix~\ref{ap:exp_arari} gives all the results with different $\lambda$ and sensitive analysis. In general, \texttt{shannon} is the most insensitive among others.

\begin{figure*}[ht]
    \vspace{-10pt}
    \centering
    \subfloat[Alien]{
    \includegraphics[width=0.25\textwidth] {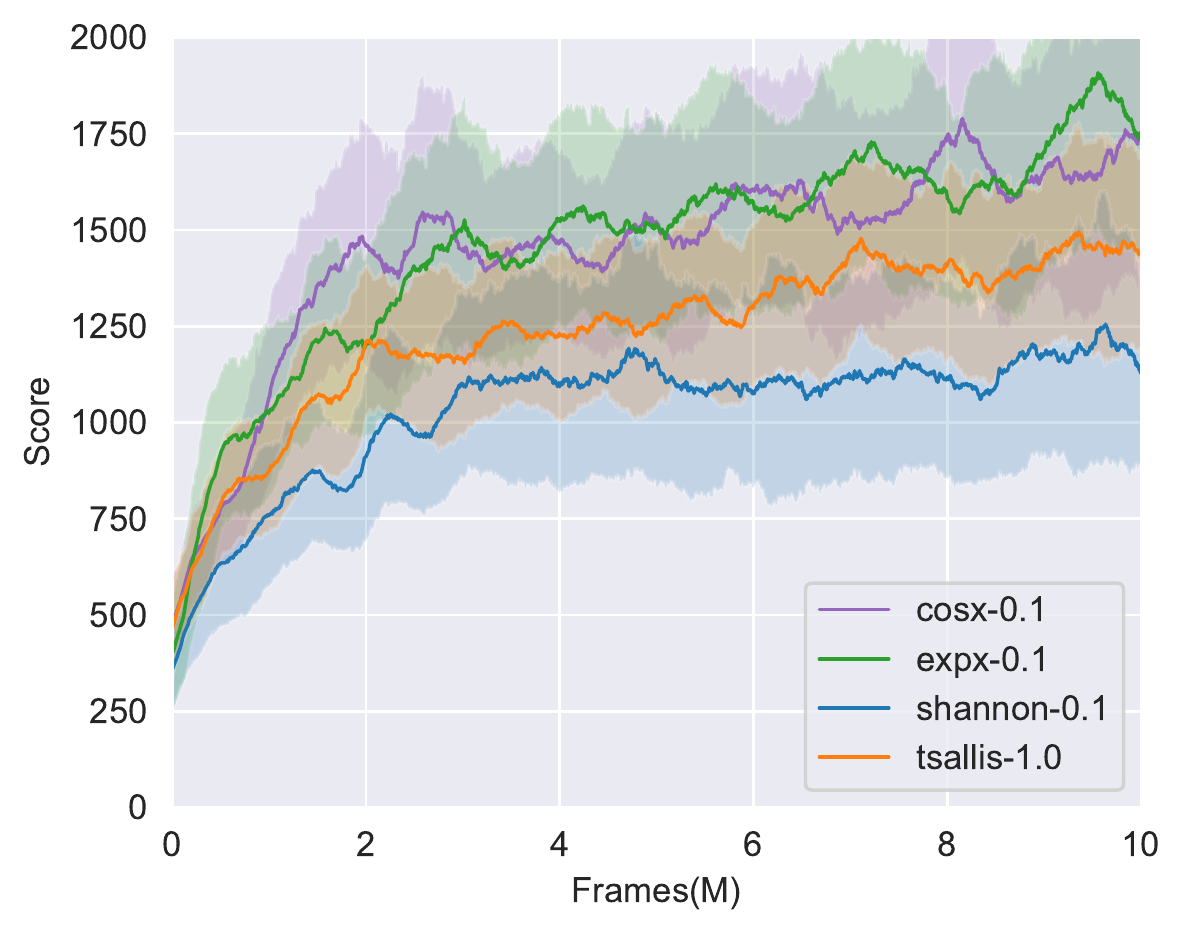}}
    \hspace{-.1in}
    \subfloat[Boxing]{
    \includegraphics[width=0.25\textwidth] {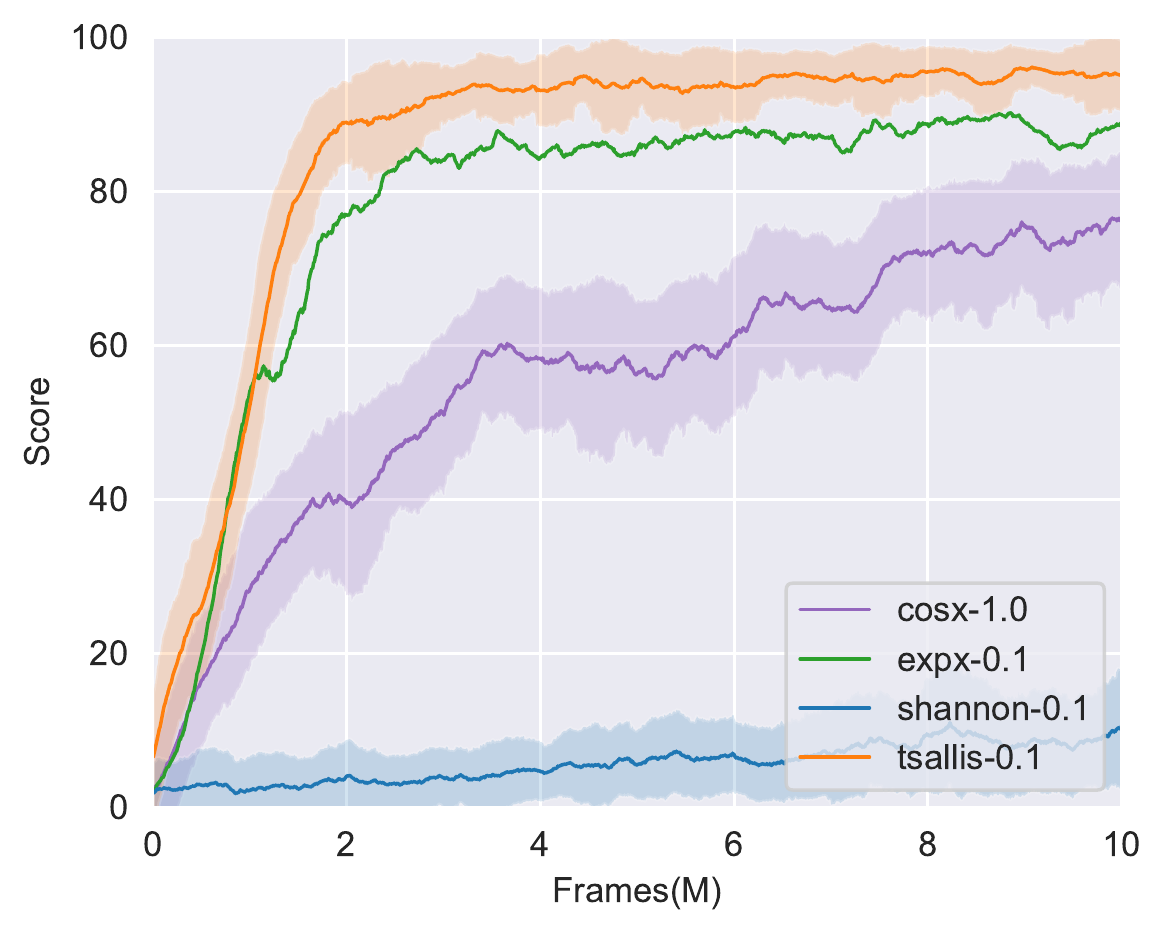}}
    \hspace{-.1in}
    \subfloat[Breakout]{
    \includegraphics[width=0.25\textwidth] {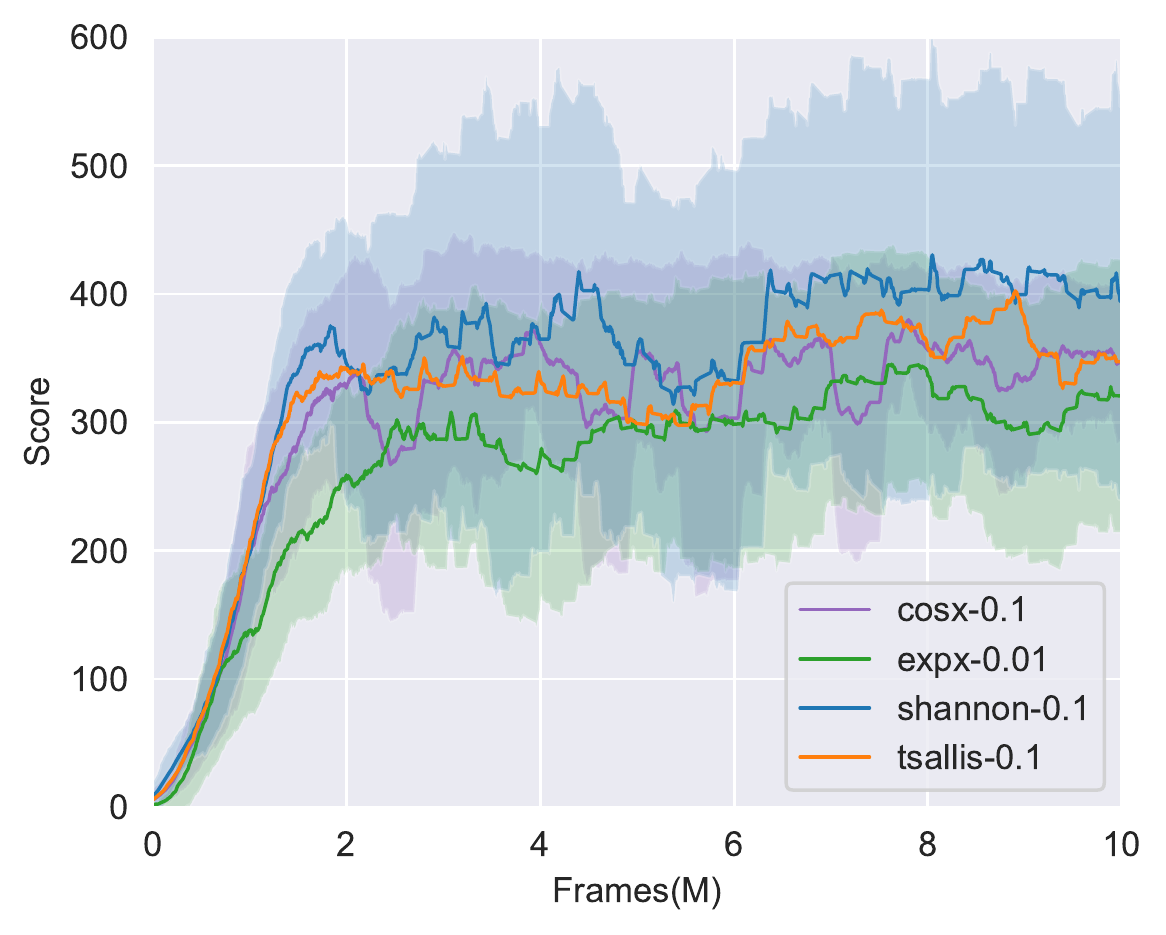}}
    \hspace{-.1in} 
    \subfloat[Seaquest]{
    \includegraphics[width=0.25\textwidth] {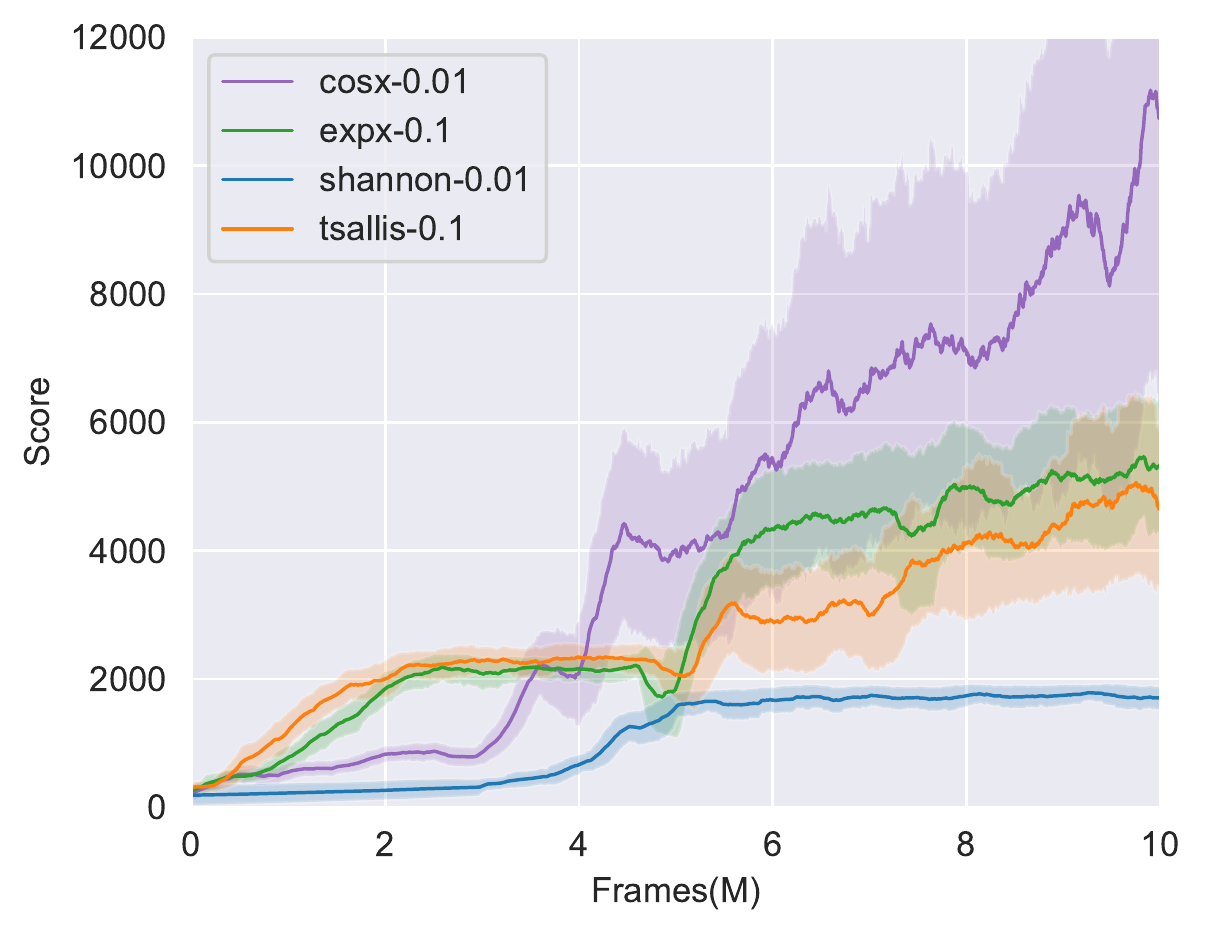}}
    \caption{Training curves on Atari games. Each entry in the legend is named with the rule $\texttt{the regularization form} + \lambda$. The score is smoothed with 100 windows while the shaded area is the one standard deviation.}
    \label{atari_train}
    \vspace{-10pt}
\end{figure*}

\subsection{Mujoco results}

\textbf{Regularizers.} We explore basic regularizers across four continuous control tasks from OpenAI Gym benchmark \cite{brockman2016openai} with the MuJoCo simulator \cite{todorov2012mujoco}. Unfortunately \texttt{cos} is quite unstable and prone to gradient exploding problems in deep RL training process. We speculate it instableness roots in
numerical issues where the probability density function often diverges into infinity. What's more, the periodicity of $\cos(\frac{\pi}{2}x)$ makes the gradients vacillate and the algorithm hard to converge. All the details of the following experiments are given in Appendix~\ref{ap:exp_mujoco}. 

\begin{figure*}[ht]
    \vspace{-10pt}
    \centering
    \subfloat[Ant-v2]{
    \includegraphics[width=0.25\textwidth] {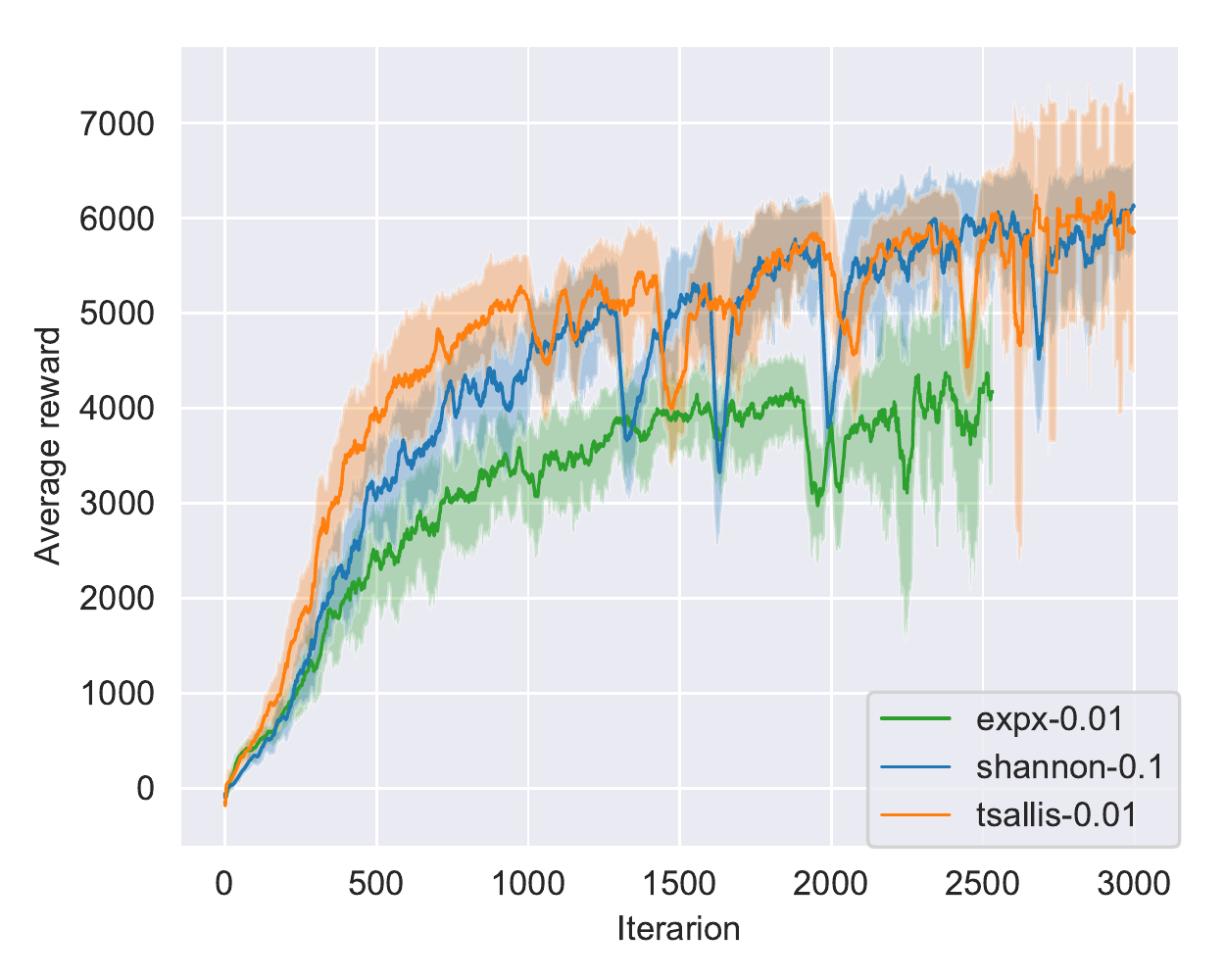}}
    \hspace{-.1in}
    \subfloat[Walker-v2]{
    \includegraphics[width=0.25\textwidth] {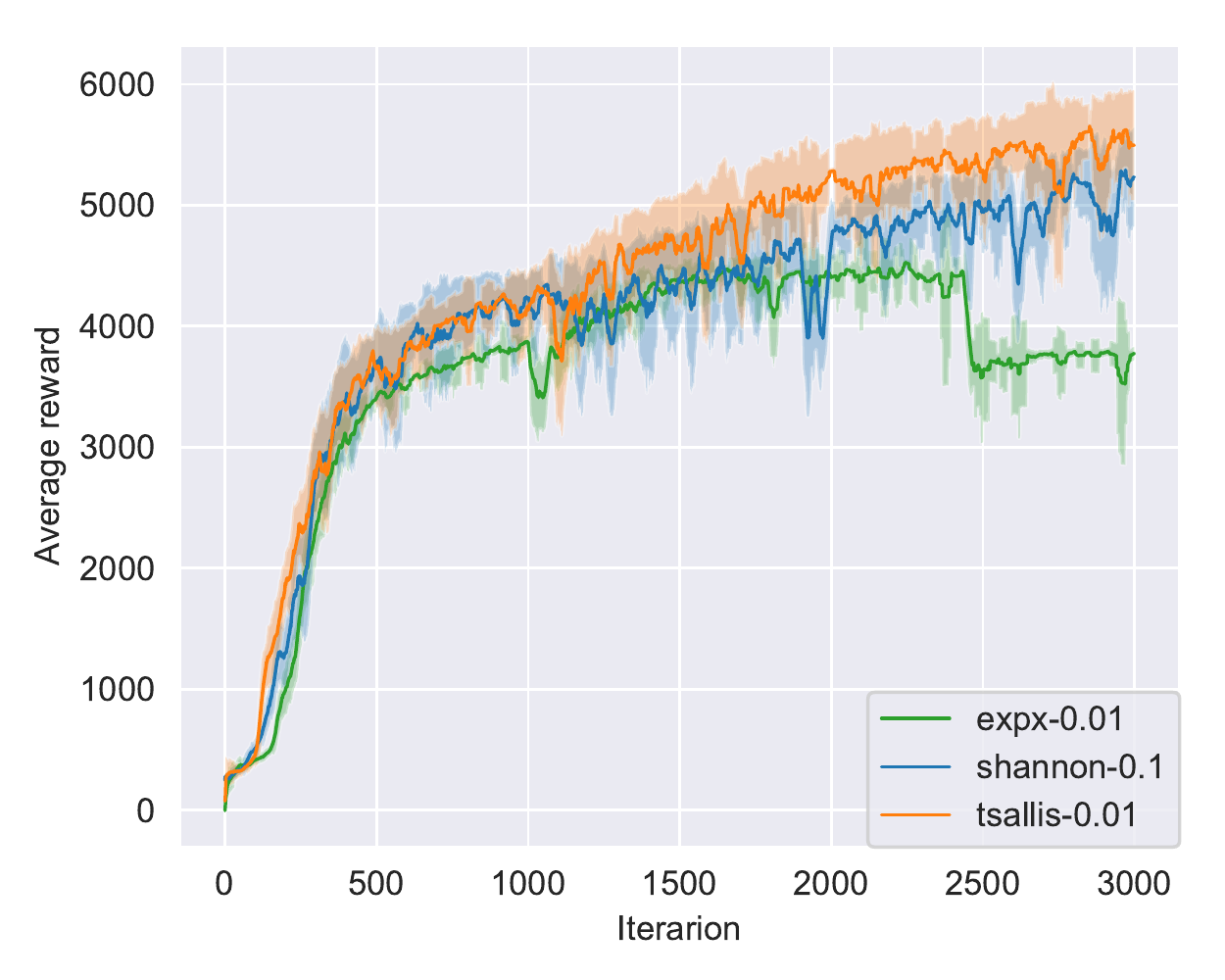}}
    \hspace{-.1in}
    \subfloat[Hopper-v2]{
    \includegraphics[width=0.25\textwidth] {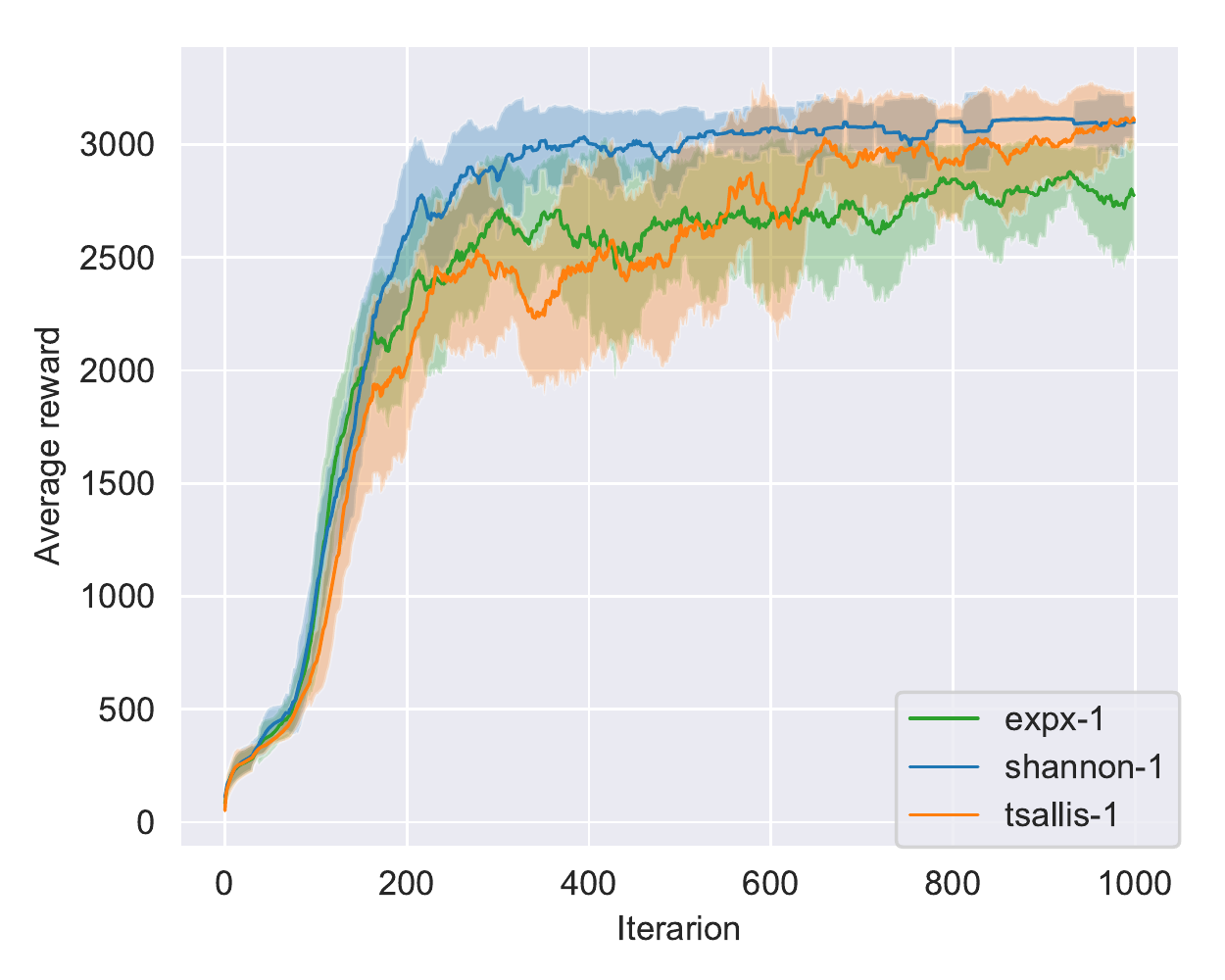}}
    \hspace{-.1in} 
    \subfloat[HalfCheetah-v2]{
    \includegraphics[width=0.25\textwidth] {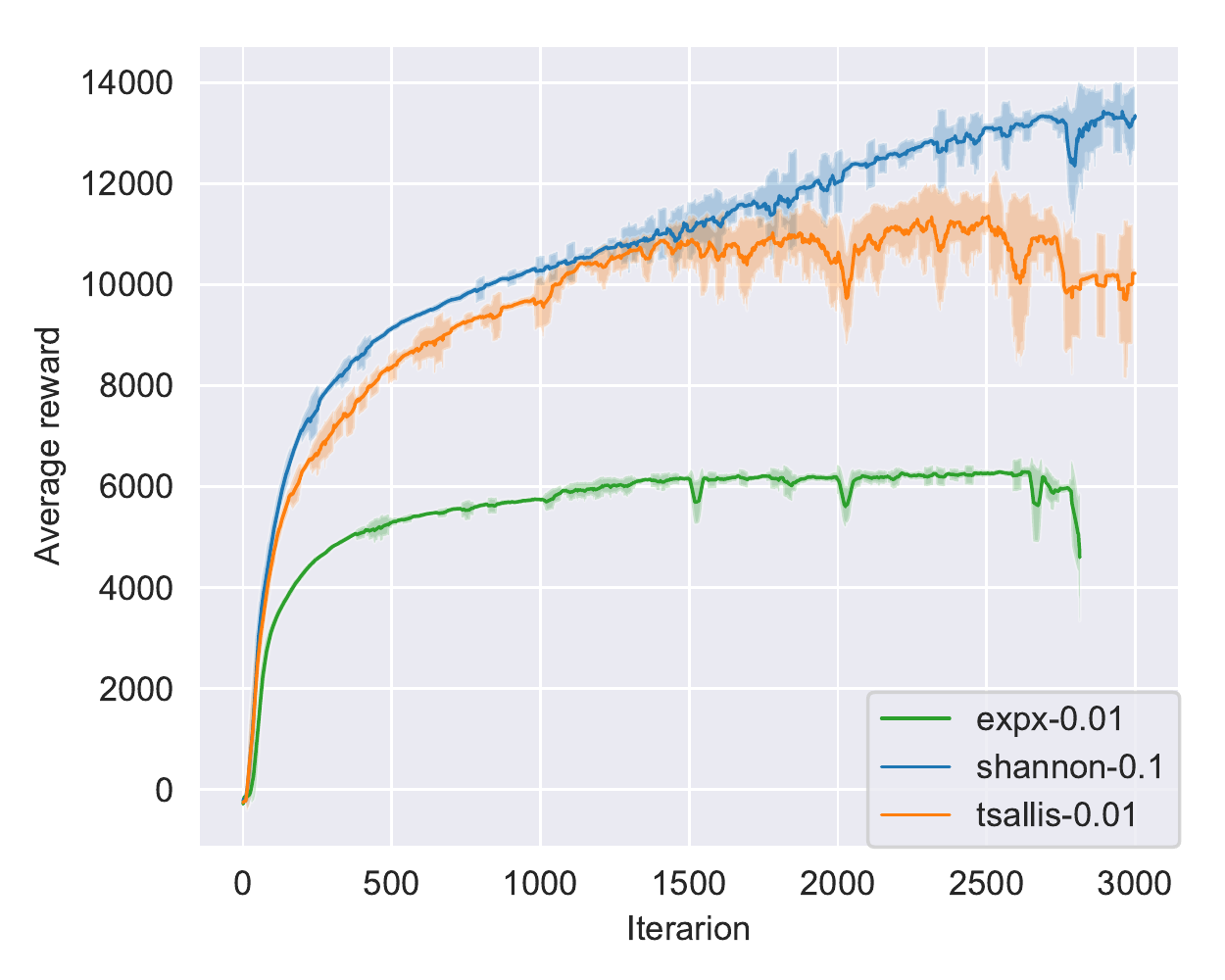}}
    \caption{Training curves on continuous control benchmarks. Each curve is the average of four experiments with different seeds. Each entry in the legend is named with the rule $\texttt{the regularization form} + \lambda$. The score is smoothed with 30 windows while the shaded area is the one standard deviation.}
    \label{eval_rl}
    \vspace{0pt}
\end{figure*}

\textbf{Performance.} Figure~\ref{eval_rl} shows the total average return of rollouts during training for RAC with three regularization forms and different regularization coefficients ($[0.01, 0.1, 1]$). For each curve, we train four different instances with different random seeds. \texttt{Tsallis} performs steadily better than \texttt{shannon} given the same regularization coefficient $\lambda$. \texttt{Tsallis} is also more stable since its shaded area is thinner than \texttt{shannon}. \texttt{Exp} performs almost as good as \texttt{tsallis} in Ant-v2 and Hopper-v2 but performs badly in the rest two environments. From the sensitivity analysis provided in Appendix~\ref{ap:exp_mujoco}, \texttt{tsallis} is less sensitive to $\lambda$ than \texttt{cos} and \texttt{shannon}.

\section{Conclusion}

In this paper, we have proposed a unified framework for regularized reinforcement learning, which includes entropy-regularized RL as a special case. Under this framework, the regularization function characterizes the optimal policy and value of the corresponding regularized MDPs. We have shown there are many regularization functions that can lead to a sparse but multi-modal optimal policy such as trigonometric and exponential functions. We have specified a necessary and sufficient condition for these regularization functions that could lead to sparse optimal policies and how the sparsity is controlled with $\lambda$. We have presented the logical and mathematical foundations of these properties and also conducted the experimental results.

\section*{Acknowledgements}

This work is sponsored by the Key Project of MOST of China (No. 2018AAA0101000), by Beijing Municipal Commission of Science and Technology under Grant No. 181100008918005, and by Beijing Academy of Artificial Intelligence (BAAI).

\bibliography{refer}
\bibliographystyle{plainnat}

\newpage

\appendix
\begin{appendix}
\onecolumn
\begin{center}
    {\huge \textbf{Appendix}}
\end{center}

\section{Related Work}
\paragraph{Regularization in RL.}

The first class aims to control the complexity of value function approximation. The use of function approximation makes it possible to model value (or Q-value) function when the state space is large or even infinite. The main regularization form is $L_2$ or $L_1$ regularization. For example, \cite{massoud2009regularized, farahmand2009regularized} uses $L_2$ regularization to control the complexity of fitting value (or Q-value) functions. \cite{kolter2009regularization, johns2010linear} uses $L_1$ regularization for sparse feature selection.

The second class aims to capture the geometry of parameter spaces better and confine the information loss of policy search \cite{peters2010relative}. A lot of works propose to constraint the updated policy $\pin$ so that it is \textit{close} to the old one $\pio$ in some sense. \cite{peters2010relative, schulman2015trust, martens2015optimizing, schulman2017proximal, liu2017stein} use the Kullback-Leibler (KL) divergence as the measure for closeness and \cite{belousov2017f} considers a more general class of f-divergences. 

The third class aims to modify the original MDP to a more tractable one. One considers the case the transition probabilities can be rescaled \cite{todorov2007linearly}. Others add a policy-related regularization term to the rewards, where entropy-regularized RL belongs. \cite{o2016combining, nachum2017bridging, haarnoja2017reinforcement, haarnoja2018soft} consider using the Shannon entropy, which is shown to improve both exploration and robustness. An MDP with Shannon entropy maximization is termed as \textit{soft MDP} where the hard max operator is replaced by a softmax \cite{asadi2017alternative}. However, the optimal policy in soft MDPs put probability mass on all actions, implying some significantly unimportant actions would be executed. To fix this problem, \cite{grau2018soft} proposes to dynamically learn a prior that weights the importance of actions by using mutual information. Alternatively, \cite{nachum2018path, lee2018sparse} replace Shannon entropy with Tsallis entropy, since a special case ($q=2$ in our notation) of Tsallis entropy can devise a sparse optimal policy \cite{lee2018sparse}. Recently, \cite{lee2019tsallis} analyzes a more general Tsallis entropy family with an additional real-valued parameter (i.e., $q$ mentioned above), called an \textit{entropic index}, which is able to control the exploration tendency. \cite{geist2019theory} considers a more general class of regularized MDP where any strongly concave function replaces the entropy-like regularization term.

To address the issues discussed in the introduction (i.e., to obtain a sparse but multi-modal optimal policy), only the regularization in the third class could work. However, they either focus on entropy regularization or consider too large function, the former ignoring various regularization forms in convex optimization and the latter having no implications for the choice of regularization. Thus we are motivated to propose a unified framework for regularized RL which extends current entropy-regularized RL and provides enough practical guidance.

\paragraph{Optimization for Entropy-regularized MDPs.}
In the literature, there are many algorithms to solve entropy-regularized MDP problems. Similarly, these methods can be modified to solve regularized MDPs since the regularization we proposed is an extension of the traditional entropy. 

\cite{haarnoja2017reinforcement, lee2018sparse} consider the general modified value iteration approach. They repeatedly solve greedily the target regularized Q-values and updates the Q-value function in a Q-learning-like pattern. \cite{schulman2017equivalence} discussed the equivalence between policy gradients and Q-learning where the entropy regularizer is Shannon entropy.  \cite{haarnoja2018soft} adopted actor-critic methods to solve the Shannon regularized MDP in an off-policy fashion and achieves the state-of-the-art performance in continuous control tasks in RL. \cite{lee2019tsallis} proposes TAC, a variant of SAC, by replacing Shannon entropy with general Tsallis entropy. \cite{nachum2017bridging} point out there exists a path consistency equation which only the (near) optimal value and policy satisfy and propose to minimize the residual of that equation by simultaneously updating value and policy functions. This method is called as \textit{Path Consistency Learning}(PCL). \cite{nachum2017trust, nachum2018path, dai2018sbeed} share the same methodology with PCL for Shannon entropy.

\cite{neu2017unified} provides a unified view of entropy-regularized MDPs which enables us to formalize most entropy-regularized RL algorithms as approximate variants of Mirror Descent or Dual Averaging. \cite{geist2019theory} extends this result such that a broader class of regularizers is considered. They propose a modified policy iteration and give error propagation analyses for many existing algorithmic schemes.

\section{Proof for Optimality Condition of Regulazied MDPs}\label{ap: optimality}
In this section, we give the detail proof for Theorem ~\ref{thm:optimality}, which states the optimality condition of regularized MDPs. 
The proof follows from the Karush-Kuhn-Tucker (KKT) conditions where the derivative of a Lagrangian objective function with respect to policy $\pi(a|s)$ is set zero. 
Hence, our main theory is necessary and sufficient.

\begin{proof}{\textbf{for Theorem ~\ref{thm:optimality}}}
The Lagrangian function of ~\eqref{prob:genera-regularized-rl} obtained by the optimal policy is written as follows
\begin{align*}
L(\pi, \beta, \mu) 
&= \sum_{s} d_{\pi}(s)  \sum_{a}\pi(a|s)   \left( Q_{\lambda}^{*}(s, a) + \lambda \phi(\pi(a|s)) \right)\\
&\quad -  \sum_{s} d_{\pi}(s) [  \mu(s)(\sum_{a} \pi(a|s) -1) + \sum_{a} \beta(a|s) \pi(a|s) ]
\end{align*}
where $d_{\pi}$ is the stationary state distribution of the policy $\pi$, $\mu$ and $\beta$ are Lagrangian multipliers for the equality and inequality constraints respectively. Let $f_{\phi}(x) = x\phi(x)$. Then the KKT condition of~\eqref{prob:genera-regularized-rl} are as follows, for all states and actions
\begin{align}
&0 \le \pi(a|s) \le 1 \ \text{and} \  \sum_{a}\pi(a|s) = 1   \label{eq:primal-feasibility} \\
&0 \le \beta(a|s)   \label{eq:dual-feasibility} \\
&\beta(a|s) \pi(a|s) = 0  \label{eq:complementary-slackness}  \\
&Q_{\lambda}^{*}(s, a) + \lambda f_{\phi}'(\pi(a|s))-\mu(s)+\beta(a|s) = 0   \label{eq:stationarity-condition} 
\end{align}
where~\eqref{eq:primal-feasibility} is the feasibility of the primal problem,~\eqref{eq:dual-feasibility} is the feasibility of the dual problem,~\eqref{eq:complementary-slackness} results from the complementary slackness and~\eqref{eq:stationarity-condition} is the stationarity condition. We eliminate $d_{\pi}(s)$ since we assume all policies induce an irreducible Markov chain. 

Since $f_{\phi}(x) = x\phi(x)$ is a strictly decreasing function due to (4) in Assumption ~\ref{assump:1}, its inverse function $g_{\phi}(x) = (f_{\phi}')^{-1}(x)$ is also strictly decreasing. From~\eqref{eq:stationarity-condition}, we can resolve $\pi(a|s)$ as
\begin{equation}
\pi(a|s) = g_{\phi}\left(\frac{1}{\lambda}(\mu(s) - Q_{\lambda}^{*}(s, a)- \beta(a|s))\right). 
\end{equation}
Fix a state $s$. For any positive action, its corresponding Lagrangian multiplier $\beta(a|s)$ is zero due to the complementary slackness and $Q_{\lambda}^*(s, a) > \mu(s) - \lambda f_{\phi}'(0)$ must hold. For any zero-probability action, its Lagrangian multiplier $\beta(a|s)$ will be set such that $\pi(a|s) = 0$. Note that $\beta(a|s) \ge 0$, thus $Q_{\lambda}^*(s, a) \le \mu(s) - \lambda f_{\phi}'(0)$ must hold in this case. From these observations, $\pi(a|s)$ can be reformulated as
\begin{equation}
\label{eq:pi}
\pi(a|s) = \max \left\{g_{\phi}\left(\frac{1}{\lambda}(\mu(s) - Q_{\lambda}^{*}(s, a))\right), 0\right\}
\end{equation}
By plugging~\eqref{eq:pi} into~\eqref{eq:primal-feasibility}, we obtain an new equation
\begin{equation}
\label{eq:solve-mu}
 \sum_{a}  \max \left\{g_{\phi}\left(\frac{1}{\lambda}(\mu(s) - Q_{\lambda}^{*}(s, a))\right), 0\right\} = 1 
\end{equation}
Lemma~\ref{lem:mu} states that~\eqref{eq:solve-mu} has and only has one solution denoted as $\mu_{\lambda}^*$. Therefore, $\mu_{\lambda}^*$ can be solved uniquely. We defer the proof of Lemma~\ref{lem:mu} later in this section.

Next we aim to obtain the optimal state value $V_{\lambda}^{*}$. It follows that
\begin{align*}
&V_{\lambda}^{*}(s) = \TP_{\lambda} V_{\lambda}^{*}(s) \\
&= \sum_{a} \pi_{\lambda}^{*}(a|s) \left( Q_{\lambda}^{*}(s, a)+ \lambda \phi(  \pi_{\lambda}^{*}(a|s)) \right) \\
&= \sum_{a} \pi_{\lambda}^{*}(a|s) \left( \mu_{\lambda}^*(s) - \lambda \pi_{\lambda}^{*}(a|s) \phi(\pi_{\lambda}^{*}(a|s)) \right)\\
&= \mu_{\lambda}^*(s) - \lambda \sum_{a}\pi_{\lambda}^*(a|s)^2\phi'(\pi_{\lambda}^*(a|s)).
\end{align*}
The first equality follows from the definition of the optimal state value. The second equality holds because $\pi_{\lambda}^*$ maximizes $ \TP_{\lambda} V_{\lambda}^{*}(s)$. The third equality results from plugging~\eqref{eq:stationarity-condition}. 

To summarize, we obtain the optimality condition of regularized MDPs as follows
\begin{align*}
Q_{\lambda}^*(s, a) &= r(s,a) + \gamma \mathbb{E}_{s'|s, a}V_{\lambda}^*(s'), \\
\pi_{\lambda}^*(a|s) &=  \max\left\{g_{\phi}\left(\frac{1}{\lambda}(\mu_{\lambda}^*(s)-Q_{\lambda}^*(s, a))\right), 0\right\},\\
V_{\lambda}^*(s) &= \mu_{\lambda}^*(s) - \lambda \sum_{a}\pi_{\lambda}^*(a|s)^2\phi'(\pi_{\lambda}^*(a|s)),
\end{align*} \\
where $g_{\phi}(x)=(f_{\phi}')^{-1}(x)$ is strictly decreasing and $\mu_{\lambda}^*(s)$ is a normalization term so that $\sum_{a \in \mathcal{A}} \pi_{\lambda}^*(a|s) = 1$.
\end{proof}

\begin{lemma}
\label{lem:mu}For any Q-value function $Q(s, a)$, the equation
\begin{equation}
\label{eq:solve-mu-from-anyQ}
 \sum_{a}  \max \left\{g_{\phi}\left(\frac{1}{\lambda}(\mu(s) - Q(s, a))\right), 0\right\} = 1 
\end{equation}
has and only has one $\mu^{*}$ satisfying it.
\end{lemma}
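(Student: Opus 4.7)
The plan is to show that the left-hand side of~\eqref{eq:solve-mu-from-anyQ}, viewed as a function of $\mu$, is continuous, non-increasing, tends to $0$ as $\mu \to +\infty$, and is $\geq |\mathcal{A}|$ for $\mu$ sufficiently negative; the intermediate value theorem then gives existence, and a short argument about the shape of the ``flat'' region of $g_\phi$ gives uniqueness.

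First, I would record the basic properties of $g_\phi=(f_\phi')^{-1}$ that follow from Assumption~\ref{assump:1}(4): since $f_\phi$ is strictly concave on $(0,1]$ with $f_\phi(1)=0$ and $f_\phi'$ is continuous and strictly decreasing, its image is some interval with infimum $f_\phi'(1)\le 0$ and supremum $f_\phi'(0^+)\in(0,\infty]$, and $g_\phi$ is a continuous strictly decreasing bijection of this interval onto a subset of $(0,1]$. It is then natural to set $G(y):=\max\{g_\phi(y),0\}$, extended by $G(y)=0$ for $y\geq f_\phi'(0^+)$ (if finite) and $G(y)\geq 1$ for $y\leq f_\phi'(1)$. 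Then $G\colon\mathbb{R}\to[0,\infty)$ is continuous and non-increasing, with $G\to 0$ as $y\to+\infty$ and $G\to G(-\infty)\geq 1$ as $y\to-\infty$.

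Define
\[
F(\mu)\;=\;\sum_{a\in\mathcal{A}} G\!\left(\tfrac{\mu-Q(s,a)}{\lambda}\right).
\]
Since $G$ is continuous and non-increasing in its argument and $\lambda>0$, $F$ is continuous and non-increasing in $\mu$. Taking $\mu\to+\infty$ makes every argument large, so $F(\mu)\to 0$; taking $\mu\to-\infty$ (or even $\mu\leq\min_a Q(s,a)+\lambda f_\phi'(1)$) drives every term to be $\geq 1$, so $F(\mu)\geq|\mathcal{A}|$. By the intermediate value theorem there exists $\mu^*$ with $F(\mu^*)=1$, which proves existence.

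For uniqueness, suppose two solutions $\mu_1<\mu_2$ both satisfy $F(\mu_i)=1$. Since $F$ is non-increasing and equal to $1$ at both endpoints, it is identically $1$ on $[\mu_1,\mu_2]$. As a sum of non-increasing nonnegative continuous functions that is constant on an interval, each summand $h_a(\mu):=G\!\big(\tfrac{\mu-Q(s,a)}{\lambda}\big)$ must itself be constant on $[\mu_1,\mu_2]$. But $g_\phi$ is \emph{strictly} decreasing on the interior of its domain, so $h_a$ can be constant on a whole interval only in the flat region where it is clipped to $0$, i.e.\ where $\tfrac{\mu-Q(s,a)}{\lambda}\geq f_\phi'(0^+)$. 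This would force $h_a\equiv 0$ for all $a$ on $[\mu_1,\mu_2]$, hence $F\equiv 0$ there, contradicting $F\equiv 1$. Thus $\mu^*$ is unique.

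The only subtle point I anticipate is handling the two boundary cases of the domain of $g_\phi$ cleanly, namely when $f_\phi'(0^+)=+\infty$ (so $G$ is never $0$ and nothing is clipped, which only makes uniqueness easier) and when $f_\phi'(1)>-\infty$ (so the lower extension of $G$ is by a constant $\geq 1$, which suffices for the limit at $-\infty$). Once the piecewise definition of $G$ is nailed down, the rest of the argument is essentially the one-dimensional monotonicity-plus-IVT skeleton above.
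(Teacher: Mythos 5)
Your skeleton --- continuity and monotonicity of the left-hand side of~\eqref{eq:solve-mu-from-anyQ} in $\mu$, the intermediate value theorem for existence, and an argument that the function cannot sit at the level $1$ on an interval for uniqueness --- is essentially the paper's, which shows the left-hand side is \emph{strictly} decreasing on $(-\infty,\mu_{\max})$ with $\mu_{\max}=\max_a Q(s,a)+\lambda f_{\phi}'(0)$ and then applies the IVT between $\mu_{\min}=\min_a Q(s,a)+\lambda f_{\phi}'(1)$ and $\mu_{\max}$. However, your uniqueness step has a genuine gap, and it traces back to how loosely you pin down the extension of $G$ below $f_{\phi}'(1)$. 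You only require $G$ to be continuous, non-increasing, and $\geq 1$ there, which admits the constant extension $G\equiv 1$ on $(-\infty,f_{\phi}'(1)]$. Under such an extension your key claim --- that a summand $h_a$ can be constant on a whole interval only in the region where it is clipped to $0$ --- is false: $h_a$ can also be constant, equal to $1$, on any interval whose image lies in $(-\infty,f_{\phi}'(1)]$. Concretely, take $\phi(x)=\tfrac{1}{2}(1-x)$, so $f_{\phi}'(x)=\tfrac{1}{2}-x$ and $g_{\phi}(y)=\tfrac{1}{2}-y$ on $[-\tfrac{1}{2},\tfrac{1}{2}]$; with $|\mathcal{A}|=2$, $\lambda=1$, $Q(s,a_1)=0$, $Q(s,a_2)=100$ and the constant extension, your $F$ equals $0+1=1$ identically on the interval $[\tfrac{1}{2},\,99.5]$, so the argument cannot conclude uniqueness (indeed your extended equation has a continuum of roots).

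The lemma itself is not endangered: all the spurious roots in that example occur where the argument of $g_{\phi}$ for $a_2$ falls strictly below $f_{\phi}'(1)$, i.e.\ outside the domain of $(f_{\phi}')^{-1}$, so they are not solutions of the original equation; only the endpoint $\mu=99.5$ is. To close the gap you must do one of two things: (i) insist that the extension below $f_{\phi}'(1)$ be \emph{strictly} decreasing, so that ``constant on an interval'' really does force the clipped-to-zero regime and your contradiction $F\equiv 0$ versus $F\equiv 1$ goes through; or (ii) first observe that at any genuine solution every summand is at most $1$, hence every argument is at least $f_{\phi}'(1)$ and $\mu\geq\max_a Q(s,a)+\lambda f_{\phi}'(1)$, and then prove strict monotonicity of $F$ on that ray wherever $F>0$ --- which is essentially what the paper does via the index set $\Lambda(s,\mu)$ of unclipped actions, noting that it is always nonempty to the left of $\mu_{\max}$. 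Either patch is short, but as written the uniqueness argument does not go through.
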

\begin{proof}
Denote the left hand side of~\eqref{eq:solve-mu-from-anyQ} which is a continuous function of $\mu$ as $h(\mu)$. We first prove that $h(\mu)$ is a strictly decreasing function on $(-\infty, \mu_{\max})$, where $\mu_{\max} = \max_{a}Q(s, a) + \lambda f_{\phi}'(0)$. Let $\Lambda(s, \mu)$ the set of actions such that their maximum term in~\eqref{eq:solve-mu-from-anyQ} is not obtained at 0, i.e., $ \Lambda(s, \mu)=\{ a:  Q(s, a) > \mu(s) - \lambda f_{\phi}'(0) \} $. Then for $\mu_1 < \mu_2 < \mu_{\max}$,  it follows that $\Lambda(s, \mu_2) \subseteq \Lambda(s, \mu_1)$ and
\begin{align*}
h(\mu_1) - h(\mu_2) = \sum_{a \in\Lambda(s, \mu_2)}  \Delta(\mu_1, \mu_2)  +  \sum_{a  \in \Lambda(s, \mu_1) - \Lambda(s, \mu_2)}g_{\phi} \left(\frac{1}{\lambda}(\mu_1(s) - Q(s, a))\right)
\end{align*}
where
\begin{align*}
\Delta(\mu_1, \mu_2) = g_{\phi} \left(\frac{1}{\lambda}(\mu_1(s) - Q(s, a))\right)  -g_{\phi}\left(\frac{1}{\lambda}(\mu_2(s) - Q(s, a))\right)
\end{align*}
is positive for all actions in $\Lambda(s, \mu_2)$. Since there must be at least one action in $\Lambda(s, \mu_2)$, $h(\mu_1) - h(\mu_2) > 0$.  Therefore, we have proved that $h(\mu)$ decreases strictly on $(-\infty, \mu_{\max})$. Note that $h(\mu_{\max}) = 0<1$ and $h(\mu_{\min}) > 1$ where $\mu_{\min}=\min_{a}Q(s,a) + \lambda f_{\phi}'(1)$. This result implies there exist a unique $\mu^* \in (\mu_{\min}, \mu_{\max})$ satisfying~\eqref{eq:solve-mu-from-anyQ} as the result of the intermediate value theorem.
\end{proof}

\section{Proof for General Bellman Operator}\label{ap:operator}
In~\eqref{prob:current}, we define a general Bellman operator $\mathcal{T}_{\lambda}$ for regularized MDPs. Given one state $s \in \mathcal{S}$ and current value function $V_{\lambda}$,
\[ (\mathcal{T}_{\lambda} V_{\lambda})(s):=\max_{\pi} \sum_{a}\pi(a|s)\left[Q_{\lambda}(s, a)+\lambda \phi(\pi(a|s))\right],  \]
where $Q_{\lambda}(s, a) = r(s, a) + \gamma \mathbb{E}_{s'|s, a}V_{\lambda}(s')$ is Q-value function deriving from one-step foreseeing according to $V_{\lambda}$. In Lemma~\ref{lem:contraction}, we prove $\mathcal{T}_{\lambda}$ is a $\gamma$-contraction. In Theorem~\ref{thm: max}, we prove the simple lower and upper bound for $\mathcal{T}_{\lambda}$ under Assumption~\ref{assump:control-entropy}.

\begin{lemma}
\label{lem:contraction}
$\mathcal{T}_{\lambda}$ is a $\gamma$-contraction.
\end{lemma}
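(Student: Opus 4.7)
The plan is to mimic the classical proof that the (non-regularized) Bellman operator is a $\gamma$-contraction, exploiting the fact that the regularization term $\lambda\phi(\pi(a|s))$ does not depend on the value function $V$ and therefore cancels whenever we compare $(\mathcal{T}_\lambda V_1)(s)$ and $(\mathcal{T}_\lambda V_2)(s)$ under a common policy.

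Concretely, fix two value functions $V_1, V_2$ and a state $s\in\mathcal{S}$, and let $Q_i(s,a) = r(s,a) + \gamma\mathbb{E}_{s'|s,a} V_i(s')$ for $i\in\{1,2\}$. Let $\pi_i^\star(\cdot|s)$ attain the maximum in the definition of $(\mathcal{T}_\lambda V_i)(s)$; existence of these maximizers follows from the strict concavity of the objective (Assumption~\ref{assump:1}(4)) on the compact simplex $\Delta_{\mathcal{A}}$. The first step is to use the suboptimality of $\pi_2^\star$ for $V_1$ in one direction and of $\pi_1^\star$ for $V_2$ in the other to obtain
\[
(\mathcal{T}_\lambda V_1)(s) - (\mathcal{T}_\lambda V_2)(s) \le \sum_a \pi_1^\star(a|s)\bigl[Q_1(s,a) - Q_2(s,a)\bigr],
\]
since the two $\lambda\phi(\pi_1^\star(a|s))$ terms cancel exactly.

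The second step is to bound the right-hand side. Substituting the definition of $Q_i$, the difference becomes
\[
\gamma \sum_a \pi_1^\star(a|s)\,\mathbb{E}_{s'|s,a}\bigl[V_1(s') - V_2(s')\bigr] \le \gamma\,\|V_1 - V_2\|_\infty,
\]
because $\pi_1^\star(\cdot|s)$ and $\mathbb{P}(\cdot|s,a)$ are probability distributions. Swapping the roles of $V_1$ and $V_2$ gives the matching lower bound, hence $|(\mathcal{T}_\lambda V_1)(s) - (\mathcal{T}_\lambda V_2)(s)| \le \gamma\|V_1 - V_2\|_\infty$ pointwise in $s$. Taking the supremum over $s$ yields $\|\mathcal{T}_\lambda V_1 - \mathcal{T}_\lambda V_2\|_\infty \le \gamma\|V_1 - V_2\|_\infty$, which is the $\gamma$-contraction property.

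There is no real obstacle here: the only mildly delicate point is justifying that the maximum defining $\mathcal{T}_\lambda$ is attained, which is immediate from continuity of $\pi\mapsto \sum_a \pi(a|s)[Q(s,a)+\lambda\phi(\pi(a|s))]$ on the compact simplex together with Assumption~\ref{assump:1}(3)(4); if one prefers, the argument still goes through by taking $\varepsilon$-maximizers and letting $\varepsilon\to 0$, so nothing depends on the sparsity behaviour of $\phi$ near the origin. The whole argument only relies on the fact that the regularizer is a function of $\pi$ alone, independent of $V$, so the same proof works uniformly for every $\phi\in\mathcal{F}$.
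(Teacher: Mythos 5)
Your proof is correct and follows essentially the same route as the paper's: bound $(\mathcal{T}_\lambda V_1)(s) - (\mathcal{T}_\lambda V_2)(s)$ by evaluating both operators under the maximizer $\pi_1^\star$ of the first, let the $\lambda\phi$ terms cancel, reduce to $\gamma\,\|V_1-V_2\|_\infty$, and conclude by symmetry. The added remark on attainment of the maximum is a harmless extra detail the paper leaves implicit.
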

\begin{proof}
For any two state value functions $V_1$ and $V_2$, let $\pi_i$ be the policy that maximize  $\mathcal{T}_{\lambda} V_i$, $ i \in \{1, 2\}$. Then it follows that for any state $s$ in $\SP$,
\begin{align*}
&(\mathcal{T}_{\lambda} V_1)(s) -(\mathcal{T}_{\lambda} V_2)(s) \\
&=\sum_{a}\pi_1(a|s)\left[r(s, a) + \gamma \mathbb{E}_{s'|s, a}V_1(s')+\lambda \phi(\pi_1(a|s))\right]
- \max_{\pi} \sum_{a}\pi(a|s)\left[r(s, a) + \gamma \mathbb{E}_{s'|s, a}V_2(s')+\lambda \phi(\pi(a|s))\right]\\
&\le\sum_{a}\pi_1(a|s)\left[r(s, a) + \gamma \mathbb{E}_{s'|s, a}V_1(s')+\lambda \phi(\pi_1(a|s))\right]
-  \sum_{a}\pi_1(a|s)\left[r(s, a) + \gamma \mathbb{E}_{s'|s, a}V_2(s')+\lambda \phi(\pi_1(a|s))\right]\\
&=\gamma   \sum_{a} \pi_1(a|s)  \mathbb{E}_{s'|s, a} (V_1(s') - V_2(s'))\le \gamma  \|V_1 - V_2\|_{\infty}.
\end{align*}

By symmetry, it follows that for any state $s$ in $\SP$,
\[ (\mathcal{T}_{\lambda} V_2)(s) -(\mathcal{T}_{\lambda} V_1)(s) \le \gamma  \|V_1 - V_2\|_{\infty}   \]
Therefore, it follows that
\[  \|\mathcal{T}_{\lambda} V_2- \mathcal{T}_{\lambda} V_1\|_{\infty} \le \gamma  \|V_1 - V_2\|_{\infty}  \]
\end{proof}

\begin{proof}{\textbf{for Theorem~\ref{thm: max}}}
Fix any value function $V$ and $s \in \SP$. Note that $\phi(\pi(a|s))$ is non-negative due to (1) and (2) in Assumption~\ref{assump:1}. Therefore, by definition the left inequality follows from
\begin{align*}
\mathcal{T}_{\lambda} V(s)&= \max_{\pi} \sum_{a}\pi(a|s)\left[r(s, a) + \gamma \mathbb{E}_{s'|s, a}V(s')+\lambda \phi(\pi(a|s))\right]\\
&\ge \max_{\pi} \sum_{a}\pi(a|s)\left[r(s, a) + \gamma \mathbb{E}_{s'|s, a}V(s')\right]=\mathcal{T}V(s).
\end{align*}

For the right inequality, note that
\begin{align*}
\mathcal{T}_{\lambda} V(s)&= \max_{\pi} \sum_{a}\pi(a|s)\left[r(s, a) + \gamma \mathbb{E}_{s'|s, a}V(s')+\lambda \phi(\pi(a|s))\right]\\
&\le \max_{\pi} \sum_{a}\pi(a|s)\left[r(s, a) + \gamma \mathbb{E}_{s'|s, a}V(s')\right] + \lambda \max_{\pi} H_{\phi}(\pi)\\
&= \mathcal{T} V(s) +  \lambda \max_{\pi} H_{\phi}(\pi).
\end{align*}
where $H_{\phi}(\pi) = \sum_{a}\pi(a|s) \phi(\pi(a|s))$ defined in~\eqref{eq: entropy-like} is what we next aim to bound.

The Lagrangian of solving $\max_{\pi} H_{\phi}(\pi)$ is
\[ L(\pi, \beta, \mu) = H_{\phi}(\pi)  + \mu (\sum_{a}\pi(a|s) -1 ) + \beta_a \pi(a|s).  \]
Its stationary condition is
\[ \frac{ \partial L}{\partial \pi(a|s)} = f_{\phi}'(\pi(a|s)) + \mu + \beta_a = 0.   \]
If $\pi(a|s) > 0$ then $\beta_a=0$ from the complementary slackness. Let $\pi^*$ be the policy that maximizes $H_{\phi}(\pi)$ and $S = \{a: \pi^*(a|s) > 0 \}$ be its support set. Then $\pi(a|s) = g_{\phi}(-\mu) = \text{constant}$ for all $a \in S$. Hence, $\pi(a|s) = \frac{1}{|S|}$ for $a \in S$ and $= 0$ for $a \notin S$. Note that $g_{\phi}$ is strictly decreasing and the assumption $\liml_{x \to 0^+} x\phi(x) =0$, 
\[ H_{\phi}(\pi) = \sum_{a \in S(s)} \pi^*(a|s)\phi(\pi^*(a|s)) = \phi(\frac{1}{|S|})\le \phi(\frac{1}{|\mathcal{A}|}) \]
where the last inequality use the fact $\phi$ is decreasing and $|S| \le |\mathcal{A}|$.
\end{proof}

\section{Proof for Performance Error}\label{ap:error}
We prove Theorem~\ref{thm: difference} in that the difference of $V^*$ and $ V_{\lambda}^*$  is controlled by both $\lambda$ and $\phi(\cdot)$ under Assumption~\ref{assump:control-entropy}. To that end, we first introduce several useful lemmas which give some properties of $\TP_{\lambda}$ including monotonicity, translation and convergence of repeated applications. Then a combination of these lemmas will prove Theorem~\ref{thm: difference}.

\begin{lemma}[Monotonicity]
	\label{lem:monotonicity}
	$\TP_{\lambda}$ has the property of monotonicity, i.e., if $V_1(s) \le V_2(s)$ for all $s \in \SP$, then $ \TP_{\lambda} V_1(s) \le \TP_{\lambda} V_2(s)$ for all $s \in \SP$. 
\end{lemma}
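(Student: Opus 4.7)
The plan is to exploit the fact that, for each fixed policy $\pi$, the objective defining $\mathcal{T}_\lambda V(s)$ is an affine function of $V$ with non-negative coefficients, so it is monotone in $V$, and monotonicity is preserved under the outer maximization.

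More concretely, I would first unfold the definition $\mathcal{T}_\lambda V(s) = \max_\pi \sum_a \pi(a|s)\bigl[r(s,a) + \gamma \mathbb{E}_{s'|s,a} V(s') + \lambda \phi(\pi(a|s))\bigr]$ and let $\pi_1$ denote any maximizer for $V_1$ at state $s$. The key observation is that for any fixed $\pi$, the only place $V$ enters is through $\gamma \sum_a \pi(a|s) \mathbb{E}_{s'|s,a} V(s')$, and the coefficients $\gamma\, \pi(a|s)\, \mathbb{P}(s'\mid s,a)$ are all non-negative. Hence $V_1(s') \le V_2(s')$ for all $s'$ immediately gives
\[
\sum_a \pi_1(a|s)\bigl[r(s,a) + \gamma \mathbb{E}_{s'|s,a} V_1(s') + \lambda \phi(\pi_1(a|s))\bigr]
\;\le\;
\sum_a \pi_1(a|s)\bigl[r(s,a) + \gamma \mathbb{E}_{s'|s,a} V_2(s') + \lambda \phi(\pi_1(a|s))\bigr].
\]
Since the right-hand side is the value of the objective defining $\mathcal{T}_\lambda V_2(s)$ at the particular choice $\pi = \pi_1$, it is bounded above by the maximum over $\pi$, i.e., by $\mathcal{T}_\lambda V_2(s)$. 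The left-hand side equals $\mathcal{T}_\lambda V_1(s)$ by the choice of $\pi_1$, so we conclude $\mathcal{T}_\lambda V_1(s) \le \mathcal{T}_\lambda V_2(s)$ as desired.

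There is essentially no obstacle here; the argument is the standard monotonicity proof for the Bellman operator, and the regularization term causes no trouble because it does not depend on $V$. The only mild technical point is that a maximizer $\pi_1$ should exist at each $s$ — but this follows because the simplex $\Delta_\mathcal{A}$ is compact and the integrand is continuous in $\pi$ under Assumption~\ref{assump:1} (in particular the differentiability and expected-concavity conditions ensure the objective is continuous, indeed concave in $\pi(\cdot|s)$, so the maximum is attained). If one wished to avoid even invoking attainment, the same bound can be obtained by taking a supremizing sequence $\{\pi_1^{(n)}\}$ and passing to the limit, since the inequality is preserved termwise.
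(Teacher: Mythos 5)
Your proof is correct and follows essentially the same route as the paper: both arguments rest on the observation that for each fixed $\pi$ the objective is monotone in $V$ (since $V$ enters only through non-negative coefficients), and that pointwise domination of the objectives carries over to their maxima. The paper simply compares the two maxima directly without naming a maximizer $\pi_1$, but the content is identical.
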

\begin{proof}
The conclusion directly follows from
\begin{align*}
\TP_{\lambda} V_1(s)&=\max_{\pi} \sum_{a}\pi(a|s)\left[r(s, a) + \gamma \mathbb{E}_{s'|s, a}V_1(s')+\lambda \phi(\pi(a|s))\right]\\
&\le\max_{\pi} \sum_{a}\pi(a|s)\left[r(s, a) + \gamma \mathbb{E}_{s'|s, a}V_2(s')+\lambda \phi(\pi(a|s))\right]=\TP_{\lambda} V_2(s)
\end{align*}
\end{proof}

\begin{lemma}[Translation]
	\label{lem:plus-comstant}
	Let $c$ denote any constant. Define $(V + c)(s) \triangleq V(s) + c$ as the value function shifted by $c$. Then it follows that for any $s \in \SP$,
	\[  (\TP_{\lambda} (V + c))(s) = (\TP_{\lambda} V)(s) + \gamma c  \]
\end{lemma}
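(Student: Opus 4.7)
The plan is to unfold the definition of $\TP_{\lambda}$ applied to the shifted value function $V+c$ and then use linearity of expectation together with the constraint $\sum_a \pi(a|s) = 1$ to extract the additive constant $\gamma c$ outside the maximum. Concretely, I would start from
\begin{equation*}
(\TP_{\lambda}(V+c))(s) = \max_{\pi}\sum_a \pi(a|s)\bigl[r(s,a) + \gamma\,\EB_{s'|s,a}(V(s')+c) + \lambda\phi(\pi(a|s))\bigr],
\end{equation*}
and observe that $\EB_{s'|s,a}(V(s')+c) = \EB_{s'|s,a}V(s') + c$ since $c$ is deterministic.

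Next, I would split the sum into the $V$-part (with the regularization term) and the constant part $\gamma c$. Because $\sum_a \pi(a|s) \cdot \gamma c = \gamma c \sum_a \pi(a|s) = \gamma c$ holds independently of $\pi$ (by the simplex constraint on $\pi(\cdot|s)$), the constant $\gamma c$ can be pulled completely out of the maximization. What remains inside the $\max$ is precisely the expression defining $\TP_{\lambda}V(s)$, giving
\begin{equation*}
(\TP_{\lambda}(V+c))(s) = \max_{\pi}\sum_a \pi(a|s)\bigl[r(s,a) + \gamma\,\EB_{s'|s,a}V(s') + \lambda\phi(\pi(a|s))\bigr] + \gamma c = (\TP_{\lambda}V)(s) + \gamma c.
\end{equation*}

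There is no real obstacle here: the only things being used are linearity of expectation, the fact that $c$ is a constant, and that policies at a fixed state sum to one. The argument is essentially one line once the definition is written out, so I would present it as a short direct computation rather than invoking any auxiliary machinery. This translation property, together with the monotonicity lemma just above, is exactly what one then needs to combine with Theorem~\ref{thm: max} in order to iterate $\TP_{\lambda}$ against $\TP$ and deduce the performance bound in Theorem~\ref{thm: difference}.
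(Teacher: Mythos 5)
Your proof is correct and follows the same route as the paper: unfold the definition of $\TP_{\lambda}$, use linearity of expectation to replace $\EB_{s'|s,a}(V(s')+c)$ by $\EB_{s'|s,a}V(s')+c$, and pull the constant $\gamma c$ out of the maximization via the simplex constraint. Your write-up is if anything slightly more explicit than the paper's one-line computation, which leaves the simplex step implicit.
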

\begin{proof}
By definition, it directly follows from
\begin{align*}
(\TP_{\lambda} (V + c))(s) &=\max_{\pi} \sum_{a}\pi(a|s) \left[r + \gamma \mathbb{E}_{s'|s, a}(V+c)(s')+\lambda \phi(\pi(a|s))\right]\\
&=\max_{\pi} \sum_{a}\pi(a|s) \left[r + \gamma \mathbb{E}_{s'|s, a}V(s') + \gamma c+\lambda \phi(\pi(a|s))\right]=(\TP_{\lambda}V)(s) + \gamma c
\end{align*}
\end{proof}

\begin{lemma}[Convergence of Repeated Applications]
\label{lem:value-iteration}
For any initial value function $V_0$, define $V_n = \TP_{\lambda}^n V_0  \triangleq \underbrace{\TP_{\lambda}\cdots \TP_{\lambda}}_{n}V_0$ as the value function resulting from $n$ times application of $\TP_{\lambda}$ to $V_0$. Then
\[ \liml_{n \to \infty}  \|V_n - V_{\lambda}^*\|_{\infty}  = 0.  \]
\end{lemma}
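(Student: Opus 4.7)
The plan is to apply the Banach fixed point theorem to the operator $\TP_{\lambda}$, exploiting the contraction property already established in Lemma~\ref{lem:contraction}. The space of bounded real-valued functions on $\SP$ equipped with $\|\cdot\|_{\infty}$ is a complete metric space, and rewards are bounded in $[0, R_{\max}]$ with $\gamma \in [0,1)$, so all value functions under consideration live in this Banach space. Since $\TP_{\lambda}$ is a $\gamma$-contraction on this space, it admits a unique fixed point $\bar{V}$, and for any starting $V_0$ the iterates $V_n = \TP_\lambda^n V_0$ converge to $\bar{V}$ at geometric rate
\[
\|V_n - \bar{V}\|_{\infty} = \|\TP_{\lambda}^{n} V_0 - \TP_{\lambda}^{n} \bar{V}\|_{\infty} \le \gamma^{n}\|V_0 - \bar{V}\|_{\infty}.
\]

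The substantive step is to identify $\bar{V}$ with $V_{\lambda}^*$. I would invoke Theorem~\ref{thm:optimality}: the optimal triple $(V_{\lambda}^*, Q_{\lambda}^*, \pi_{\lambda}^*)$ satisfies $Q_{\lambda}^*(s,a) = r(s,a) + \gamma \EB_{s'|s,a}V_{\lambda}^*(s')$ together with $V_{\lambda}^*(s) = \sum_{a}\pi_{\lambda}^*(a|s)[Q_{\lambda}^*(s,a) + \lambda \phi(\pi_{\lambda}^*(a|s))]$, and $\pi_{\lambda}^*$ is precisely the maximizer of the inner objective in the definition of $\TP_{\lambda}V_{\lambda}^*(s)$ (by the KKT derivation in Appendix~\ref{ap: optimality}). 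Hence $V_{\lambda}^* = \TP_{\lambda} V_{\lambda}^*$, so by uniqueness of the fixed point, $V_{\lambda}^* = \bar{V}$. Combining with the geometric bound above yields $\|V_n - V_{\lambda}^*\|_{\infty} \to 0$, as desired.

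The main obstacle is the fixed-point identification $V_\lambda^* = \bar V$: the contraction argument itself is immediate once the Banach framework is in place, but one must be careful to justify that the max in the definition of $\TP_{\lambda}V_{\lambda}^*$ is attained at $\pi_{\lambda}^*$. This is exactly what the KKT analysis in the proof of Theorem~\ref{thm:optimality} gives us (the policy formula in~\eqref{eq:optimal-pi} is derived as the unique maximizer of the one-step regularized Bellman objective given $Q_{\lambda}^*$), so no additional work is required beyond quoting that derivation. Note also that one should briefly observe that the iterates $V_n$ remain bounded — this is automatic since $\TP_{\lambda}$ maps bounded functions to bounded functions (the reward is bounded, $\phi$ is bounded on any optimal policy by Theorem~\ref{thm: max}), so iterating preserves membership in the Banach space on which the contraction acts.
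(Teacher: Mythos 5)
Your proposal is correct and follows essentially the same route as the paper: the paper's proof simply notes that $V_{\lambda}^{*} = \TP_{\lambda} V_{\lambda}^{*}$ and then applies the $\gamma$-contraction property from Lemma~\ref{lem:contraction} $n$ times to obtain $\|V_n - V_{\lambda}^*\|_{\infty} \le \gamma^n \|V_0 - V_{\lambda}^*\|_{\infty} \to 0$. Your additional care in justifying the fixed-point identification $\bar{V} = V_{\lambda}^*$ via the KKT derivation of Theorem~\ref{thm:optimality} is a reasonable elaboration of a step the paper takes for granted, but it does not change the argument.
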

\begin{proof}
Note that $V_{\lambda}^{*} = \TP_{\lambda} V_{\lambda}^*$. It follows that
\begin{align*}
\|V_n - V_{\lambda}^*\|_{\infty} = \|\TP_{\lambda} V_{n-1} - \TP_{\lambda} V_{\lambda}^*\|_{\infty}\le \gamma \|V_{n-1} - V_{\lambda}^*\|_{\infty}\le \cdots \le\gamma^n \|V_0- V_{\lambda}^*\|_{\infty}.
\end{align*}
The first equality follows from definition. The first inequality results from Lemma~\ref{lem:contraction}. The last inequality is due to $n$-times applications of the first inequality.
\end{proof}

\begin{proof}{\textbf{for Theorem~\ref{thm: difference}}}
Fix any initial value function $V_0$. We aim to use mathematical induction to prove the statement that  for any $n \ge 1$, it follows for any $s \in \SP$
\begin{equation}
\label{eq: bound-iteration}
\TP^{n}V_0(s) \le  \mathcal{T}_{\lambda}^{n} V_0(s) \le \TP^{n}V_0(s)  + \lambda\phi(\frac{1}{|\mathcal{A}|}) \sum_{t=0}^{n-1} \gamma^{t}.
\end{equation}
When $n=1$,~\eqref{eq: bound-iteration} results from Theorem~\ref{thm: max}. 

Suppose the statement holds when $n = k (k \ge 1)$. Consider the case where $n=k+1$. First it follows that
\[\TP^{k+1}V_0(s) \le  \TP \mathcal{T}_{\lambda}^{k} V_0(s) \le  \mathcal{T}_{\lambda}^{k+1} V_0(s).  \]
The first inequality follows from the hypothesis and the monotonicity of $\TP$ (which is a special case of $\TP_{\lambda}$ when $\lambda=0$) from Lemma~\ref{lem:monotonicity}. The second inequality results from letting $V = \mathcal{T}_{\lambda}^{k} V_0$ in Theorem~\ref{thm: max}.

Second, it follows that
\begin{align*}
\mathcal{T}_{\lambda}^{k+1}& V_0(s) =  \mathcal{T}_{\lambda}  \mathcal{T}_{\lambda}^{k} V_0(s)  \\
&\le\mathcal{T}_{\lambda}( \TP^{k}V_0(s)  + \lambda\phi(\frac{1}{|\mathcal{A}|}) \sum_{t=0}^{k-1} \gamma^{t})\\
&= \mathcal{T}_{\lambda} \TP^{k}V_0(s) +  \lambda\phi(\frac{1}{|\mathcal{A}|}) \sum_{t=1}^{k} \gamma^{t}\\
&\le \mathcal{T}^{k+1}V_0(s) +  \lambda\phi(\frac{1}{|\mathcal{A}|}) \sum_{t=0}^{k} \gamma^{t},
\end{align*}
where the first inequality follows from the induction where $n=k$ and the monotonicity of $\TP_{\lambda}$ from Lemma~\ref{lem:monotonicity}, the second equality holds by setting $V = \TP^{k}V_0$ and $c = \lambda\phi(\frac{1}{|\mathcal{A}|}) \sum_{t=0}^{k-1} \gamma^{t}$ in Lemma~\ref{lem:plus-comstant}. The last inequality results from letting $V = \mathcal{T}_{\lambda}^{k} V_0$ in Theorem~\ref{thm: max}.

Putting above results together, we prove that~\eqref{eq: bound-iteration} holds when $n=k+1$. Therefore by mathematical induction,~\eqref{eq: bound-iteration} holds for any positive integer $n$. From Lemma~\ref{lem:value-iteration}, we have $V^*(s) = \liml_{n \to \infty} \TP^n V_0(s)$ and $V_{\lambda}^*(s) = \liml_{n \to \infty} \TP_{\lambda}^n V_0(s)$. Now let $n$ go infinity in both sides of~\eqref{eq: bound-iteration}, we obtain
\[ V^*(s) \le V_{\lambda}^*(s) \le  V^*(s) + \frac{\lambda}{1-\gamma} \phi(\frac{1}{|\AP|}), \]
which proves the theorem.
\end{proof}

\section{Proof for Control the sparsity of Optimal Policy}\label{ap:control-sparse}
In this section, we show that the number of positive actions can be controlled by regularization coefficient $\lambda$. Similar results about Tsallis entropy regularized MDPs can be found in~\cite{lee2018sparse}. However their proof focuses on a specific regularization. The proof we provide is suitable for any regularizors satisfying Assumption~\ref{assump:1}.

\begin{proof}{\textbf{for Theorem~\ref{thm:control}}.}
At first we prove that the optimal policy will approximate uniform distribution on action space. Under such situation, it is obvious that the optimal policy will have no sparsity as $\lambda\to\infty$. Denote $H=\max_{\pi}H(\pi)$. For an arbitrary $\delta>0$, there exists $\lambda_{0}$, such that $\forall \lambda>\lambda_{0}$, $|\frac{r(s, a)}{\lambda}|\le\delta$. Next we estimate the error between $\frac{Q_{\lambda}^{*}(s, a)}{\lambda}$ and $\max_{\pi}\mathbb{E}[\sum_{t=1}^{\infty}\gamma^{t}\phi(\pi(a_{t}|s_{t}))|s_{0}=s, a_{0}=a]=\frac{\gamma}{1-\gamma}H$:
\begin{align}
    \frac{Q_{\lambda}^{*}(s, a)}{\lambda}-\frac{\gamma}{1-\gamma}H&=\frac{Q_{\lambda}^{*}(s, a)}{\lambda}-\max_{\pi}\mathbb{E}[\sum_{t=1}^{\infty}\gamma^{t}\phi(\pi(a_{t}|s_{t}))|s_{0}=s, a_{0}=a, \pi]\notag\\
    &\le\frac{Q_{\lambda}^{\pi_{\lambda}^{*}}(s, a)}{\lambda}-\mathbb{E}[\sum_{t=1}^{\infty}\gamma^{t}\phi(\pi_{\lambda}^{*}(a_{t}|s_{t}))|s_{0}=s, a_{0}=a, \pi_{\lambda}^{*}]\notag\\
    &=\mathbb{E}[\sum_{t=0}^{\infty}\gamma^{t}\frac{r(s_{t}, a_{t})}{\lambda}|s_{0}=s, a_{0}=a, \pi_{\lambda}^{*}]\notag\\
    &\le\frac{\delta}{1-\gamma}
\end{align}
On the other hand, denote $\pi_{H}^{*}=\argmax_{\pi}H(\pi)$, we have:
\begin{align}
     \frac{Q_{\lambda}^{*}(s, a)}{\lambda}-\frac{\gamma}{1-\gamma}H&=\frac{Q_{\lambda}^{*}(s, a)}{\lambda}-\max_{\pi}\mathbb{E}[\sum_{t=1}^{\infty}\gamma^{t}\phi(\pi(a_{t}|s_{t}))|s_{0}=s, a_{0}=a, \pi]\notag\\
     &\ge\frac{Q_{\lambda}^{\pi_{H}^{*}}(s, a)}{\lambda}-\mathbb{E}[\sum_{t=1}^{\infty}\gamma^{t}\phi(\pi_{H}^{*}(a_{t}|s_{t}))|s_{0}=s, a_{0}=a, \pi_{H}^{*}]\notag\\
     &=\mathbb{E}[\sum_{t=0}^{\infty}\gamma^{t}\frac{r(s_{t}, a_{t})}{\lambda}|s_{0}=s, a_{0}=a, \pi_{H}^{*}]\notag\\
     &\ge-\frac{\delta}{1-\gamma}
\end{align}
So $|\frac{Q_{\lambda}^{*}(s, a)}{\lambda}-\frac{\gamma}{1-\gamma}H|\le\frac{\delta}{1-\gamma}$.

Fix any $s\in\mathcal{S}$, denote $\mu_{\lambda}(s)$ is the solution satisfies Equation~\ref{eq:solve-mu-from-anyQ}:
\begin{align}
    1&= \sum_{a}  \max \left\{g_{\phi}\left(\frac{1}{\lambda}(\mu_{\lambda}(s) - Q_{\lambda}^{*}(s, a))\right), 0\right\}\notag\\
    &>|\mathcal{A}|\max\left\{g_{\phi}\left(\frac{1}{\lambda}(\mu_{\lambda}(s) - \min_{a}Q_{\lambda}^{*}(s, 
    a))\right), 0\right\}\notag\\
    &>|\mathcal{A}|g_{\phi}\left(\frac{1}{\lambda}(\mu_{\lambda}(s) - \min_{a}Q_{\lambda}^{*}(s, 
    a))\right)\notag
\end{align}
As $g_{\phi}$ is strictly decreasing, we have $\frac{1}{\lambda}\left(\mu_{\lambda}(s)-\min_{a}Q_{\lambda}^{*}(s, a)\right)>f'_{\phi}(\frac{1}{|\mathcal{A}|})$. By the same method, we can obtain that $\frac{1}{\lambda}\left(\mu_{\lambda}(s)-\max_{a}Q_{\lambda}^{*}(s, a)\right)<f'_{\phi}(\frac{1}{|\mathcal{A}|})$. Combining with $|\frac{Q_{\lambda}^{*}(s, a)}{\lambda}-\frac{\gamma}{1-\gamma}H|\le\frac{\delta}{1-\gamma}$:
\begin{align}
    \frac{\mu_{\lambda}(s)}{\lambda}-\frac{\gamma}{1-\gamma}H&=\frac{\mu_{\lambda}(s)}{\lambda}-\frac{\min_{a}Q_{\lambda}^{*}(s, a)}{\lambda}+\frac{\min_{a}Q_{\lambda}^{*}(s, a)}{\lambda}-\frac{\gamma}{1-\gamma}H\notag\\
    &>f'_{\phi}(\frac{1}{|\mathcal{A}|})-\frac{\delta}{1-\gamma}\notag\\
    \frac{\mu_{\lambda}(s)}{\lambda}-\frac{\gamma}{1-\gamma}H&=\frac{\mu_{\lambda}(s)}{\lambda}-\frac{\max_{a}Q_{\lambda}^{*}(s, a)}{\lambda}+\frac{\max_{a}Q_{\lambda}^{*}(s, a)}{\lambda}-\frac{\gamma}{1-\gamma}H\notag\\
    &<f'_{\phi}(\frac{1}{|\mathcal{A}|})+\frac{\delta}{1-\gamma}\notag
\end{align}

For arbitrary $s,a$, the following inequality holds:
\begin{align}
    \frac{\mu_{\lambda}(s)-Q_{\lambda}^{*}(s, a)}{\lambda}&=\frac{\mu_{\lambda}(s)}{\lambda}-\frac{\gamma}{1-\gamma}H+\frac{\gamma}{1-\gamma}H-\frac{Q_{\lambda}^{*}(s, a)}{\lambda}\notag\\
    &>f'_{\phi}(\frac{1}{|\mathcal{A}|})-\frac{2\delta}{1-\gamma}\notag\\
    \frac{\mu_{\lambda}(s)-Q_{\lambda}^{*}(s, a)}{\lambda}&=\frac{\mu_{\lambda}(s)}{\lambda}-\frac{\gamma}{1-\gamma}H+\frac{\gamma}{1-\gamma}H-\frac{Q_{\lambda}^{*}(s, a)}{\lambda}\notag\\
    &<f'_{\phi}(\frac{1}{|\mathcal{A}|})+\frac{2\delta}{1-\gamma}\notag\\
\end{align}
which concludes that $|\frac{\mu_{\lambda}(s)-Q_{\lambda}^{*}(s, a)}{\lambda}-f'_{\phi}(\frac{1}{|\mathcal{A}|})|<\frac{\delta}{1-\gamma}$. By continuity of $g_{\phi}$, $\forall \varepsilon>0$, choose a proper $\delta$, $|g_{\phi}(\frac{\mu_{\lambda}(s)-Q_{\lambda}^{*}(s, a)}{\lambda})-\frac{1}{|\mathcal{A}|}|<\varepsilon$. 

Next we prove that the sparsity of optimal policy $\pi_{\lambda}^{*}$ varies as $\delta\to\frac{1}{|\mathcal{A}|}$ when $\lambda\to0$. 

For arbitrary $(s, a)\in\mathcal{S}\times\mathcal{A}$, $\varepsilon>0$ and $\lambda>0$ , the following inequality holds:
\begin{align}
    0\le Q_{\lambda}^{*}(s, a)-Q^{*}(s, a)&\le Q_{\lambda}^{\pi_{\lambda}^{*}}(s, a)-Q^{\pi_{\lambda}^{*}}(s, a)\notag\\
    &=\lambda\mathbb{E}[\sum_{t=1}^{\infty}\gamma^{t}\phi(\pi_{\lambda}^{*}(a_{t}|s_{t}))|s_{0}=s, a_{0}=a, \pi_{\lambda}^{*}]\notag\\
    &\le\lambda\frac{\gamma}{1-\gamma}H
    \label{eq:qbound}
\end{align}
Denote $G(s)=\min_{a_{1}, a_{2}\in\mathcal{A}}|Q^{*}(s, a_{1})-Q^{*}(s, a_{2})|$, if $\lambda<\frac{1-\gamma}{H\gamma}G(s)$, the order of Q-values $Q^{*}(s, \cdot)$ is exactly the same with the order of $Q^{*}_{\lambda}(s, \cdot)$. In other words, denote $Q^{*}(s, a_{1})<Q^{*}(s, a_{2})<...<Q^{*}(s, a_{|\mathcal{A}|})$, then $Q_{\lambda}^{*}(s, a_{1})<Q_{\lambda}^{*}(s, a_{2})<...<Q_{\lambda}^{*}(s, a_{|\mathcal{A}|})$ still holds for $\lambda<\frac{1-\gamma}{H\gamma}G(s)$.

Next we prove the desired result by contradiction. For any given $a_{k}\in\mathcal{A}$ and $s\in\mathcal{S}$, and $\lambda< \frac{1-\gamma}{H\gamma}G(s)$, $\exists \lambda_{0}<\lambda$, such that $\pi_{\lambda_{0}}(a_{k}|s)=g_{\phi}(\frac{\mu_{\lambda_{0}}(s)-Q_{\lambda_{0}}^{*}(s, a_{k})}{\lambda_{0}})>0$. With the assumption, we can construct a sequence $\frac{1-\gamma}{H\gamma}G(s)>\lambda_{1}>\lambda_{2}>...>\lambda_{n}>...$, which satisifies $\liml_{n\to\infty}\lambda_{n}=0$ and $g_{\phi}(\frac{\mu_{\lambda_{n}}(s)-Q_{\lambda_{n}}^{*}(s, a_{k})}{\lambda_{n}})>0$, which is equivalent with $\mu_{\lambda_{n}}(s)-Q_{\lambda_{n}}^{*}(s, a_{k})<\lambda_{n}f'_{\phi}(0)$ as $g_{\phi}$ is a strictly decreasing function. Combining with KKT conditions~\eqref{eq:stationarity-condition}: $\mu_{\lambda_{n}}(s)=Q_{\lambda_{n}}^{*}(s, a_{|\mathcal{A}|})+\lambda_{n}f'_{\phi}(\pi_{\lambda_{n}}^{*}(a_{|\mathcal{A}}|s))$, the following inequality holds:
\begin{align}
    Q_{\lambda_{n}}^{*}(s, a_{|\mathcal{A}|})-Q_{\lambda_{n}}^{*}(s, a_{k})<\lambda_{n}(f'_{\phi}(0)-f'_{\phi}(\pi_{\lambda_{n}}^{*}(a_{|\mathcal{A}|}|s)))<\lambda_{n}(f'_{\phi}(0)-f'_{\phi}(1))
    \label{eq:abound}
\end{align}
By~\eqref{eq:qbound} and~\eqref{eq:abound}, 
\begin{align}
Q^{*}(s, a_{|\mathcal{A}|})-Q^{*}(s, a_{k})&\le Q_{\lambda_{n}}^{*}(s, a_{|\mathcal{A}|})-Q^{*}(s, a_{k})\notag\\
&\le Q_{\lambda_{n}}^{*}(s, a_{|\mathcal{A}|})-Q_{\lambda_{n}}^{*}(s, a_{k})+\lambda_{n}\frac{\gamma}{1-\gamma}H\notag\\
&<\lambda_{n}(f'_{\phi}(0)-f'_{\phi}(1)+\frac{\gamma}{1-\gamma}H)
\label{eq:contradiction}
\end{align}
As $\liml_{n\to\infty}\lambda_{n}=0$ and $f'_{\phi}(0)-f'_{\phi}(1)+\frac{\gamma}{1-\gamma}H$ is a positive constant, then the limit of right hand side of~\eqref{eq:contradiction} is $0$, which causes conflicts with that the left hand side of~\eqref{eq:contradiction} is a positive constant. Therefore, we claim that for any given $a\in\mathcal{A}$, $s\in\mathcal{S}$ and $a\not=\arg\max Q^{*}(s, \cdot)$, $\exists \lambda_{a, s}>0$, such that $\forall \lambda\le\lambda_{a, s}$, $\pi_{\lambda}^{*}(a|s)=0$. So for all $\lambda<\min_{a, s}\lambda_{a, s}$, the sparsity of the optimal policy $\pi_{\lambda}^{*}$ is $\delta=\frac{1}{|\mathcal{A}|}$.
\end{proof}

\section{Regularized Policy Iteration (RPI)}
\label{ap: rpi}
To solve problem~\eqref{prob:genera-regularized-rl} , we introduce \textit{Regularized Policy Iteration} (RPI), an algorithm that alternates between policy evaluation and policy improvement in the maximum regularized MDP framework. We first derive RPI on a tabular setting and show it provably converges to an optimal policy. Then we approximate RPI into a more practical algorithm which is an actor-critic method and thus named as regularized actor-critic (RAC).

The derivation of RPI stems from generalized policy iteration \cite{sutton2018reinforcement} that alternates between policy evaluation and policy improvement. In the policy evaluation step, we wish to compute the Q-value $Q_{\lambda}^{\pi}$ of a given policy $\pi$. When $\pi$ is fixed, $Q_{\lambda}^{\pi}$ can be computed iteratively by initializing any Q-value function and repeatedly applying the modified Bellman backup operator $\TP_{\lambda}^{\pi}$ defined by
\begin{equation}
\label{eq:operator-Q}
\TP_{\lambda}^{\pi} Q_{\lambda}(s, a) \triangleq r(s, a) + \gamma \EB_{s'|s, a} V_{\lambda}(s'),
\end{equation}
where $V_{\lambda}$ is the state value function derived from $Q_{\lambda}$,
\begin{equation}
\label{eq:V=Q+phi}
V_{\lambda}(s') = \EB_{a' \sim \pi(\cdot|s')}[ Q_{\lambda}(s', a') + \phi(\pi(a'|s'))].
\end{equation}
One can show that by repeatedly applying $\TP_{\lambda}^{\pi}$ to any initialized value function, the regualrized Q-value $Q_{\lambda}^{\pi}$ of the policy $\pi$ will be obtained.

In the policy improvement step, we wish to update the evaluated policy $\pio$ to an improved policy $\pin$  in terms of its regularized Q-values.Therefore for each state $s$ we update the policy according to
\begin{equation}
\label{eq:new-policy}
\pin(a|s) = \arg\max_{\pi} \EB_{a \sim \pi(\cdot|s)}[Q_{\lambda}^{\pio}(s, a)  + \lambda \phi(\pi(a|s))].
\end{equation}

If $\phi$ is good enough, we can find a closed form of $\pin$ for problem~\eqref{eq:new-policy}. For example, for Shannon entropy \cite{nachum2017bridging} with $\phi(x) = -\log(x)$, $\pin(a|s) \propto \exp(\frac{Q_{\lambda}^{\pio}}{\lambda})$; for Tsallis entropy \cite{lee2018sparse} with $\phi(x) =\frac{1}{2}(1-x)$, $\pin(a|s) = \max \left(\frac{Q_{\lambda}^{\pio}(s, a)}{\lambda} - \tau(\frac{Q_{\lambda}^{\pio}(s, \cdot)}{\lambda}), 0 \right)$, where $\tau(\frac{Q_{\lambda}^{\pio}(s, \cdot)}{\lambda}) = \frac{\sum_{a \in S(s, \lambda)}  \frac{Q_{\lambda}^{\pio}(s, a)}{\lambda} -1 }{|S(s, \lambda)|}$ is the normalization term and $S(s, \lambda)$ is the number of non-zero probability state-action pair. 
However, for a general $\phi$, it is unlikely to find a closed form of $\pin$. In that case the solution can be obtained through a numerical optimization method, since the maximization problem~\eqref{eq:new-policy} is a convex optimization whose domain is the probability simplex $\Delta_{\AP}$ and traditional convex solvers could solve it efficiently. 
Actually, in the experiments of RPI, for the regularization forms introduced in Section~\ref{sec:assumption} except the two examples mentioned above, there is no closed form and we use numerical optimization to improve the old policy.

Once evaluating and improving the current policy $\pio$, we can prove the resulting policy $\pin$ has a higher regularized Q-value than that of the old one. Therefore, by alternating the policy evaluation and the policy improvement, any initializing policy will provably converge to the optima  policy $\pi_{\lambda}^*$ under the framework of regularized MDPs (Theorem~\ref{thm:convergence}). 

\begin{theorem}
	\label{thm:convergence}
	For any policy $\pi_0$, by repeatedly applying policy evaluation and regularized policy improvement, $\pi_0$ will converge to the optimal policy $\pi_{\lambda}^*$ in the sense that $Q_{\lambda}^{\pi_{\lambda}^*}(s, a) \ge Q_{\lambda}^{\pi}(s, a)$ for all $\pi$ and $s \in \SP, a \in \AP$.
\end{theorem}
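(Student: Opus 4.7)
The plan is to mimic the classical policy iteration argument, adapted to handle the regularizer $\phi$. I will proceed in three steps: policy evaluation as a contraction, a monotone policy improvement lemma, and a monotone convergence argument identifying the limit with the optimum.

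First, \textbf{policy evaluation}: for any fixed $\pi$, the operator $\mathcal{T}_{\lambda}^{\pi}$ defined in \eqref{eq:operator-Q} is a $\gamma$-contraction on $\mathbb{R}^{\SP \times \AP}$ (the same calculation as Lemma~\ref{lem:contraction}, but with the maximizing policy replaced by the fixed $\pi$, so the $\lambda \phi$ terms cancel between the two images). By Banach's fixed point theorem, repeated application of $\mathcal{T}_{\lambda}^{\pi}$ starting from any initial $Q$ converges to the unique fixed point, which by \eqref{eq:Q=r+V} is exactly $Q_{\lambda}^{\pi}$.

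Second, \textbf{policy improvement}: I claim that $\pin$ defined by \eqref{eq:new-policy} satisfies $Q_{\lambda}^{\pin}(s,a) \ge Q_{\lambda}^{\pio}(s,a)$ for every $(s,a)$. The key single-step inequality is
\[
V_{\lambda}^{\pio}(s) \;=\; \EB_{a \sim \pio(\cdot|s)}\bigl[Q_{\lambda}^{\pio}(s,a) + \lambda \phi(\pio(a|s))\bigr] \;\le\; \EB_{a \sim \pin(\cdot|s)}\bigl[Q_{\lambda}^{\pio}(s,a) + \lambda \phi(\pin(a|s))\bigr],
\]
which holds by the defining maximization of $\pin$. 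Substituting this into $Q_{\lambda}^{\pio}(s,a) = r(s,a) + \gamma \EB_{s'|s,a} V_{\lambda}^{\pio}(s')$ and unrolling the recursion along the Markov chain induced by $\pin$ yields a telescoping bound that accumulates exactly to the series defining $Q_{\lambda}^{\pin}(s,a)$ (the extra $\lambda \phi(\pin(\cdot|\cdot))$ terms collected at each step are precisely those that appear in the definition of $V_{\lambda}^{\pin}$). Hence $Q_{\lambda}^{\pio} \le Q_{\lambda}^{\pin}$ pointwise.

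Third, \textbf{convergence to the optimum}: starting from $\pi_0$, generate $\{\pi_n\}$ by alternating policy evaluation and improvement. By the previous step $\{Q_{\lambda}^{\pi_n}\}$ is pointwise nondecreasing, and it is bounded above uniformly because rewards lie in $[0,R_{\max}]$ and the regularization bonus is dominated by $\tfrac{\lambda}{1-\gamma}\phi(\tfrac{1}{|\AP|})$ via Theorem~\ref{thm: max}. Thus $Q_{\lambda}^{\pi_n}$ converges pointwise to some $Q_{\infty}$ with associated state value $V_{\infty}$. Passing to the limit in the improvement step and using continuity of the regularized Bellman backup (guaranteed by Assumption~\ref{assump:1}) gives $V_{\infty} = \mathcal{T}_{\lambda} V_{\infty}$. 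By the contraction Lemma~\ref{lem:contraction}, this fixed point is unique and equals $V_{\lambda}^{*}$; hence $Q_{\infty} = Q_{\lambda}^{*}$ and the limit policy is $\pi_{\lambda}^{*}$, which gives the stated inequality $Q_{\lambda}^{\pi_{\lambda}^*}(s,a) \ge Q_{\lambda}^{\pi}(s,a)$.

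\textbf{Main obstacle.} The trickiest step is the telescoping in the policy improvement lemma: the regularization term $\lambda \phi(\pi(a|s))$ depends on the policy being evaluated, so when I unroll the Bellman recursion for $\pio$ but bound each stage using the expectation under $\pin$, I must make sure that the inductively accumulated $\lambda \phi(\pin(\cdot|\cdot))$ terms line up with those in the Bellman expansion of $Q_{\lambda}^{\pin}$ rather than of $Q_{\lambda}^{\pio}$. A secondary subtlety is justifying that the monotone limit of the value-function sequence is actually a fixed point of $\mathcal{T}_{\lambda}$, which uses continuity of the greedy improvement map; this is where Assumption~\ref{assump:1} (differentiability and expected concavity of $\phi$) ensures the improved policy depends continuously on $Q$ on a compact simplex.
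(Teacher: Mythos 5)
Your first two steps (policy evaluation as a $\gamma$-contraction, and the telescoping policy-improvement lemma) are exactly the paper's Lemmas in Appendix~\ref{ap:policy}, and your monotone-and-bounded argument for convergence of $Q_{\lambda}^{\pi_n}$ matches the paper as well. Where you diverge is in identifying the limit with the optimum: the paper takes the limit policy $\pi_{\lim}$, observes that at convergence it is greedy with respect to its own Q-value, and then \emph{re-runs} the policy-improvement telescoping argument against an arbitrary policy $\pi$ to conclude $Q_{\lambda}^{\pi} \le Q_{\lambda}^{\pi_{\lim}}$ directly; you instead show $V_{\infty} = \mathcal{T}_{\lambda} V_{\infty}$ and invoke uniqueness of the fixed point from Lemma~\ref{lem:contraction}. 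Both routes are valid, and yours is arguably cleaner since it reuses machinery already established for Theorem~\ref{thm:optimality}. One clarification on your ``main obstacle'': you do not need continuity of the greedy improvement map $Q \mapsto \pi$ (which is where Assumption~\ref{assump:1} would be doing delicate work), only continuity of the operator $\mathcal{T}_{\lambda}$ itself, which is automatic from its $\gamma$-Lipschitz (contraction) property. The cleanest way to pass to the limit is the sandwich $V_{\lambda}^{\pi_n} \le \mathcal{T}_{\lambda} V_{\lambda}^{\pi_n} \le V_{\lambda}^{\pi_{n+1}}$, where the left inequality holds because $\pi_n$ is feasible in the maximization defining $\mathcal{T}_{\lambda}$ and the right one follows from the improvement lemma; letting $n \to \infty$ and using Lipschitz continuity of $\mathcal{T}_{\lambda}$ gives $V_{\infty} = \mathcal{T}_{\lambda} V_{\infty}$ without any appeal to convergence of the policy sequence itself.
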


\section{Proof for Regularized Policy Iteration}
\label{ap:policy}
In this section, we give the proof of convergence of RPI. We first that repeatedly applying $\TP_{\lambda}^{\pi}$ to any initialized policy leads to the Q-value of a given policy $\pio$. Then we prove the policy improvement step will lead to a new policy $\pin$ which has higher Q-value than $\pio$. 

\begin{lemma}[Policy Evaluation]
Fix any policy $\pi$. Consider the Bellman backup operator $\TP_{\lambda}^{\pi}$ in~\eqref{eq:operator-Q}, for any initial Q-value $Q_0$, let $Q_n = \TP_{\lambda}^{\pi} Q_{n-1} (n \ge 1)$. Then $\liml_{n \to \infty} \| Q_n - Q_{\lambda}^{\pi}\|_{\infty} = 0$.
\end{lemma}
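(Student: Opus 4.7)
The plan is to invoke the Banach fixed point theorem on the operator $\TP_{\lambda}^{\pi}$, by showing it is a $\gamma$-contraction on the space of bounded Q-value functions (equipped with the sup-norm) and that $Q_{\lambda}^{\pi}$ is its unique fixed point. Convergence of $Q_n$ to $Q_{\lambda}^{\pi}$ at a geometric rate then follows immediately.

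First, I would verify the contraction property. Given two Q-value functions $Q_1, Q_2$, observe that the regularization term $\lambda \phi(\pi(a'|s'))$ appearing inside $V_{\lambda}$ via \eqref{eq:V=Q+phi} depends only on the fixed policy $\pi$, not on $Q$. Hence the difference
\begin{align*}
\TP_{\lambda}^{\pi} Q_1(s,a) - \TP_{\lambda}^{\pi} Q_2(s,a)
&= \gamma \, \EB_{s'|s,a}\bigl[\EB_{a'\sim \pi(\cdot|s')}[Q_1(s',a') - Q_2(s',a')]\bigr].
\end{align*}
Taking absolute values and using the fact that $\pi(\cdot|s')$ and $\PB(\cdot|s,a)$ are probability distributions gives $|\TP_{\lambda}^{\pi} Q_1(s,a) - \TP_{\lambda}^{\pi} Q_2(s,a)| \le \gamma \|Q_1 - Q_2\|_\infty$, so taking the supremum over $(s,a)$ yields $\|\TP_{\lambda}^{\pi} Q_1 - \TP_{\lambda}^{\pi} Q_2\|_\infty \le \gamma \|Q_1 - Q_2\|_\infty$.

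Next, I would check that $Q_{\lambda}^{\pi}$ is a fixed point. Expanding the definition in \eqref{eq:Q=r+V} and using the recursive structure $V_{\lambda}^{\pi}(s') = \EB_{a'\sim\pi(\cdot|s')}[Q_{\lambda}^{\pi}(s',a') + \lambda\phi(\pi(a'|s'))]$ (a direct consequence of the definitions of $V_{\lambda}^{\pi}$ and $Q_{\lambda}^{\pi}$), one sees $\TP_{\lambda}^{\pi} Q_{\lambda}^{\pi}(s,a) = r(s,a) + \gamma \EB_{s'|s,a} V_{\lambda}^{\pi}(s') = Q_{\lambda}^{\pi}(s,a)$. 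Combined with $\gamma \in [0,1)$, the contraction property guarantees this fixed point is unique in the Banach space of bounded functions (bounded because $r$ is bounded and $\phi$ is bounded on $[0,1]$ under Assumptions \ref{assump:1} and \ref{assump:control-entropy}).

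Finally, applying the contraction inequality inductively to $Q_n = \TP_{\lambda}^{\pi} Q_{n-1}$ with the fixed point $Q_{\lambda}^{\pi}$ yields $\|Q_n - Q_{\lambda}^{\pi}\|_\infty \le \gamma^n \|Q_0 - Q_{\lambda}^{\pi}\|_\infty \to 0$. I expect no real obstacles in this proof; the only thing to be careful about is making sure the underlying function space is complete (which it is, since bounded functions on $\SP\times\AP$ under $\|\cdot\|_\infty$ form a Banach space) so that the Banach fixed point theorem applies. This is essentially the same argument used in Lemma~\ref{lem:contraction} for $\TP_{\lambda}$, adapted from state values to Q-values for the fixed-policy operator $\TP_{\lambda}^{\pi}$.
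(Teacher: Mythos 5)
Your proposal is correct and follows essentially the same route as the paper: the paper's proof also asserts that $\TP_{\lambda}^{\pi}$ is a $\gamma$-contraction (deferring the verification to the analogue of Lemma~\ref{lem:contraction}), notes $Q_{\lambda}^{\pi} = \TP_{\lambda}^{\pi} Q_{\lambda}^{\pi}$, and iterates to get $\|Q_n - Q_{\lambda}^{\pi}\|_\infty \le \gamma^n \|Q_0 - Q_{\lambda}^{\pi}\|_\infty$. You actually spell out the contraction computation and the completeness of the function space more explicitly than the paper does, which only strengthens the argument.
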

\begin{proof}
Similar to Lemma~\ref{lem:contraction}, we can prove $\TP_{\lambda}^{\pi}$ is a $\gamma$ contraction. Note that $Q_{\lambda}^{\pi} = \TP_{\lambda}^{\pi}Q_{\lambda}^{\pi} $. Therefore we have that
\begin{align*}
\|&Q_n - Q_{\lambda}^{\pi}\|_{\infty}  = \|\TP_{\lambda}^{\pi} Q_{n-1} - \TP_{\lambda}^{\pi} Q_{\lambda}^{\pi}\|_{\infty} \le \gamma \|Q_{n-1} - Q_{\lambda}^{\pi}\|_{\infty}\le \cdots \le\gamma^n \|Q_0- Q_{\lambda}^{\pi}\|_{\infty}.
\end{align*}
When $n$ goes infinity, $Q_n$ will converge to the regularized Q-value of $\pi$.
\end{proof}
\begin{lemma}[Policy Improvement]
	\label{lem:policy-improvement}
	Let $\pio$ be the evaluated policy with $Q_{\lambda}^{\pio}$ its regularized Q-value and $\pin$ be the optimizer of the maximization problem defined in~\eqref{eq:new-policy}. Then $Q_{\lambda}^{\pio}(s, a) \le Q_{\lambda}^{\pin}(s, a)$ for all $s \in \SP$ and $a \in \AP$. 
\end{lemma}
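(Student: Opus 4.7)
The plan is to use the classical policy improvement argument, adapted to the regularized setting. The starting observation is that, by the definition of $\pin$ as the maximizer in~\eqref{eq:new-policy}, for every state $s$ we have the one-step inequality
\begin{equation*}
V_{\lambda}^{\pio}(s) \;=\; \EB_{a \sim \pio(\cdot|s)}\!\left[Q_{\lambda}^{\pio}(s,a) + \lambda\phi(\pio(a|s))\right] \;\le\; \EB_{a \sim \pin(\cdot|s)}\!\left[Q_{\lambda}^{\pio}(s,a) + \lambda\phi(\pin(a|s))\right],
\end{equation*}
using the expression for $V_{\lambda}^{\pio}$ in~\eqref{eq:V=Q+phi}. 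This is the only place where the optimality of $\pin$ is invoked, and it is the ``one-step advantage'' we will push through the entire trajectory.

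Next, I would unroll the Bellman equation for $Q_{\lambda}^{\pio}$ and repeatedly substitute the inequality above. Concretely, using $Q_{\lambda}^{\pio}(s,a) = r(s,a) + \gamma \EB_{s'|s,a} V_{\lambda}^{\pio}(s')$ in combination with the one-step inequality gives
\begin{equation*}
Q_{\lambda}^{\pio}(s,a) \;\le\; r(s,a) + \gamma\,\EB_{s'|s,a}\EB_{a'\sim\pin(\cdot|s')}\!\left[Q_{\lambda}^{\pio}(s',a') + \lambda\phi(\pin(a'|s'))\right].
\end{equation*}
Expanding $Q_{\lambda}^{\pio}(s',a')$ in the same fashion and iterating $n$ times yields, where the expectation is over trajectories generated by $\pin$ starting from $s_0=s,\, a_0=a$,
\begin{equation*}
Q_{\lambda}^{\pio}(s,a) \;\le\; \EB^{\pin}\!\left[\,r(s_0,a_0) + \sum_{t=1}^{n}\gamma^{t}\bigl(r(s_t,a_t) + \lambda\phi(\pin(a_t|s_t))\bigr) + \gamma^{n+1} V_{\lambda}^{\pio}(s_{n+1})\,\Big|\,s_0{=}s,\,a_0{=}a\right].
\end{equation*}

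Finally I would let $n\to\infty$. The partial sum converges to $Q_{\lambda}^{\pin}(s,a)$ by definition of the regularized Q-value, and the residual term $\gamma^{n+1} V_{\lambda}^{\pio}(s_{n+1})$ vanishes, giving the desired $Q_{\lambda}^{\pio}(s,a) \le Q_{\lambda}^{\pin}(s,a)$. The main technical obstacle is controlling the tail: we must ensure that $V_{\lambda}^{\pio}$ is uniformly bounded so that $\gamma^{n+1} V_{\lambda}^{\pio}(s_{n+1}) \to 0$. This follows from $r \le R_{\max}$ together with the bound $\EB_{a\sim\pi(\cdot|s)}\phi(\pi(a|s)) = H_{\phi}(\pi) \le \phi(1/|\AP|)$ established in the proof of Theorem~\ref{thm: max}, so that $\|V_{\lambda}^{\pio}\|_{\infty} \le (R_{\max} + \lambda\phi(1/|\AP|))/(1-\gamma) < \infty$ under Assumptions~\ref{assump:1} and~\ref{assump:control-entropy}. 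A minor subtlety is that the entropy bonus is absent from the zeroth time step because $a_0$ is fixed; this is why the recursion produces $Q_{\lambda}^{\pin}(s,a)$ rather than $V_{\lambda}^{\pin}(s)$, matching the statement of the lemma exactly.
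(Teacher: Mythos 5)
Your proposal is correct and follows essentially the same route as the paper's proof: the one-step inequality from the optimality of $\pin$, repeated unrolling of the Bellman equation along trajectories generated by $\pin$, and passing to the limit using boundedness of the regularized value. Your handling of the tail term is in fact slightly more careful than the paper's, which simply invokes the bound $R_{\max}/(1-\gamma)$ without accounting for the regularization bonus.
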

\begin{proof}
 Since $\pin$ is the maximizer of the problem defined in~\eqref{eq:new-policy}, it follows that for all states and actions
 \begin{align*}
 \EB_{a \sim \pio}[Q_{\lambda}^{\pio}(s, a)  + \lambda \phi(\pio(a|s))]\le \EB_{a \sim \pin}[Q_{\lambda}^{\pio}(s, a)  + \lambda \phi(\pin(a|s))]
 \end{align*}
 
Let $\tau_t = (s_0, a_0, \cdots, s_t, a_t)$ denotes the trajectory and $\tau$ is the whole trajectory (with infinite horizon). $\tau \sim \pio$ means the trajectory is generated by $\pio$. It follows that 
\begin{align*}
&Q_{\lambda}^{\pio}(s_0, a_0)\\
&=\EB [ r(s_0, a_0) + \gamma \EB_{s_1|\tau_0} V_{\lambda}^{\pio}(s_1) ]\\
&=\EB [ r(s_0, a_0) + \gamma \EB_{s_1|\tau_0} \EB_{a_1\sim \pio}[Q_{\lambda}^{\pio}(s_1, a_1)  + \lambda \phi(\pio(a_1|s_1))] ]\\
&\le \EB [ r(s_0, a_0) + \gamma \EB_{s_1|\tau_0} \EB_{a_1\sim \pin}[Q_{\lambda}^{\pio}(s_1, a_1)  + \lambda \phi(\pin(a_1|s_1))] ]\\
&= \EB_{\tau_1 \sim \pin} [ r(s_0, a_0) +  \gamma (r(s_1, a_1) + \lambda \phi(\pin(a_1|s_1) ))\quad  + \gamma^{2} \EB_{s_{2}| \tau_1} V_{\lambda}^{\pio}(s_{2}) ]\\
& \le \EB_{\tau_n \sim \pin} [ r(s_0, a_0) + \sum_{t=1}^n \gamma^t(r(s_t, a_t) + \lambda \phi(\pin(a_t|s_t) ))  + \gamma^{n+1} \EB_{s_{n+1}| \tau_n} V_{\lambda}^{\pio}(s_{n+1}) ]\\
& \le  \EB_{\tau \sim \pin} [ r(s_0, a_0) + \sum_{t=1}^{\infty} \gamma^t(r(s_t, a_t) + \lambda \phi(\pin(a_t|s_t) ))]\\
&=Q_{\lambda}^{\pin}(s_0, a_0),
\end{align*}
where the last inequality is because we repeatedly expanded $Q_{\lambda}^{\pio}$ on the RHS by applying~\eqref{eq:V=Q+phi} and $Q_{\lambda}^{\pio}$ is bounded by $\frac{R_{\max}}{1-\gamma}$. 
\end{proof}

\begin{proof}{\textbf{for Theorem~\ref{thm:convergence}}}
	Let $\pi_i$ be the policy at iteration $i$ of RPI. By Lemma~\ref{lem:policy-improvement}, the sequence $Q_{\lambda}^{\pi_i }$ is monotonically increasing. Since $Q_{\lambda}^{\pi_i}$ is bounded by $\frac{R_{\max}}{1-\gamma}$ for any policy $\pi_i$, therefore $Q_{\lambda}^{\pi_i }$ will converge to a limit, denoted by $Q_{\lambda}^{\lim}$. Let $\pi_{\lim}= \argmax_{\pi} \EB_{a \sim \pi(\cdot|s)}[Q_{\lambda}^{\lim}(s, a)  + \lambda \phi(\pi(a|s))]$. It is obvious that $Q_{\lambda}^{\pi_{\lim}} = Q_{\lambda}^{\lim}$. We aim to prove $\pi_{\lim} = \pi_{\lambda}^*$. To that end, we only need to prove $Q_{\lambda}^* = Q_{\lambda}^{\lim}$. For one hand, $Q_{\lambda}^{\pi_{\lim}}(s, a)=\liml_{n \to \infty}Q_{\lambda}^{\pi_i} (s, a)\le Q_{\lambda}^*(s, a) = Q_{\lambda}^{\pi_{\lambda}^*}(s, a)$. For another hand, at convergence, it must be the case that for all policy $\pi$, 
	\[   \EB_{a \sim \pi}[Q_{\lambda}^{\pi_{\lim}}(s, a)  + \lambda \phi(\pi(a|s))]
	 \le \EB_{a \sim \pi_{\lim}}[Q_{\lambda}^{\pi_{\lim}}(s, a)  + \lambda \phi(\pi_{\lim}(a|s))].  \]
	 Using the same iterative argument as in the proof of Lemma~\ref{lem:policy-improvement}, we get $Q_{\lambda}^{\pi_{\lambda}^*}(s, a) \le Q_{\lambda}^{\pi_{\lim}}(s, a)$ for all states and actions. Putting above results together, it follows that $Q_{\lambda}^* = Q_{\lambda}^{\lim}$ therefore $\pi_{\lim} = \pi_{\lambda}^*$.
\end{proof}

\section{Experiment Details}\label{ap:exp}
\subsection{Discrete Environments}
\label{ap:exp_discrete}

\subsubsection{Environment setup}
For the random MDP model, we choose $|\AP|=50$, $|\SP|=10$ and $\gamma=0.99$. Each state is assigned an index ranging from $0$ to $49$. The transition probabilites are generated by uniform distribution $[0,1]$ and each entry of transition is clipped as zero with probability $0.95$. Then each row of the clipped matrix is scaled to a probability distribution. The state we monitored is the state with index zero. The rewards are generated by uniform distribution $[0,1]$. The initial Q-value is generated by uniform distribution $[0,10]$ and policies are calculated explicitly or implicitly from Q-values.

For $(2N-1) \times (2N-1) $ GridWorld model, we choose $N=10$ and $\gamma=0.99$. The action space includes four actions (left, right, up, down). Each grid is indexed by an Cartesian coordinates $(x, y)$ with $x$ the row index and $y$ the column index. $x$ and $y$ are all range from $-(N-1)$ to $N-1$. The agent is initialized at the origin $(0, 0)$. Once it achieves four corners (i.e., $\pm(N-1) \times \pm (N-1)$), a reward with value $1$ will be obtained. Otherwise, no reward will be given. Due to the symmetry of GridWorld, we are interesting on the three states $(0, 0), (0, N/2), (N/2, N/2)$. In the origin $(0, 0)$, all actions should be equal. While the agent locates at $(0, N/2)$ or $(N/2, N/2)$, the optimal policy should put more probability mass on the action which could lead to 

\subsubsection{Optimization}
In this section, we detail how we conduct RPI(Appendix~\ref{ap: rpi}) in two discrete environments. Given a regularization function, we run 500 iterations of RPI that alternates between policy evaluation and policy improvement. 

\paragraph{Policy evaluation}  Since in our experiments the transition probability is known, the evaluation of a given policy is conducted by DP. Specifically, let $\mathbb{P}^{\pi} \in \mathbb{R}^{|\SP| \times |\SP|}$ denote the transition matrix deduced from $\pi$, i.e., $\mathbb{P}^{\pi}(s, s') = \sum_{a'} \pi(a'|s)\mathbb{P}(s'|a', s)$ and $r_{\lambda}^{\pi} \in \mathbb{R}^{|\SP|}$ the reward vector deduced from $\pi$, i.e., $r_{\lambda}^{\pi}(s) = \sum_{a'}r(s, a')\pi(a'|s)$. Then the regularized state value function $V_{\lambda}^{\pi}$ is solved from
\[ V_{\lambda}^{\pi}  = r_{\lambda}^{\pi} + \gamma \mathbb{P}^{\pi} V_{\lambda}^{\pi}  \quad \Rightarrow \quad V_{\lambda}^{\pi} =  (1- \gamma \mathbb{P}^{\pi})^{-1}  r_{\lambda}^{\pi}\]
where by slightly notation abuse, $V_{\lambda}^{\pi} \in \mathbb{R}^{|\mathcal{S}|}$ is the vector with each coordinate $V_{\lambda}^{\pi}(s)$. Then $Q_{\lambda}^{\pi}$ can be computed from $V_{\lambda}^{\pi}$ by definition~\eqref{eq:Q=r+V}.

\paragraph{Policy improvement} The policy improvement step involves an possibly intricate convex optimization~\eqref{eq:new-policy}. Here we detail how we solve the involved convex optimization.

Let $Q_{\lambda}^{\pio}$ denote the already evaluated Q-value function of $\pio$. For $\phi(x)=\frac{1}{2}(1-x)$, since the improved policy has an explicit form \cite{lee2018sparse}. However, for $\phi(x)=\cos(\frac{\pi}{2}x)$ and $\phi(x)=\exp(1)-\exp(x)$ which do not have an closed form and their corresponding $g_{\phi}$ are hard to formulate, thus we solve the convex optimization problem~\eqref{eq:new-policy} directly. Specifically, for each $s \in \SP$, we solve
\[ \max_{\pi} \sum_{a} \pi(a|s) Q_{\lambda}^{\pio}(s, a) + \lambda \sum_{a} \pi(a|s) \phi(\pi(a|s)).   \]
In practice, we use CVXOPT \cite{vandenberghe2010cvxopt} package to compute the improved policy.

\subsubsection{Regularizers}
We test four \textit{basic} regularizers, including $-\log x$, $\frac{1}{2}(1-x)$, $\cos(\frac{\pi}{2}x)$ and $\exp(1)-\exp(x)$. From Proposition~\ref{prop:phi-operation} and~\ref{prop:phi-operation-}, we can combine different basic regularizers to more complicated ones, which we term as \textit{combined} regularizers. We test the following three combined regularizers, (1) \texttt{min}: the minimum of \texttt{tsallis} and \texttt{shannon}, i.e., $\min\{-\log(x), 2(1-x)\}$, (2) \texttt{poly}: the positive addition of two polynomial functions, i.e., $\frac{1}{2}(1-x) + (1-x^2)$ and (3) \texttt{mix}: the positive addition of \texttt{tsallis} and \texttt{shannon}, i.e., $-\log(x) + \frac{1}{2}(1-x)$. We draw these seven regularizers and their corresponding $f_{\phi}'$ respectively in Figure~\ref{phis} (a) and (b).

\begin{figure*}[ht]
    \centering
    \subfloat[Different $\phi$'s]{
    \includegraphics[width=0.25\textwidth] {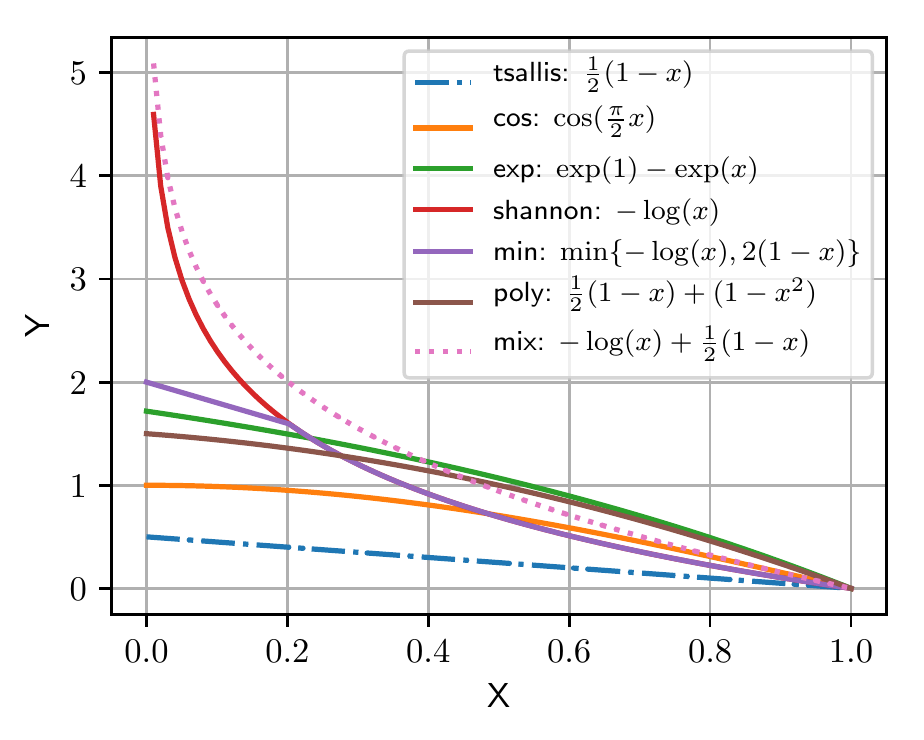}}
    \hspace{-.1in} 
    \subfloat[$f_{\phi}'$ for different $\phi$'s]{
    \includegraphics[width=0.25\textwidth] {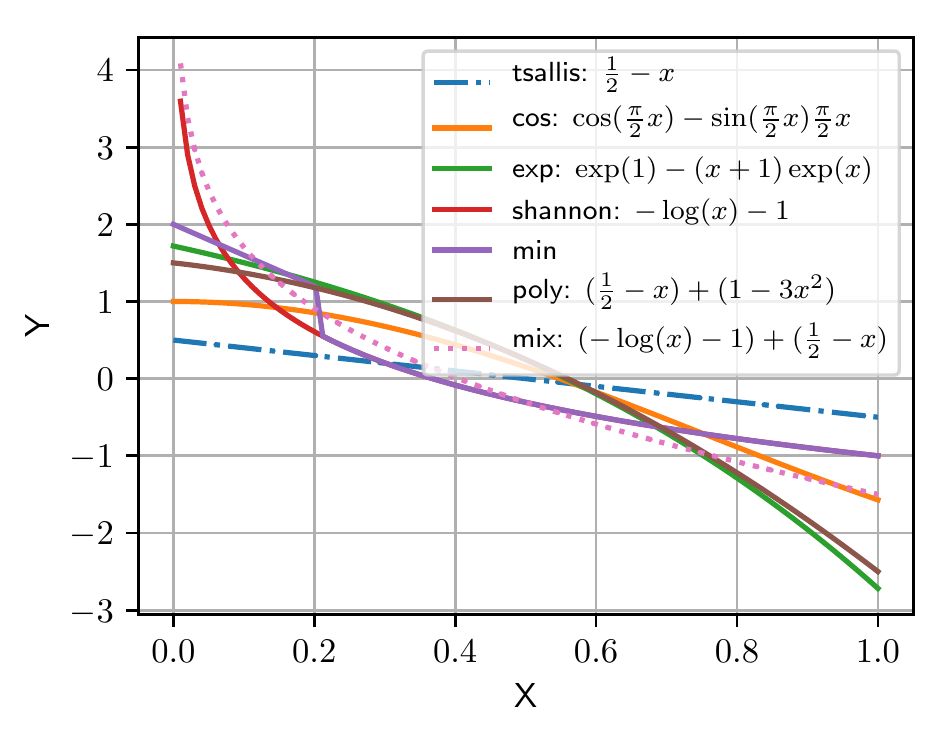}}
    \hspace{-.1in} 
    \subfloat[Random MDP]{
    \includegraphics[width=0.25\textwidth] {fig/random-action-ratio.pdf}}
    \hspace{-.1in} 
    \subfloat[Gridworld]{
    \includegraphics[width=0.25\textwidth] {fig/multigrid-action-ratio.pdf}}\\
    \caption{ (a) plots seven different regularization forms we will investigate. (b) shows the plot of $f_{\phi}' = \phi + x \phi'$ for corresponding regularizers. We prefer finite $f_{\phi}'(0)$ since it implies the optimal policy has a potentially sparse distribution if $\lambda$ is appropriately selected. (c) and (d) shows the results of the sparsity $\delta$ of the optimal policy on two envorinments (Random MDP and Gridworld). }
    \label{phis}
\end{figure*}

\subsubsection{Results for Random MDP}
Figure~\ref{random_action_dis} shows the probability mass of all actions in the optimal policy at selected state. When $\lambda$ is
small, all regularizers except shannon have some zero-probability actions. When $\lambda$ is just over 2, $\texttt{exp}$ and $\texttt{tsallis}$
already have a full action support set. By contrast, $\texttt{cos}$ is still sparse enough, implying the trigonometric function $\texttt{cos}$
has a stronger ability in modeling sparseness. In the extreme case where $\lambda$ is sufficiently large, the optimal policy will
converge to a uniform distribution on the action space as we expect.

\begin{figure*}[ht]
    \centering
    \subfloat[\texttt{cos}: $\cos(\frac{\pi}{2}x)$]{
    \includegraphics[width=0.25\textwidth] {fig/random-cosx-action-dis.pdf}}
    \hspace{-.1in} 
    \subfloat[\texttt{exp}: $\exp(1)-\exp(x)$]{
    \includegraphics[width=0.25\textwidth] {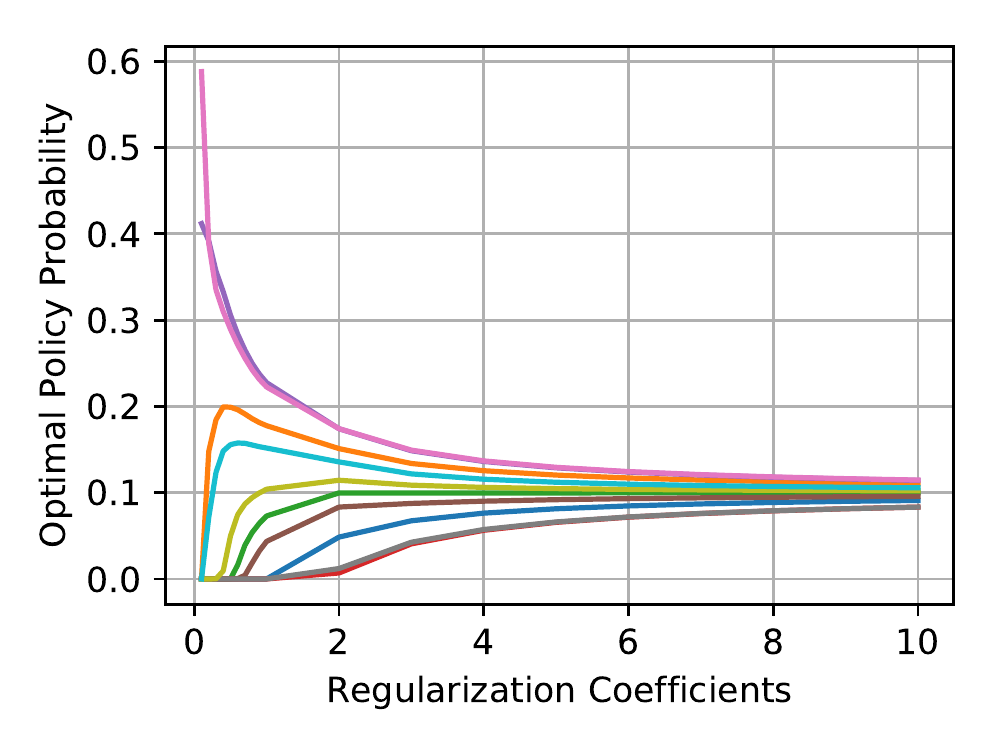}}
    \hspace{-.1in} 
    \subfloat[\texttt{tsallis}: $\frac{1}{2}(1-x)$]{
    \includegraphics[width=0.25\textwidth] {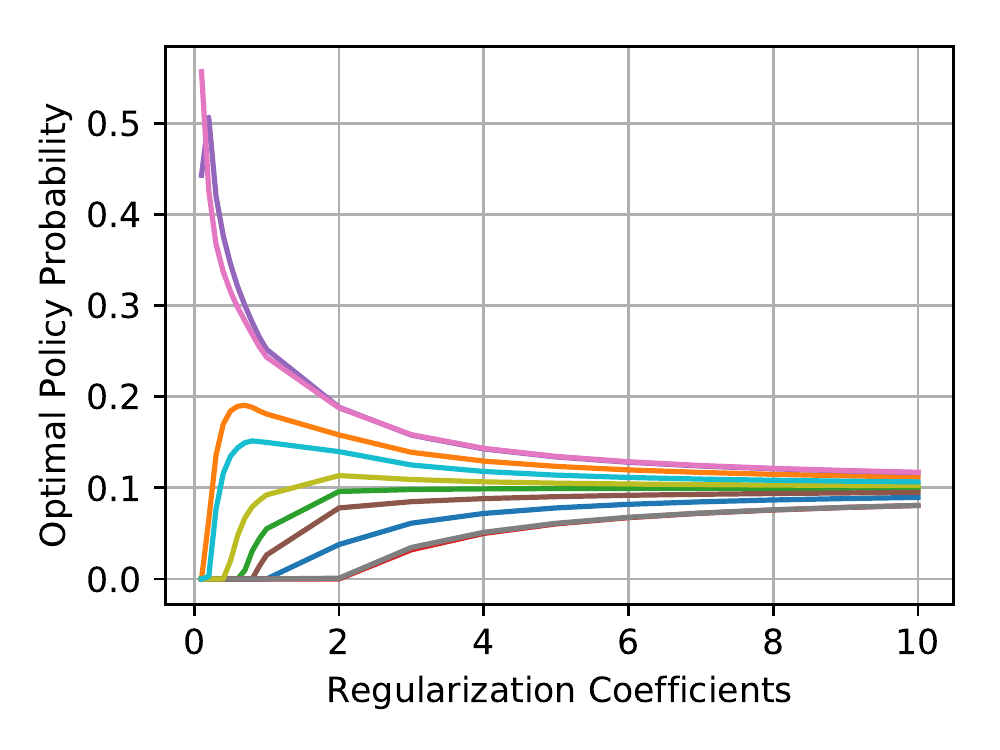}}
    \hspace{-.1in} 
    \subfloat[\texttt{shannon}: $-\log x$]{
    \includegraphics[width=0.25\textwidth] {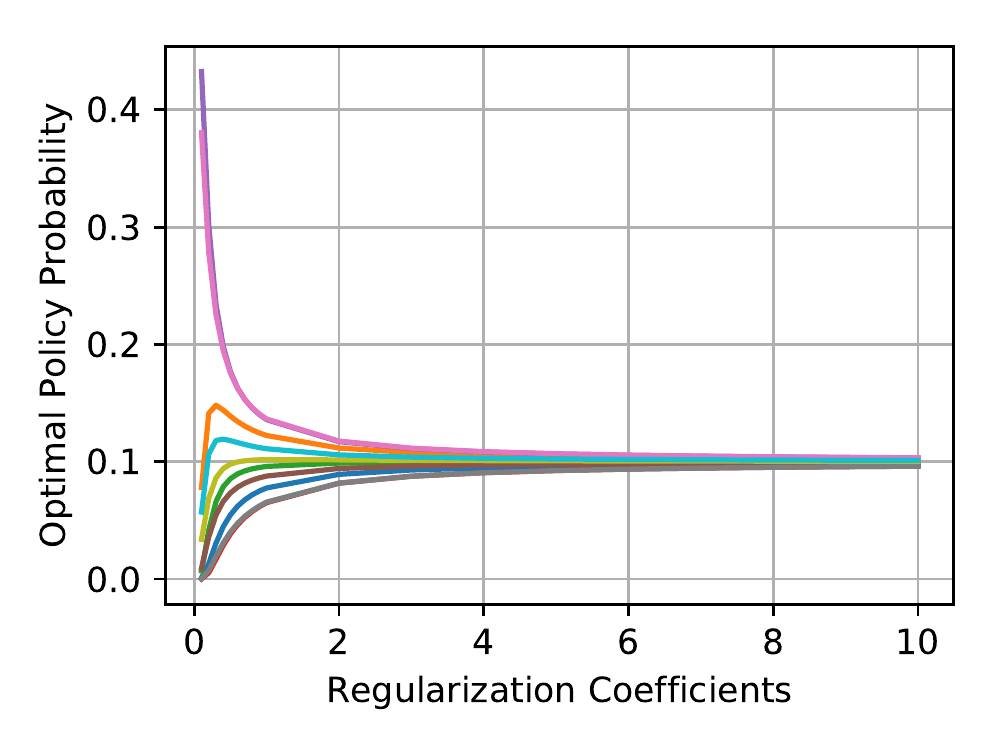}}
    \hspace{-.1in} \\
    \subfloat[\texttt{min}:$\min\{-\log(x),2(1-x)\}$]{
    \includegraphics[width=0.25\textwidth] {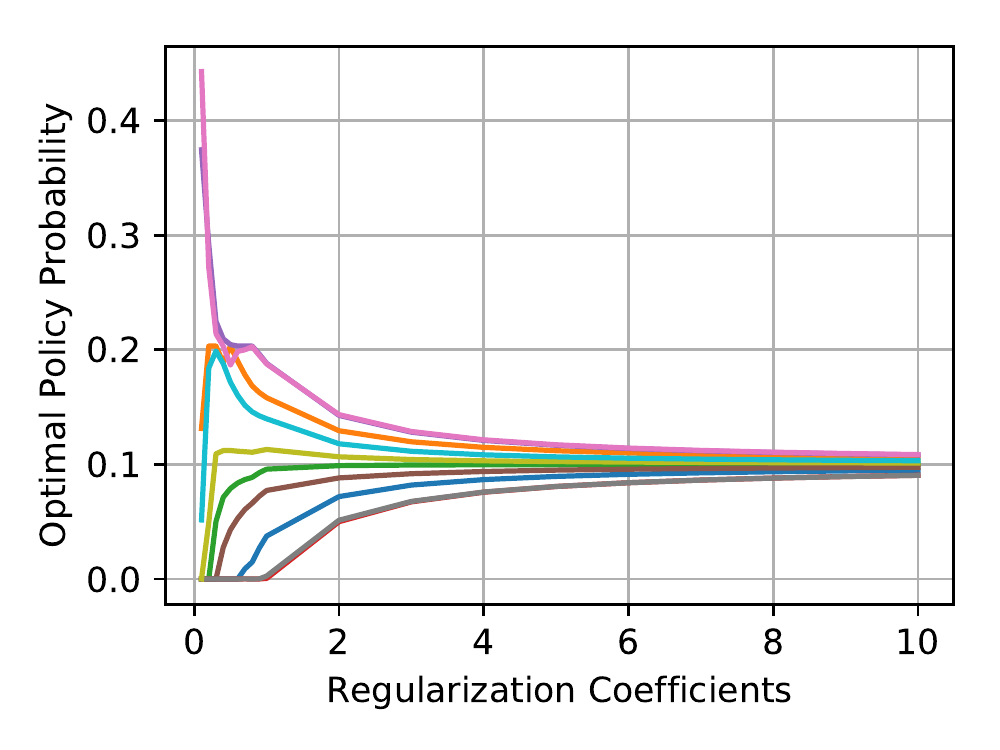}}
    \subfloat[\texttt{poly}: $\frac{1}{2}(1-x) + (1-x^2)$]{
    \includegraphics[width=0.25\textwidth] {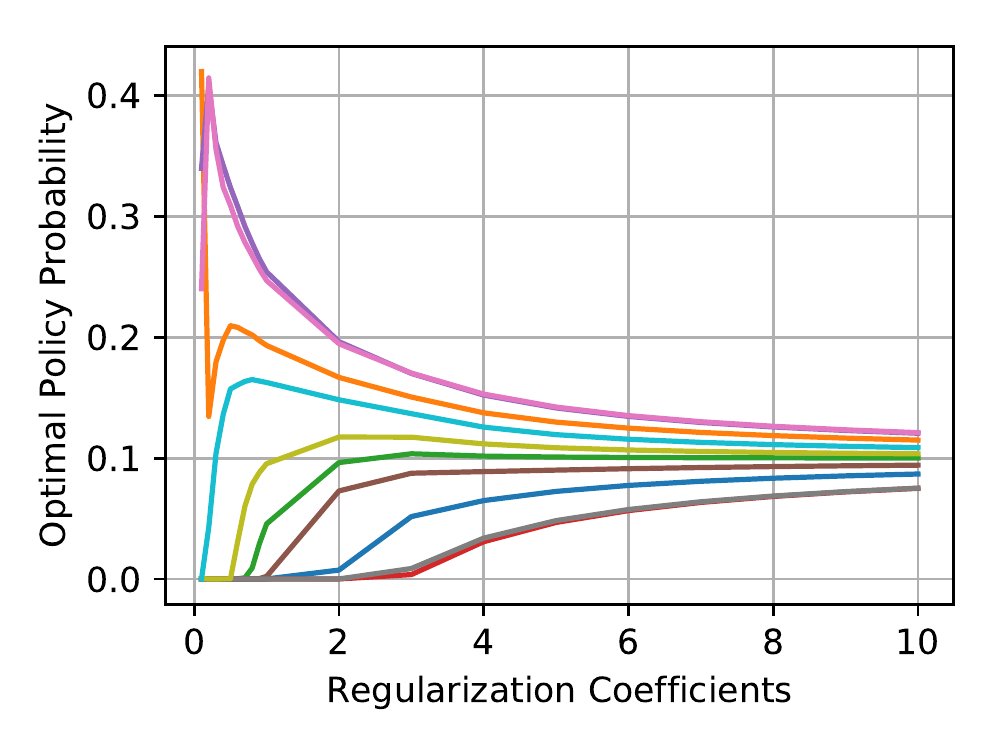}}
    \subfloat[\texttt{mix}: $-\log(x) + \frac{1}{2}(1-x)$]{
    \includegraphics[width=0.25\textwidth] {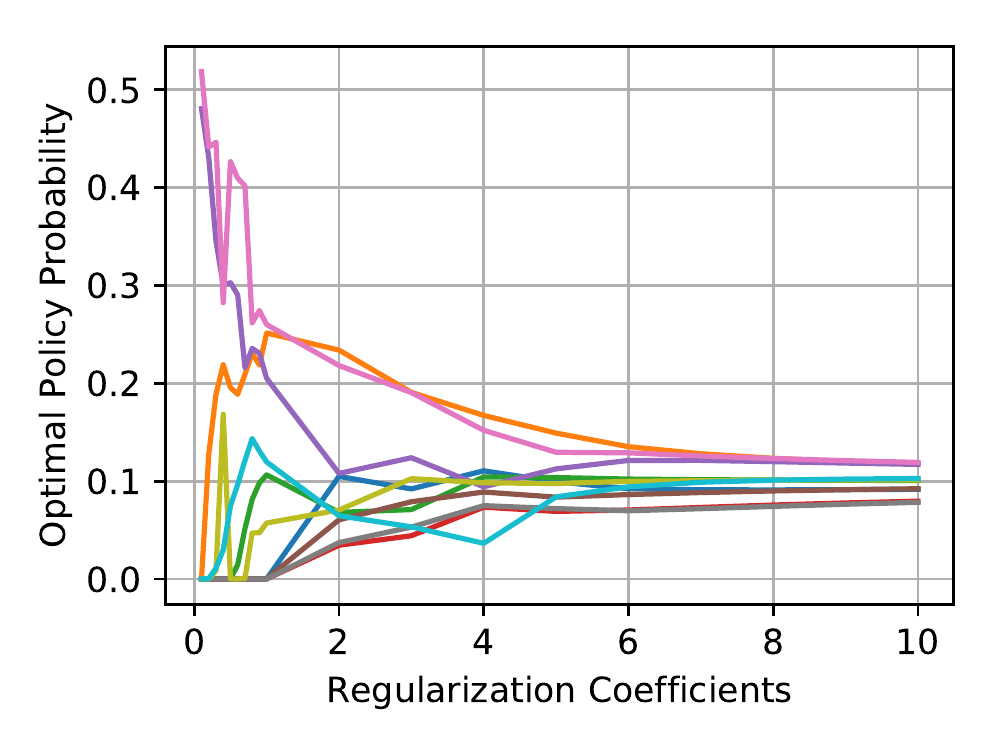}}
    \caption{(a)-(g) shows the changing process of the probability mass on each action in the optimal policy in a random MDP where $|\mathcal{A}|=10$. There are totally ten colored curves in each figure with one color representing one action. }
        \label{random_action_dis}
\end{figure*}

\subsubsection{Results for Gridworld}
Figure~\ref{multi_action} shows the probability mass of four actions in the optimal policy at selected three states. When $\lambda$ is large, the optimal policies tend to uniform distribution. We show the result of three combined regularizars in Figure~\ref{multi_combine_action}. It can be seen from these figures that in the regime of low $\lambda$, the optimal policy at different states show different preferrence. As shown in Random MDP, $\mathrm{cos}$ still has the strongest sparseness power.

\begin{figure*}[ht]
    \centering
    \subfloat[\texttt{cos}: $\cos(\frac{\pi}{2}x)$]{
    \includegraphics[width=0.25\columnwidth]{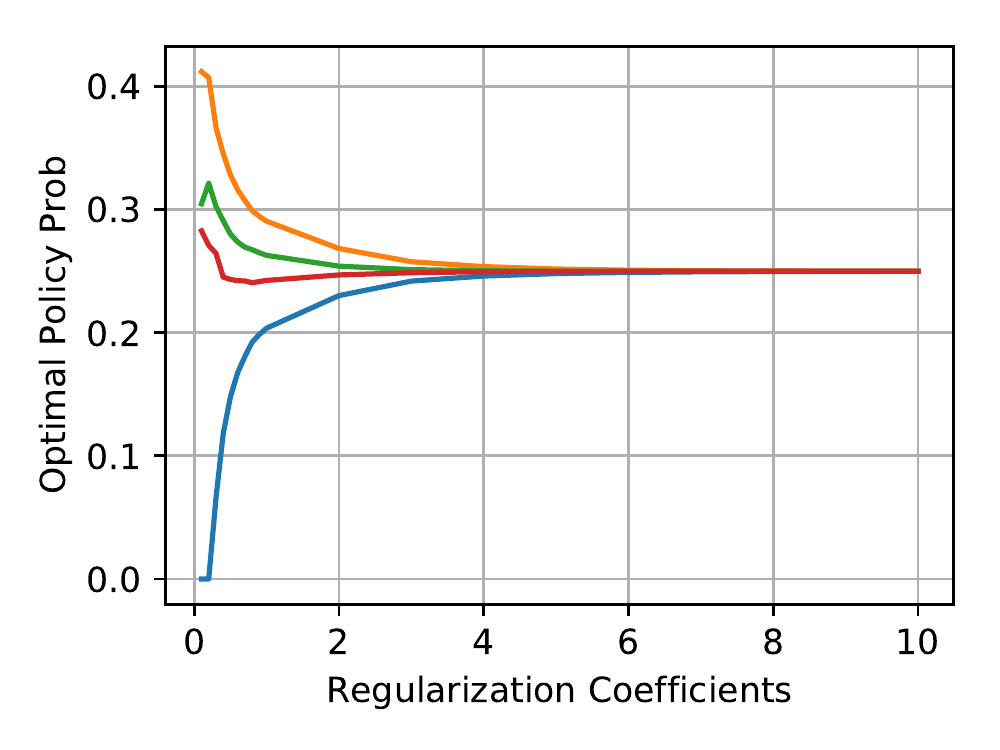}}
    \hspace{-.1in} 
    \subfloat[\texttt{exp}: $\exp(1)-\exp(x)$]{
    \includegraphics[width=0.25\columnwidth]{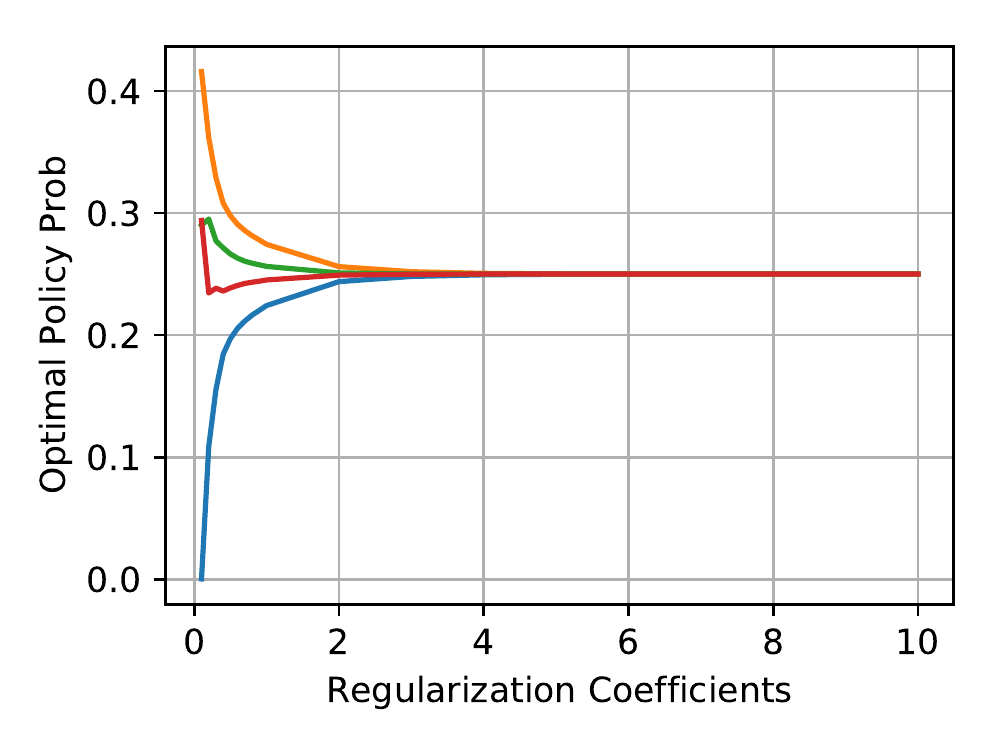}}
    \hspace{-.1in} 
    \subfloat[\texttt{tsallis}: $\frac{1}{2}(1-x)$]{
    \includegraphics[width=0.25\columnwidth]{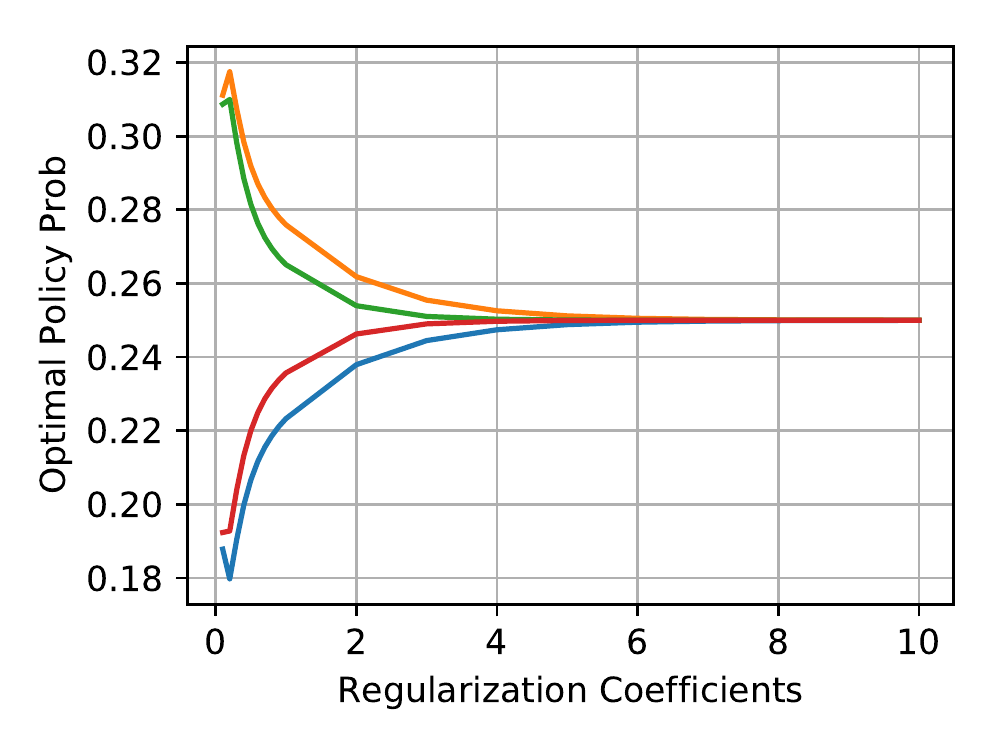}}
    \hspace{-.1in} 
    \subfloat[\texttt{shannon}: $-\log x$]{
    \includegraphics[width=0.25\columnwidth]{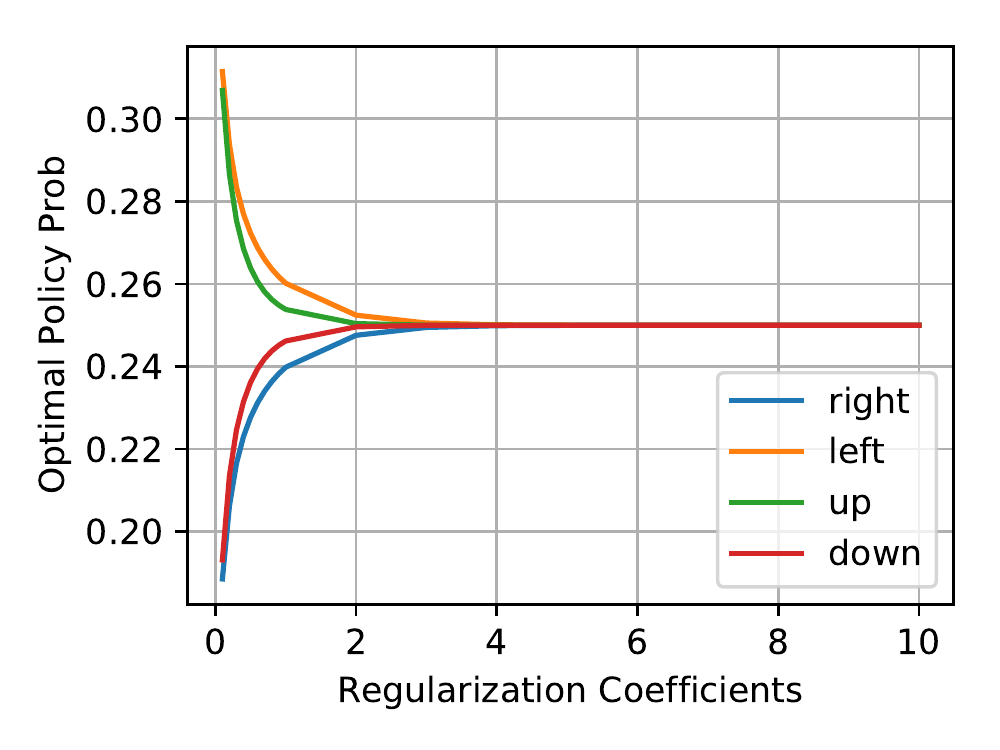}}
    \hspace{-.1in} \\
    \subfloat[\texttt{cos}: $\cos(\frac{\pi}{2}x)$]{
     \includegraphics[width=0.25\columnwidth]{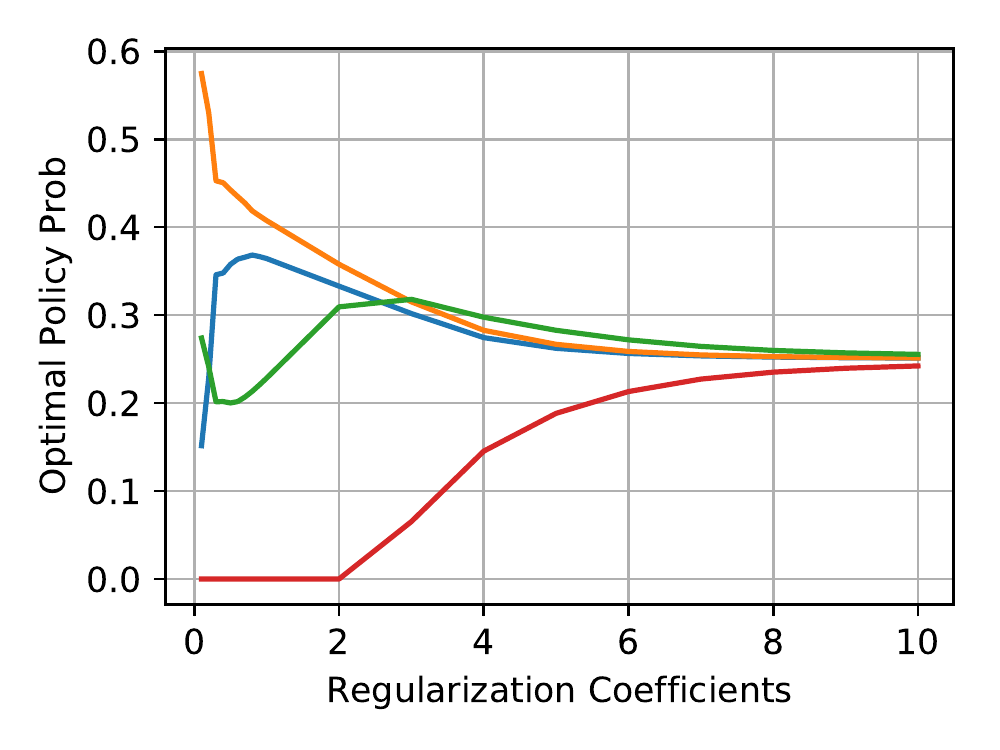}}
    \hspace{-.1in} 
    \subfloat[\texttt{exp}: $\exp(1)-\exp(x)$]{
    \includegraphics[width=0.25\columnwidth]{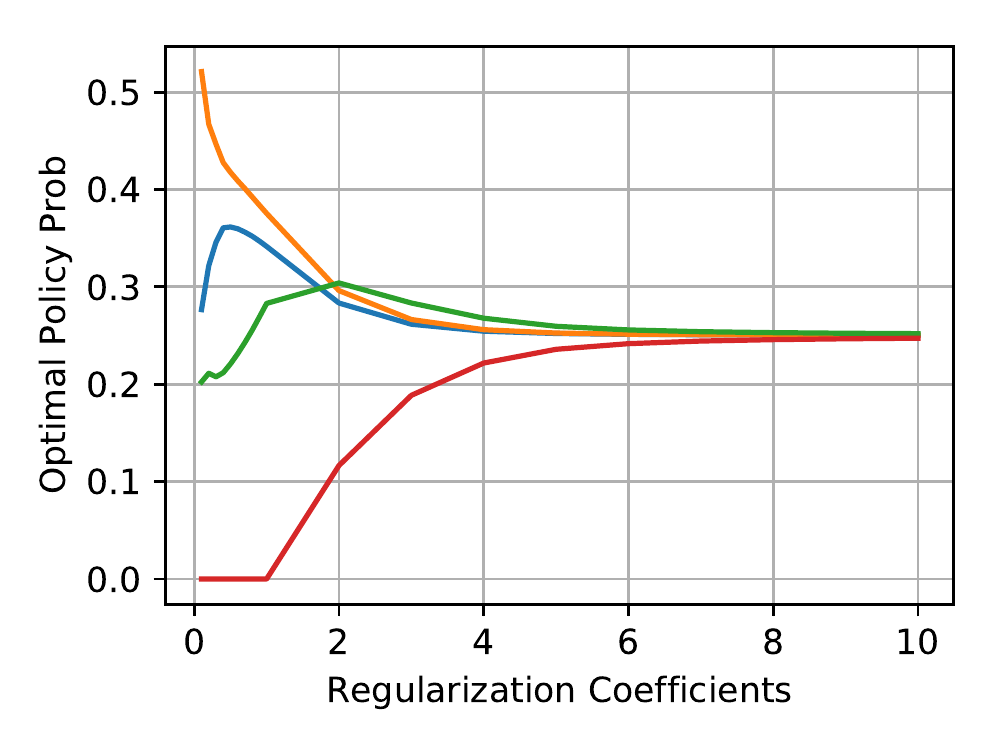}}
    \hspace{-.1in} 
    \subfloat[\texttt{tsallis}: $\frac{1}{2}(1-x)$]{
    \includegraphics[width=0.25\columnwidth]{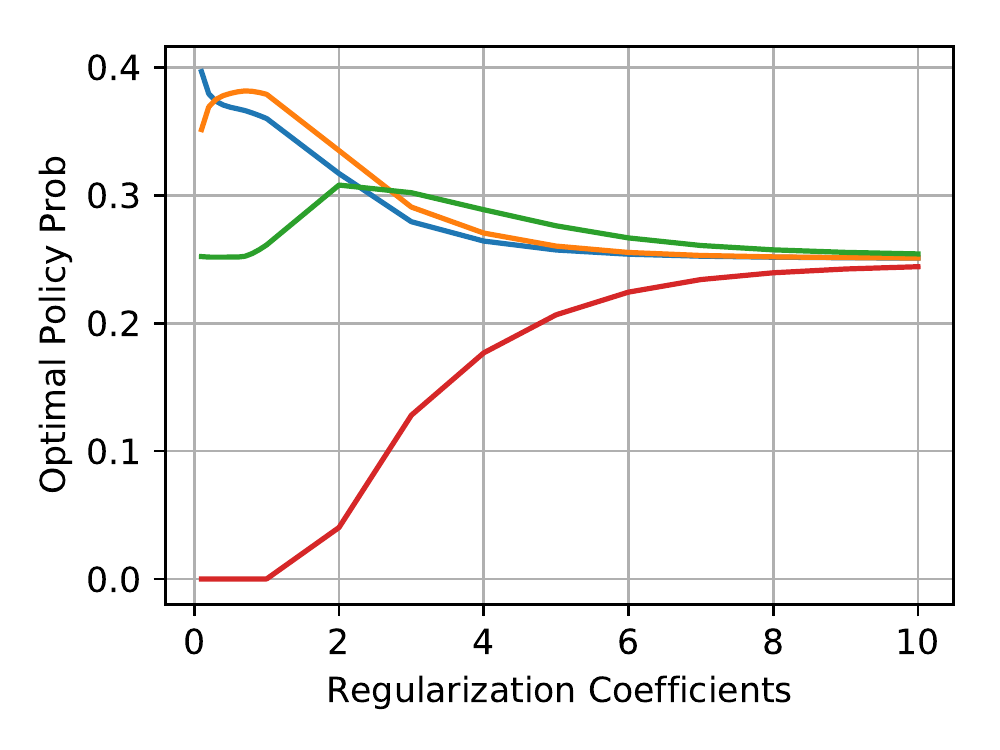}} \hspace{-.1in} 
    \subfloat[\texttt{shannon}: $-\log x$]{
    \includegraphics[width=0.25\columnwidth]{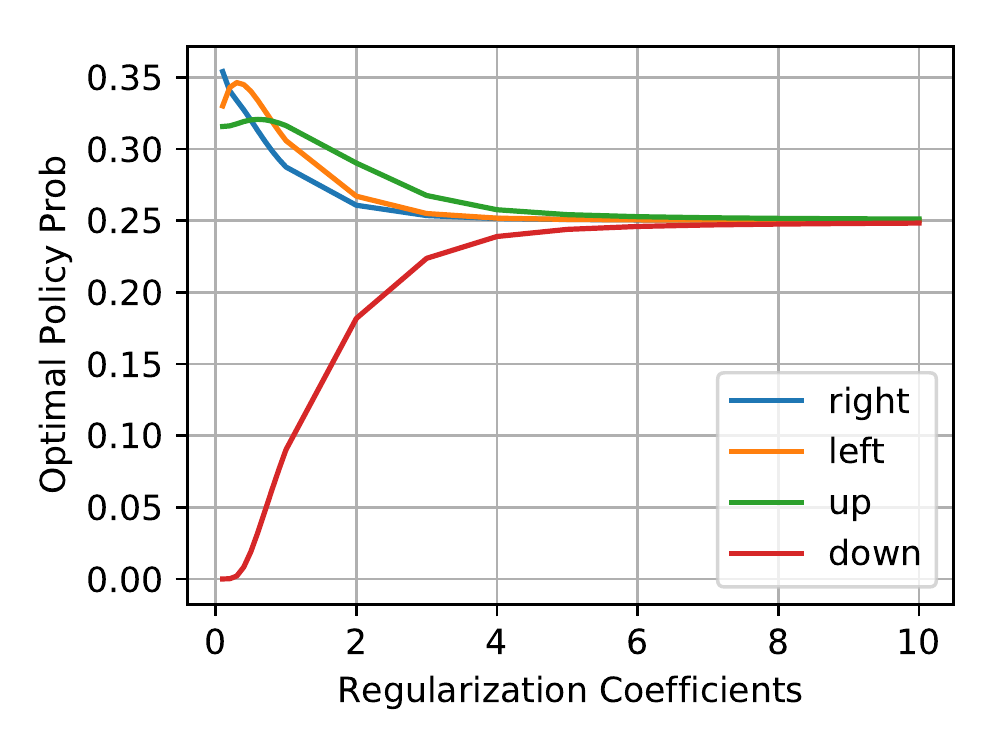}}
    \hspace{-.1in} \\
    \subfloat[\texttt{cos}: $\cos(\frac{\pi}{2}x)$]{
    \includegraphics[width=0.25\columnwidth]{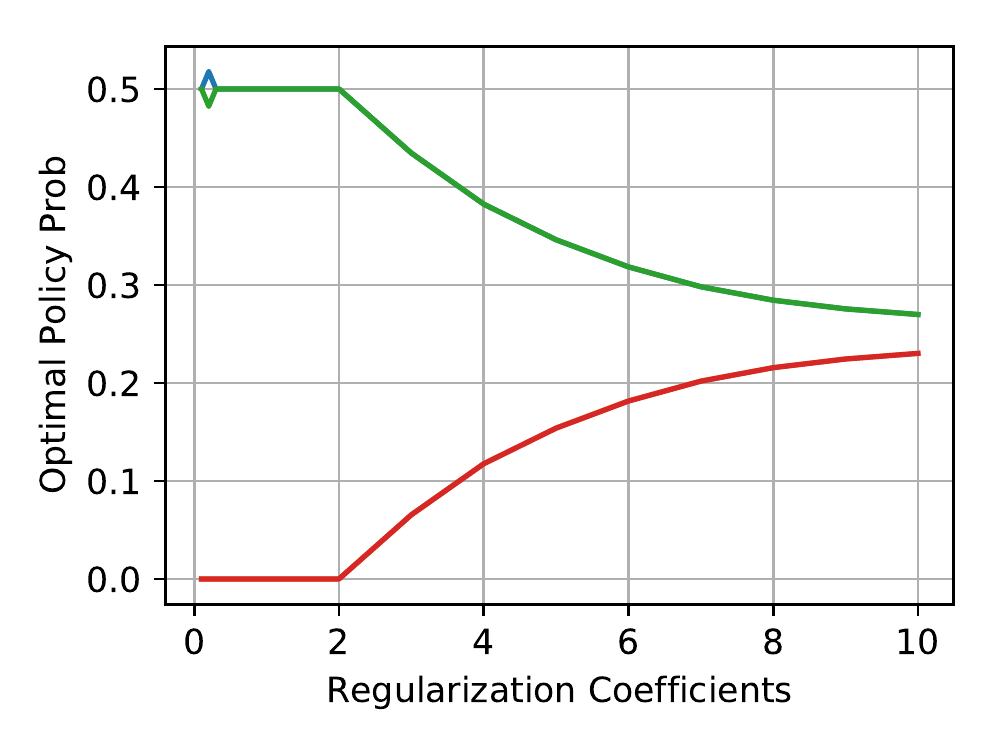}}
    \hspace{-.1in} 
    \subfloat[\texttt{exp}: $\exp(1)-\exp(x)$]{
    \includegraphics[width=0.25\columnwidth]{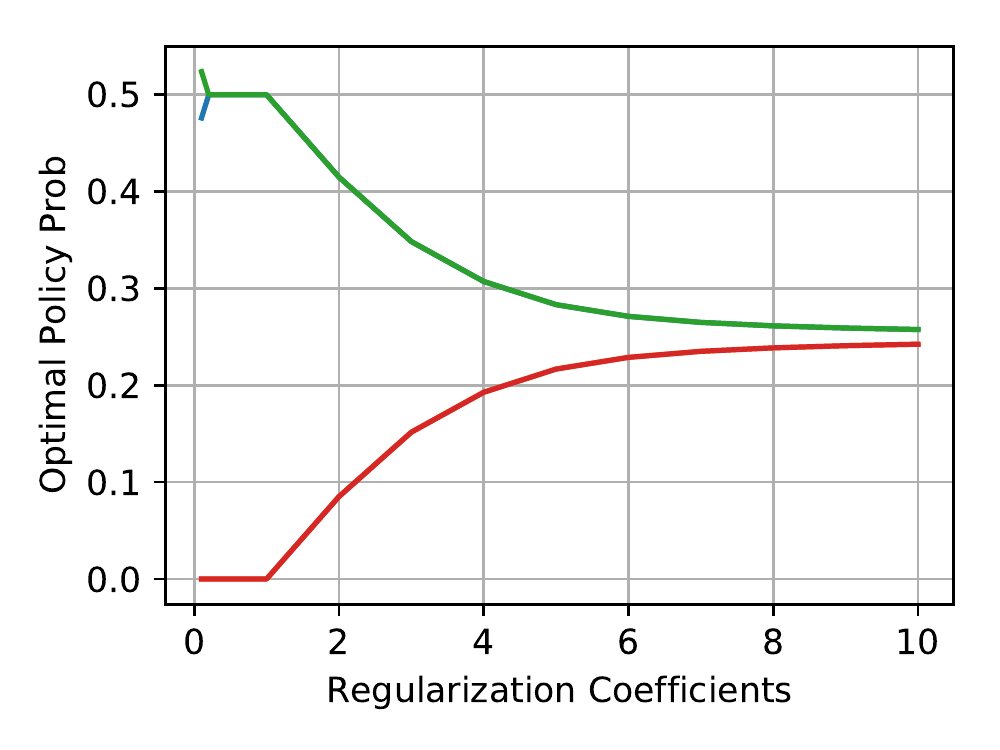}}
    \hspace{-.1in} 
    \subfloat[\texttt{tsallis}: $\frac{1}{2}(1-x)$]{
    \includegraphics[width=0.25\columnwidth]{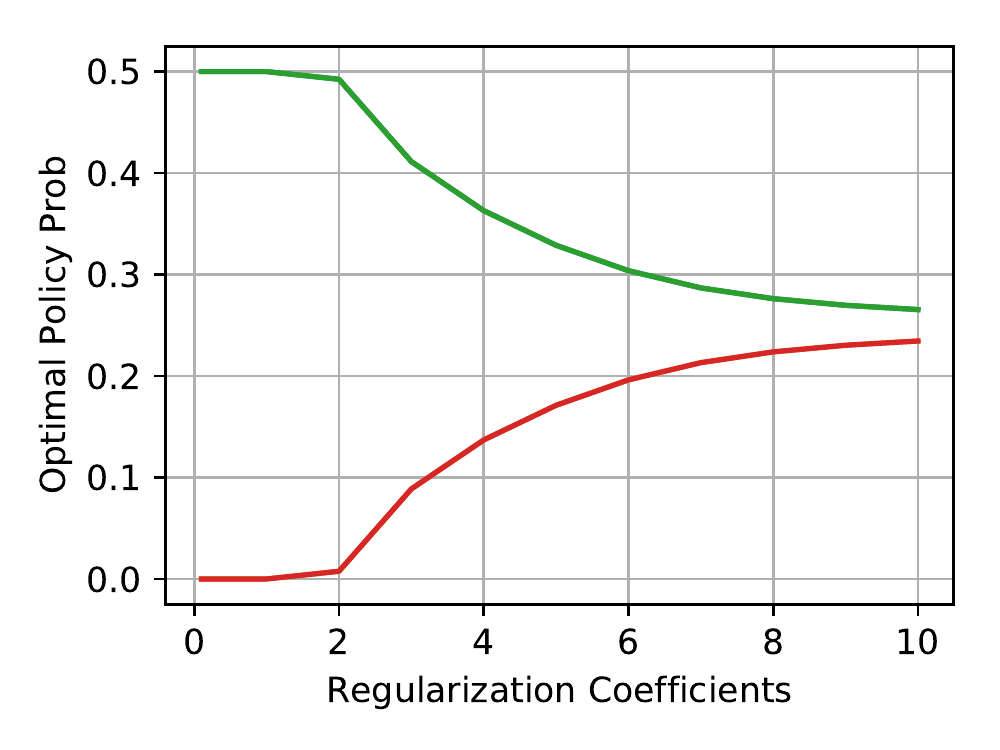}}
    \hspace{-.1in} 
    \subfloat[\texttt{shannon}: $-\log x$]{
    \includegraphics[width=0.25\columnwidth]{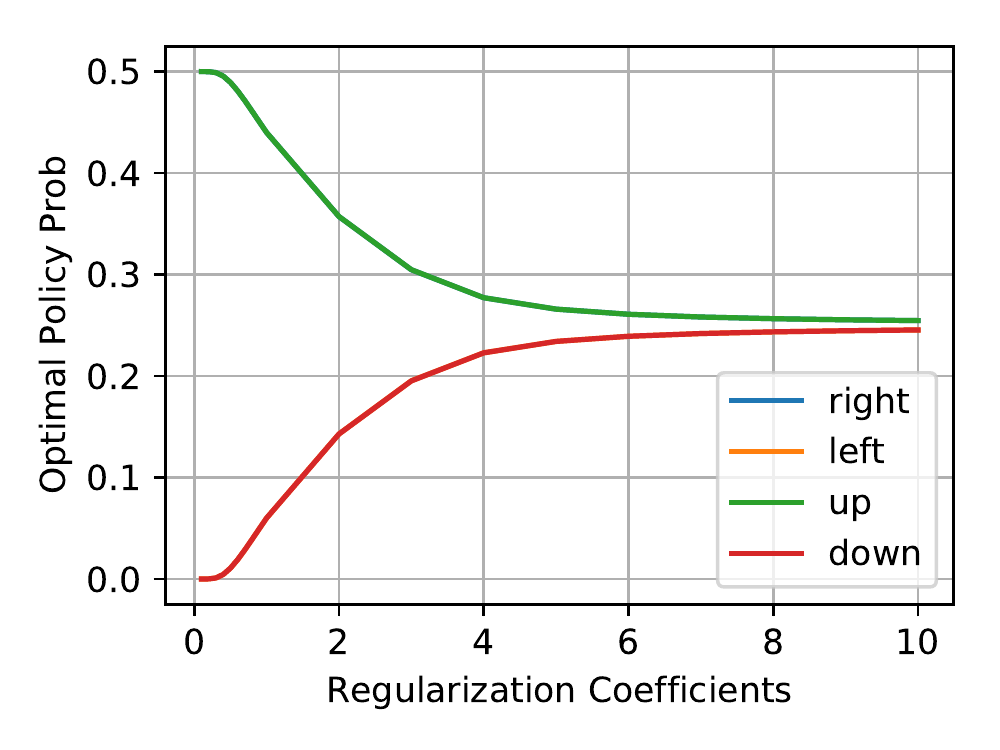}}
    \hspace{-.1in} \\
    \caption{The probability mass on four actions in the optimal policy regularized by four basic regularization functions at selected three states. (a)-(d) shows the results for the origin $(0,0)$. (e)-(h) shows the results for the state $(0, N/2)$ and (i)-(l) shows the results for the state $(N/2, N/2)$}
    \label{multi_action}
\end{figure*}

\begin{figure*}[ht]
    \centering
    \subfloat[\texttt{min}:$\min\{-\log(x),2(1-x)\}$]{
    \includegraphics[width=0.26\textwidth]{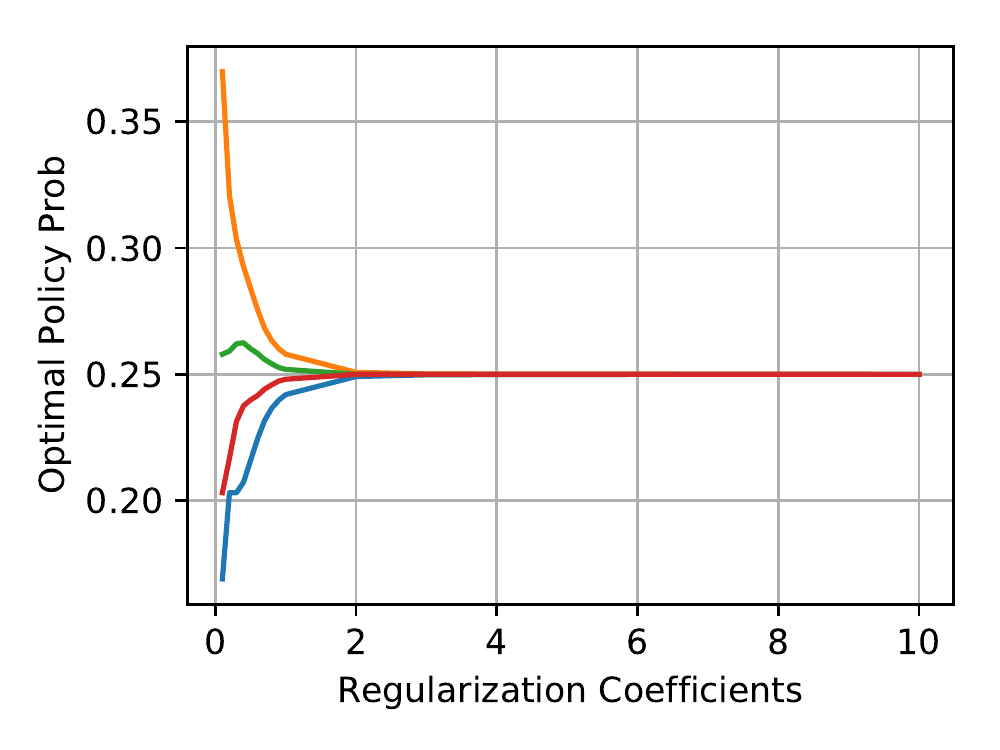}}
    \subfloat[\texttt{poly}: $\frac{1}{2}(1-x) + (1-x^2)$]{
    \includegraphics[width=0.26\textwidth]{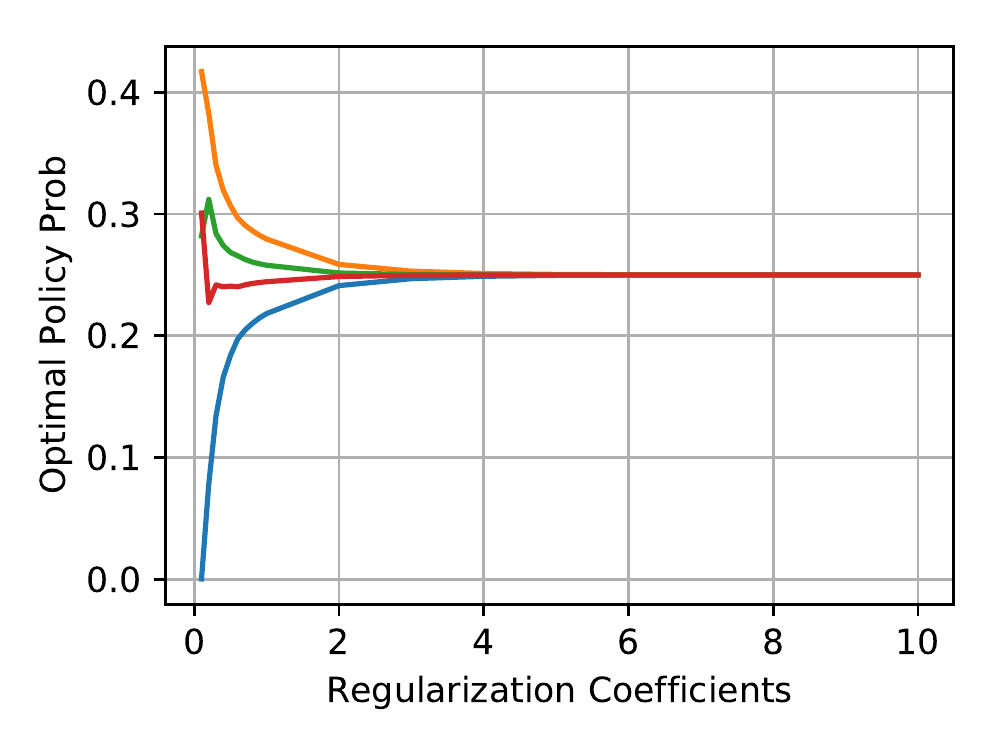}}
    \subfloat[\texttt{mix}: $-\log(x) + \frac{1}{2}(1-x)$]{
    \includegraphics[width=0.26\textwidth]{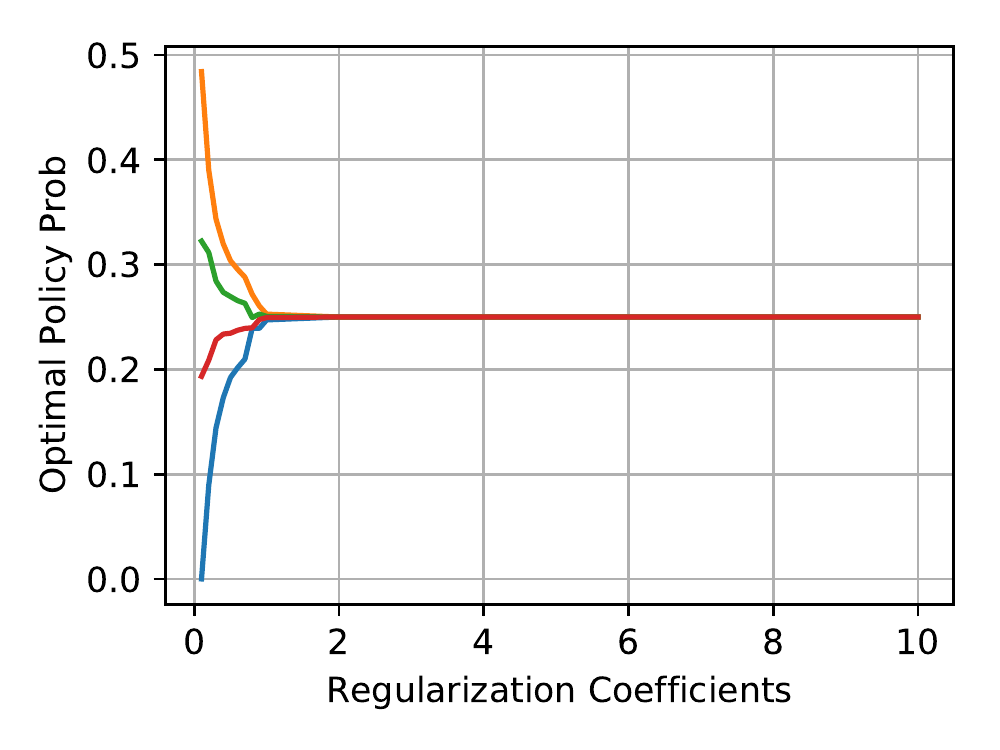}}\\
    \subfloat[\texttt{min}:$\min\{-\log(x),2(1-x)\}$]{
    \includegraphics[width=0.26\textwidth]{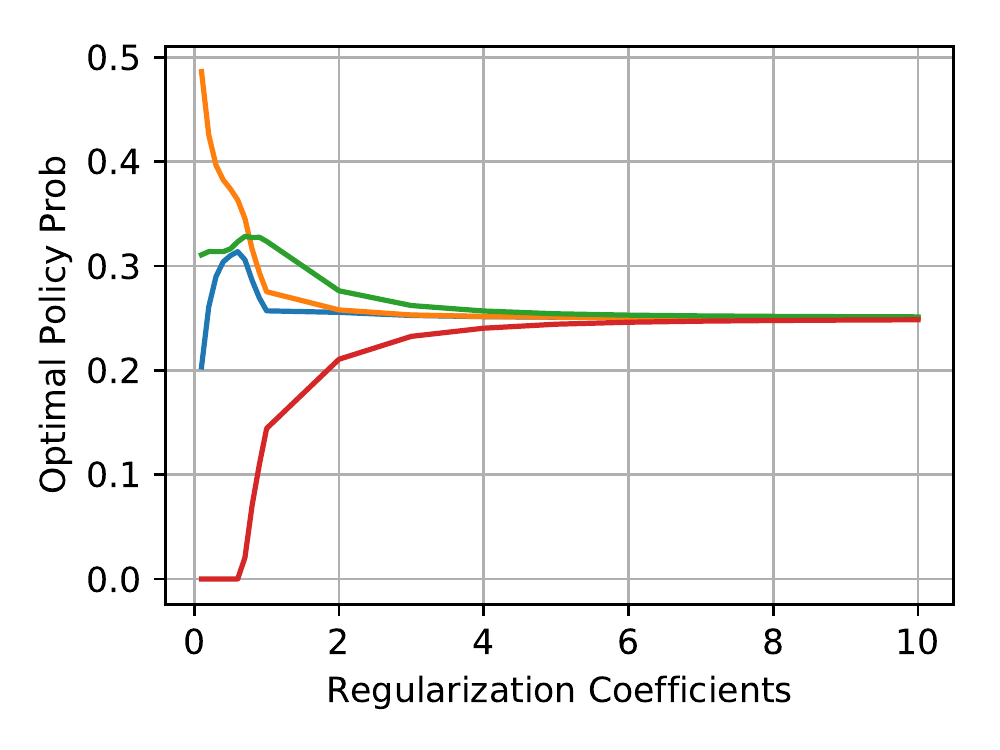}}
    \subfloat[\texttt{poly}: $\frac{1}{2}(1-x) + (1-x^2)$]{
    \includegraphics[width=0.26\textwidth]{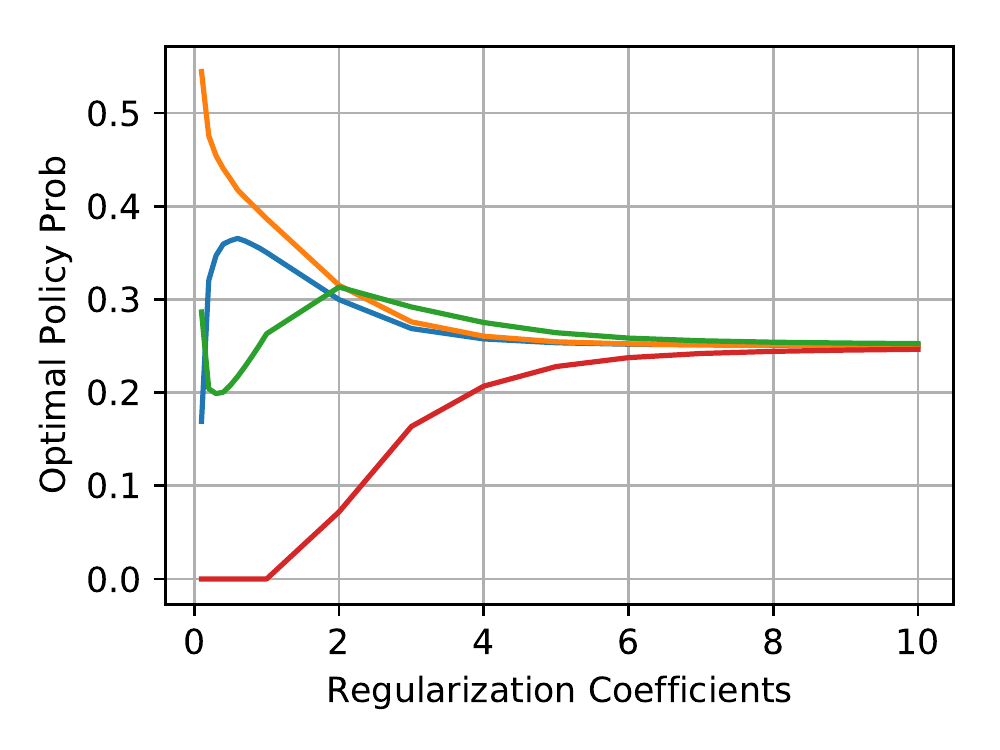}}
    \subfloat[\texttt{mix}: $-\log(x) + \frac{1}{2}(1-x)$]{
    \includegraphics[width=0.26\textwidth]{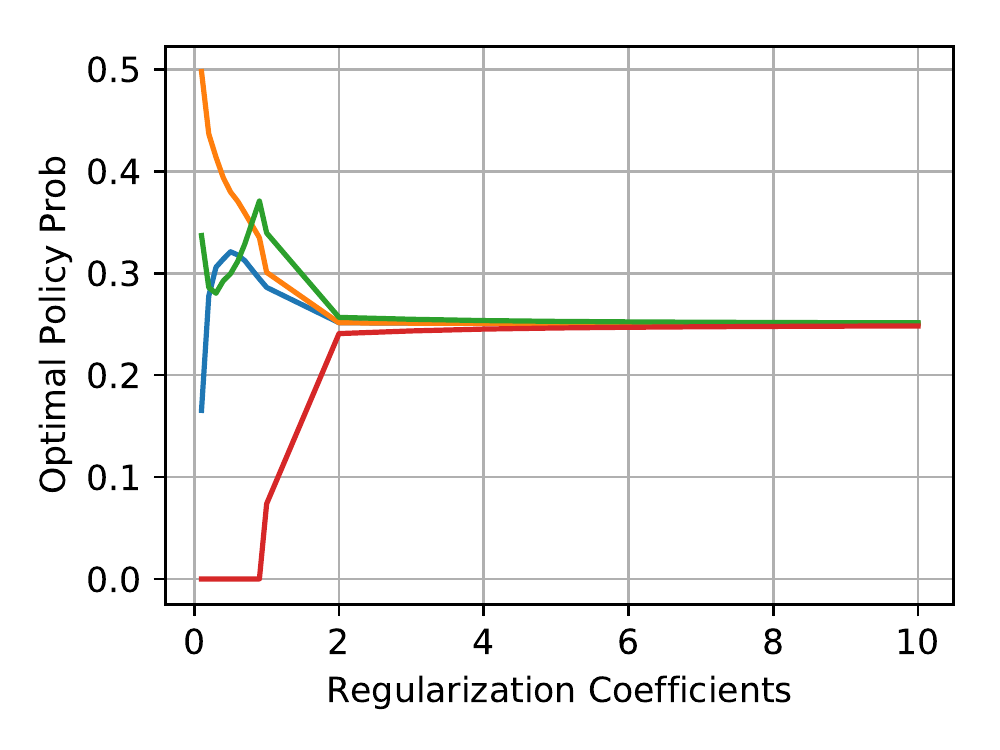}}\\
    \subfloat[\texttt{min}:$\min\{-\log(x),2(1-x)\}$]{
    \includegraphics[width=0.26\textwidth]{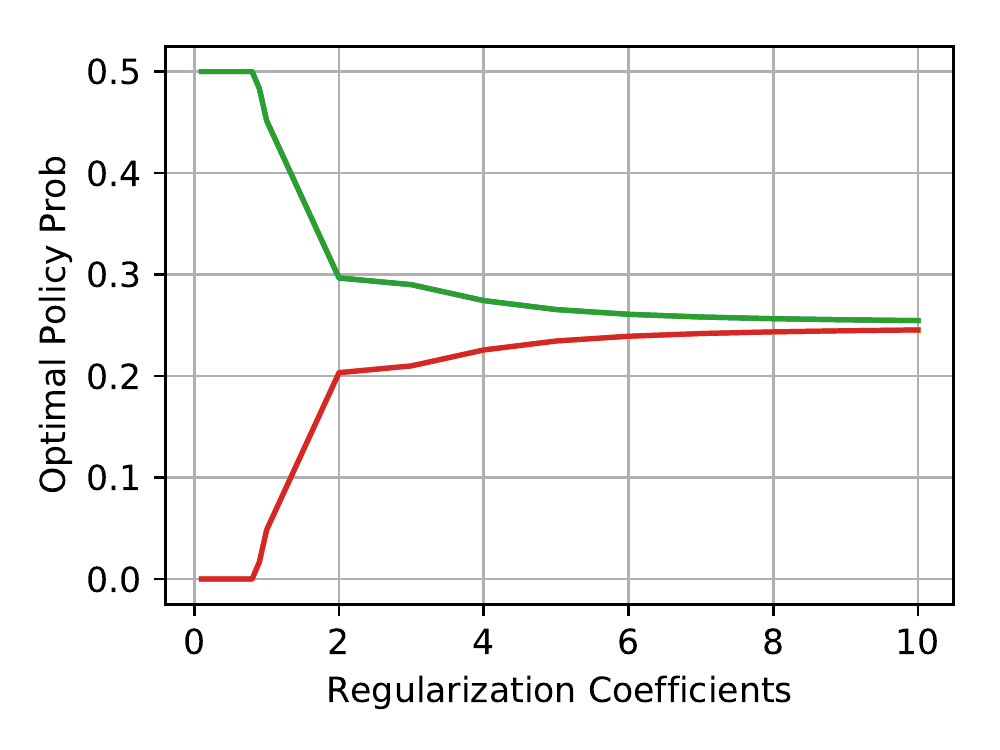}}
    \subfloat[\texttt{poly}: $\frac{1}{2}(1-x) + (1-x^2)$]{
    \includegraphics[width=0.26\textwidth]{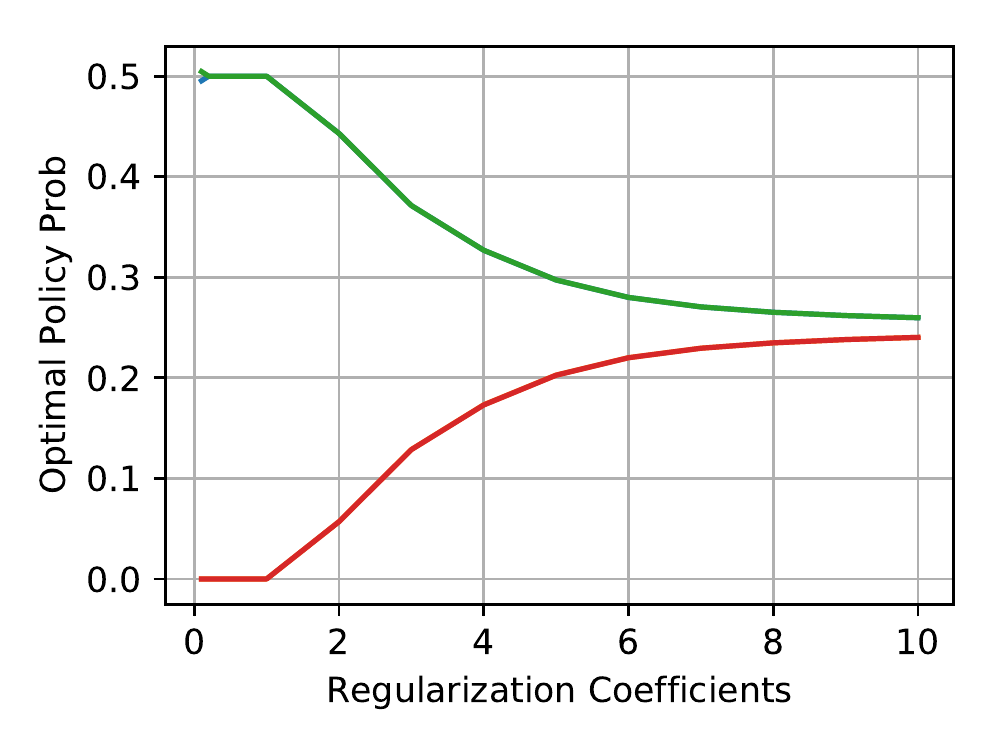}}
    \subfloat[\texttt{mix}: $-\log(x) + \frac{1}{2}(1-x)$]{
    \includegraphics[width=0.26\textwidth]{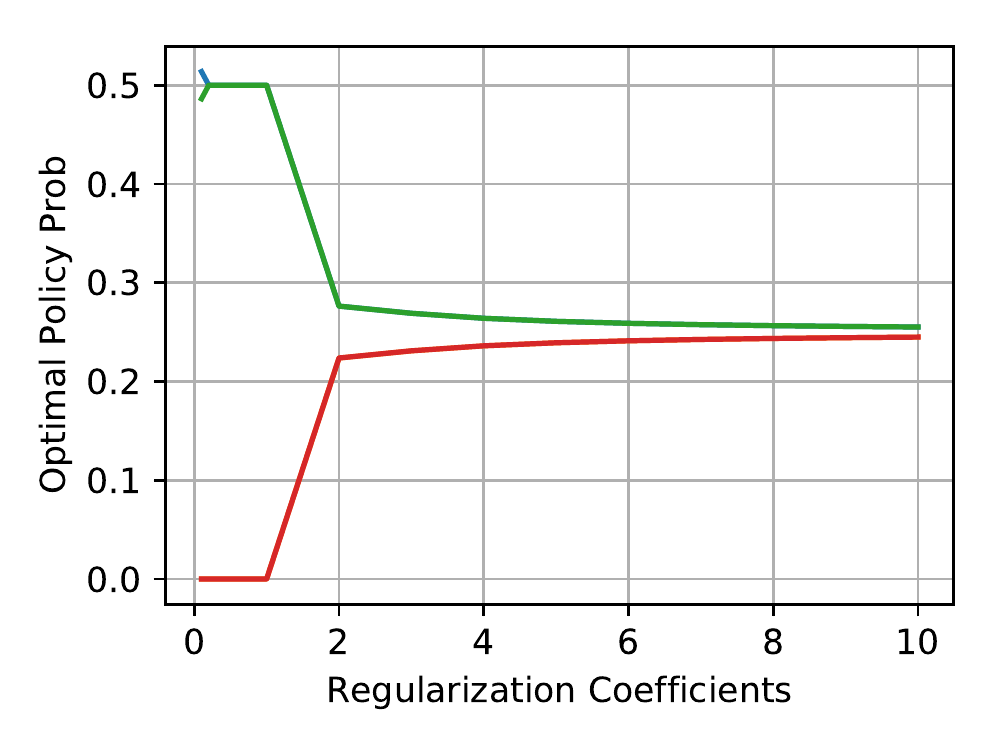}}\\
    \caption{The probability mass on four actions in the optimal policy regularized by three combined regularization functions at selected three states. (a)-(c) shows the results for the origin $(0,0)$. (d)-(f) shows the results for the state $(0, N/2)$ and (g)-(h) shows the results for the state $(N/2, N/2)$.}
    \label{multi_combine_action}
\end{figure*}

\subsection{Atari Environments}
\label{ap:exp_arari}

We test our regularizers on OpenAI Gym benchmark with Atari environments: AlienNoFrameskip-v4, BoxingNoFrameskip-v4, BreakoutNoFrameskip-v4 and SeaquestNoFrameskip-v4. 

 \paragraph{Architecture.} We model the Q-values and policies with deep neural networks. The Q-value network is composed of 3 convolutional layers, 1 fully connected laryer, and 1 output fully connected layer as the following scheme. In particalr, the first convolutional layer $C_1$ has 32 $8\times8$ filter with stride 4, the second $C_2$ contains $64$ $4\times4$ filters with stride 2, and the third $C_3$ has 64 $3\times3$ filters with stride 1. The fully connected layer $F_1$ consists of $512$ hidden units and the layer $F_2$ is a $512 \times |\mathcal{A} |$ fully connected layer. Each layer except the final layer is followed with a rectified linear activation(ReLU). For \texttt{shannon}, the architecture of policy network is the same as Q-value network except the final layer is replaced with softmax function. For other regularizers with $0\not\in\mathrm{dom}f_{\phi}'$, the final layers are replaced with a softmax fully connected layer and a ReLU fully connected layer. The final ReLU fully connected layer serves as dual variables. The output probability is the multiplication of the two layers and scale the sum to 1. We use the Adam optimizer with learning rate 0.0001 and $\varepsilon=0.0015$. The discount was set to $\gamma=0.99$. We update the target network every 10000 steps. The size of experience replay buffer is 100000 tuples, where 32 minibatches were sampled every 4 steps to update the network.
 
  \begin{flushleft}
 \begin{tikzcd}
\text{Q-value}: C_1\arrow[r, "ReLU"]&C_2\arrow[r, "ReLU"]&C_3\arrow[r, "ReLU"]&F_1\arrow[r, "ReLU"]& F_2
\end{tikzcd}
 \end{flushleft}
 
 \begin{flushleft}
 \begin{tikzcd}
\text{Policy (\texttt{shannon})}: C_1\arrow[r, "ReLU"]&C_2\arrow[r, "ReLU"]&C_3\arrow[r, "ReLU"]&F_1\arrow[r, "Softmax"]& F_2 
\end{tikzcd}
 \end{flushleft}

 \begin{flushleft}
 \begin{tikzcd}
\text{Policy (\texttt{Other})}: C_1\arrow[r, "ReLU"]&C_2\arrow[r, "ReLU"]&C_3\arrow[r, "ReLU"]&F_1\arrow[rd, "ReLU"']\arrow[r, "Softmax"]& F_2 \arrow[r] & F_4 \arrow[r, "Scale"] &F_5 \\
& & & & F_3 \arrow[ur, "\odot"] & &
\end{tikzcd}
 \end{flushleft}

\textbf{Parameter sensitivity.} We show how learning performance changes when $\lambda$ varies in Figure~\ref{fig:atari_sense}. Large $\lambda$ will make the policy becomes nearly uniform and unable to make use of the information of rewards. Small $\lambda$ will make the policy becomes nearly deterministic and therefore be stuck in local minima since no sufficient exploration is made. In the experiment of Breakout, we find that \texttt{shannon} is insensitive to $\lambda$. However, for other regularizers, small or large $\lambda$ would make the algorithm fail to converge.
\begin{figure*}[ht]
    \vspace{-10pt}
    \centering
    \subfloat[Breakout Shannon]{
    \includegraphics[width=0.25\textwidth] {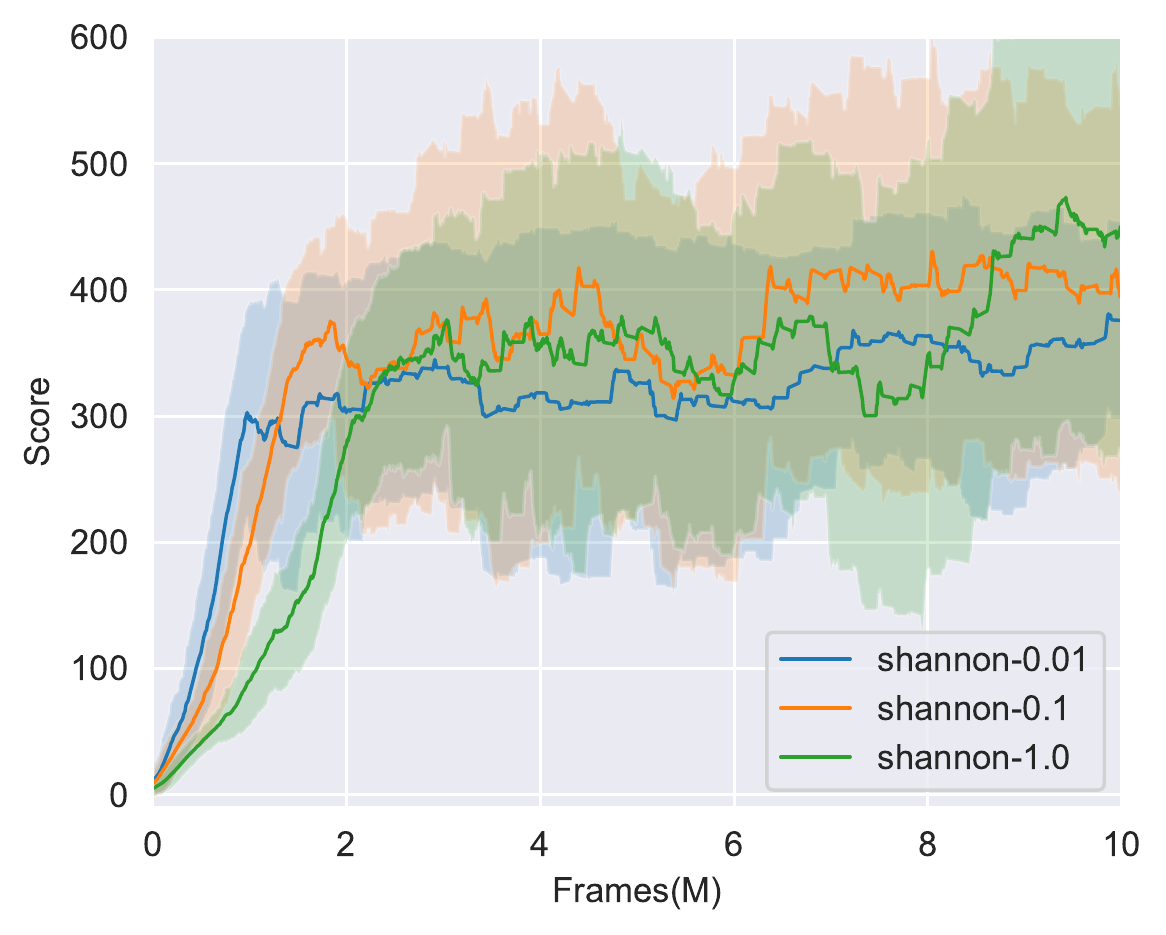}}
    \hspace{-.1in}
    \subfloat[Breakout Tsallis]{
    \includegraphics[width=0.25\textwidth] {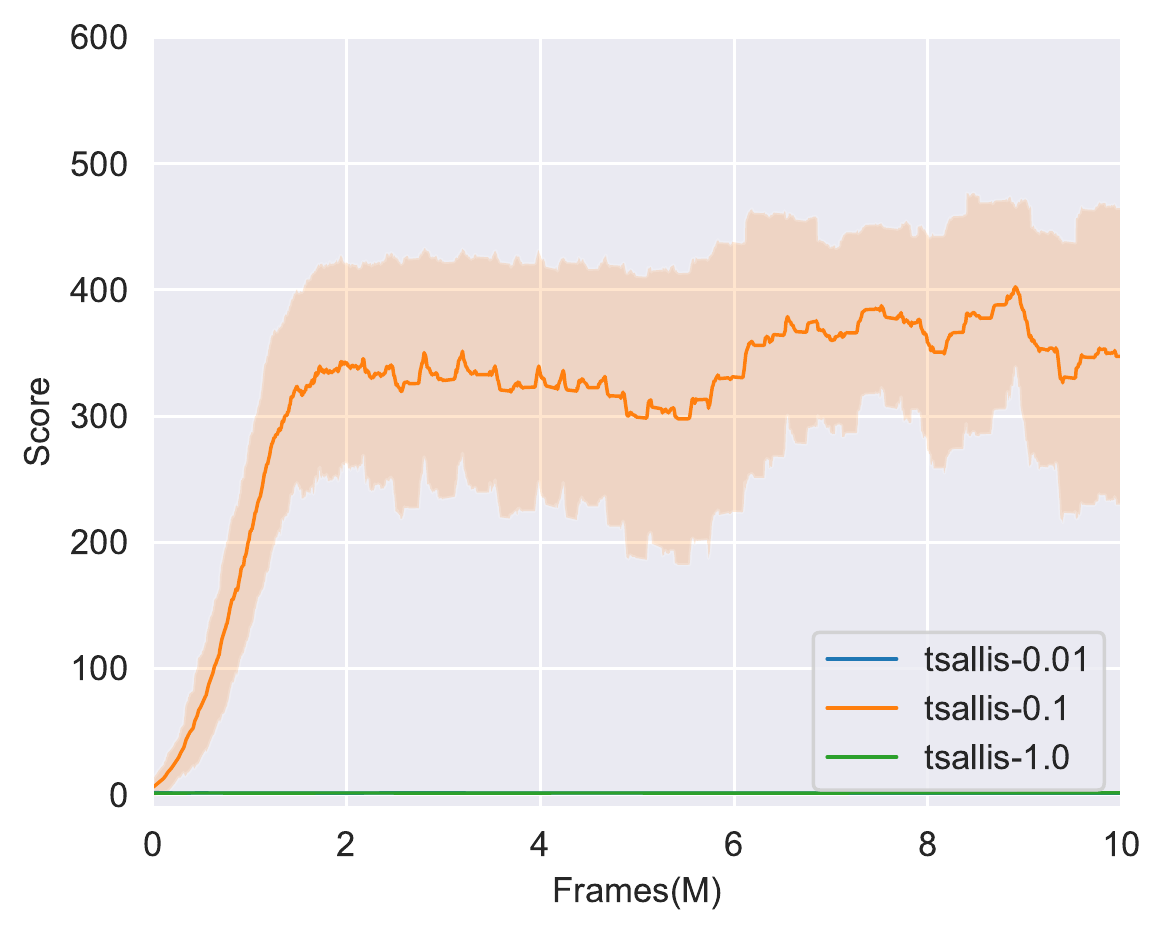}}
    \hspace{-.1in}
    \subfloat[Breakout Expx]{
    \includegraphics[width=0.25\textwidth] {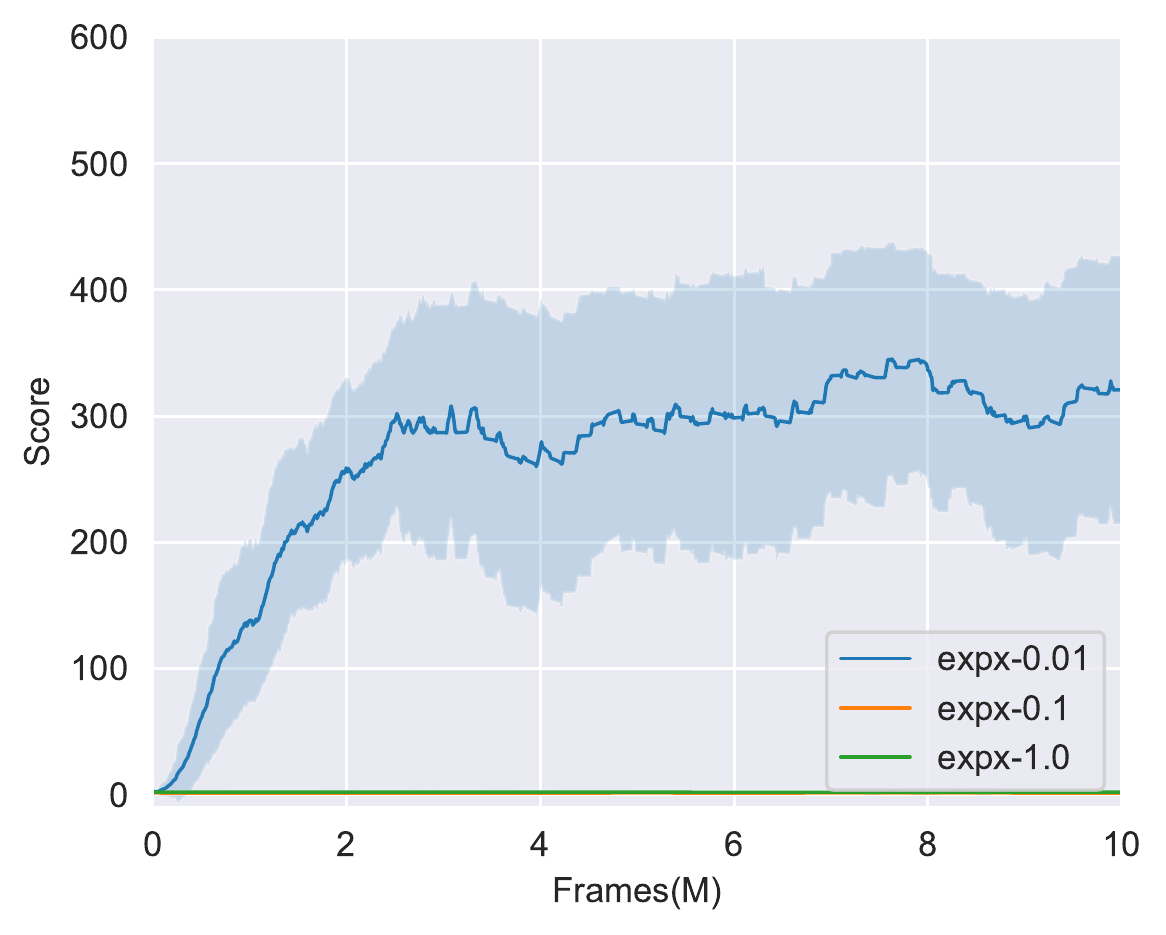}}
    \hspace{-.1in} 
    \subfloat[Breakout Cosx]{
    \includegraphics[width=0.25\textwidth] {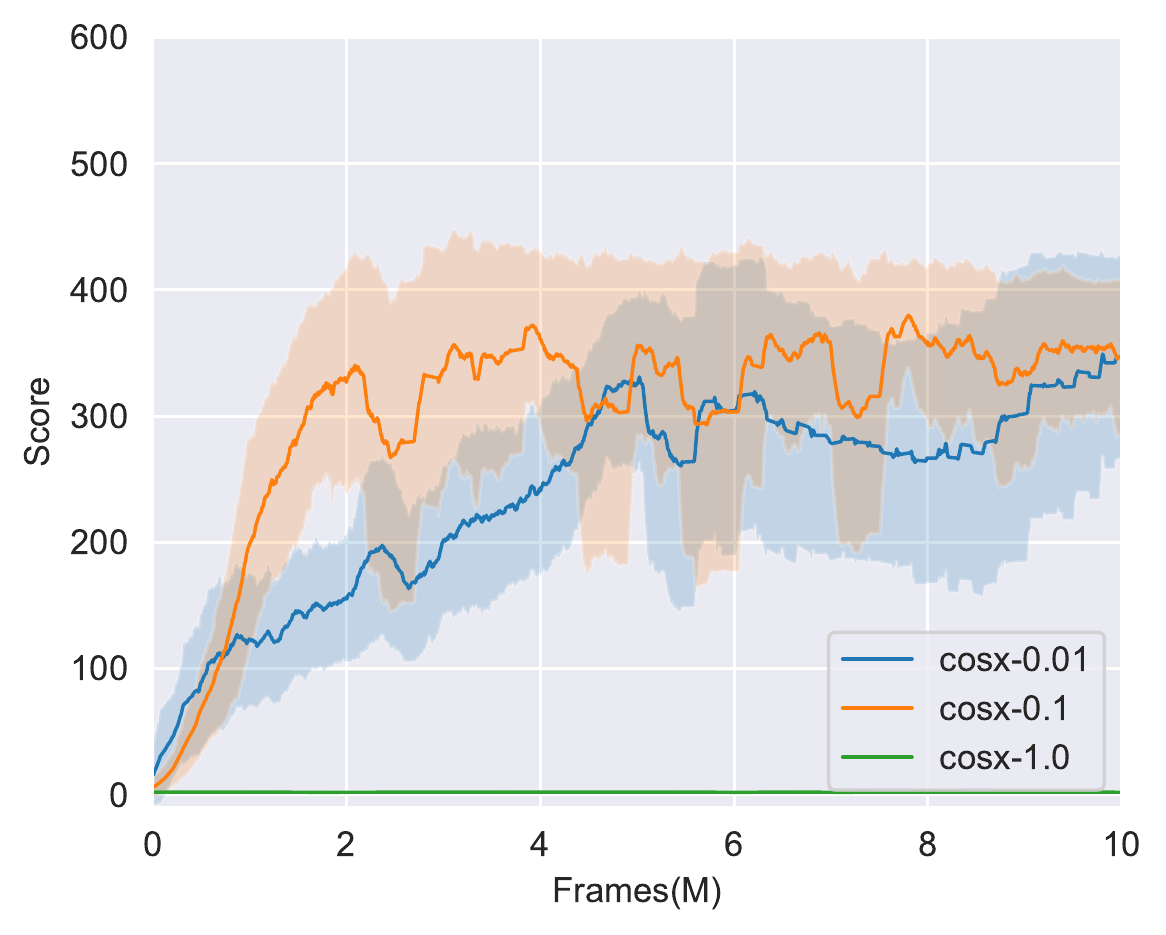}}
    \hspace{-.1in} 
    \caption{Training curves on Atari games. Each entry in the legend is named with the rule $\texttt{the regularization form} + \lambda$. The score is smoothed with 100 windows while the shaded area is the one standard deviation.}
    \label{fig:atari_sense}
    \vspace{-10pt}
\end{figure*}

\subsection{Mujoco Environments}
\label{ap:exp_mujoco}
We choose OpenAI Gym benchmark with the MuJoCo simulator for our test environments, including Hopper-v2, Walker-v2, HalfCheetah-v2 and Ant-v2. We exclude \texttt{cos} due to its numerical unstability. Then
we only consider three regularizers, i.e., $-\log x$, $\frac{1}{2}(1-x)$ and $\exp(1)-\exp(x)$ for their stable performance in deep RL training process. Since RAC is very similar to SAC except RAC is agonostic to regularization forms. We build our code on the work of SAC~\cite{haarnoja2018soft1}. For comparison's purpose, we use the same network structure and hyper-parameter settings. Figure~\ref{fig:mujoco_all} shows that the full experiments we conducted in each environment. Each regularizer is coupled with three regularization parameter $\lambda \in \{1, 0.1, 0.01\}$.

\paragraph{The reparameterization trick.} Mujoco is continuous problem, we model the policy $\pi_{\psi}(a|s)$ as a factorized Gaussian distribution with the mean and variance modeled as neural networks. Besides, we can update policy parameters like eqn\eqref{eq:loss-policy} as there is no access to compute expectation over $\pi$ in continuous setting. We apply the reparameterization trick to update the policy. The policy is reparameterized as an factorized gaussian with tanh output, i.e.,
\begin{equation*}
    \pi_{\psi}(s, \epsilon)=\tanh(\text{mean}_{\psi}(s)+\epsilon\cdot\text{std}_{\psi}(s))
\end{equation*}
where $\epsilon$ is an input noise vector, sampled from the standard Gaussian. Denote the generative action $a_{t}=\tanh(Z_{t})$ and $Z_{t}$ is a multivariate normal distribution, we have the density transformation $\pi(a_{t})=\mathcal{N}(Z_{t})|\det(\frac{da_{t}}{d Z_{t}})|^{-1}$, where $\log\det(\frac{da_{t}}{dZ_{t}})=\sum_{i=1}^{\mathcal{A}}\log(1-\tanh^{2}(Z_{t,i}))$. Therefore, we can rewrite the policy loss as:
\begin{equation}
    J_{\pi}(\psi) = \hat{\EB}_{\DP}\left[ -\lambda \phi(\pi_{\psi}(s_t, \epsilon_t))  - Q_{\theta}(s_t, \phi(\pi_{\psi}(s_t, \epsilon_t)) ) \right].
\end{equation}
We now approximate the gradient of $J_{\pi}(\psi)$ with:
\begin{equation*}
    \Hat{\nabla}J_{\pi}(\psi) 
= \nabla_{\psi} \phi(\pi_{\psi}(a_t|s_t)) + (\nabla_{a_t} \phi(\pi_{\psi}(a_t|s_t)) {-} \nabla_{a_t}Q(s_t, a_t)) \nabla_{\psi} \pi_{\psi}(s_t, \epsilon_t),
\end{equation*}
where $a_{t}$ is evaluated at $\pi_{\psi}(s_{t}, \epsilon_{t})$.

\textbf{Parameter sensitivity.} As reported by~\citet{haarnoja2018soft}, $\texttt{shannon}$ is very sensitive to the regularization coefficient $\lambda$ (which is also referred as the temperature parameter). As an extreme example, when $\lambda=1$, $\texttt{shannon}$ fails to converge in Walker-v2 and Ant-v2. By contrast, $\texttt{tsallis}$ is less sensitive to $\lambda$. As $\lambda$ varies from 0.01 to 1, the performance of $\texttt{tsallis}$ doesn't degrade to much. $\texttt{Exp}$ is also insensitive to hyperparameter $\lambda$.

\begin{figure*}[ht]
    \centering
    \subfloat[Ant-v2]{
    \includegraphics[width=0.25\textwidth] {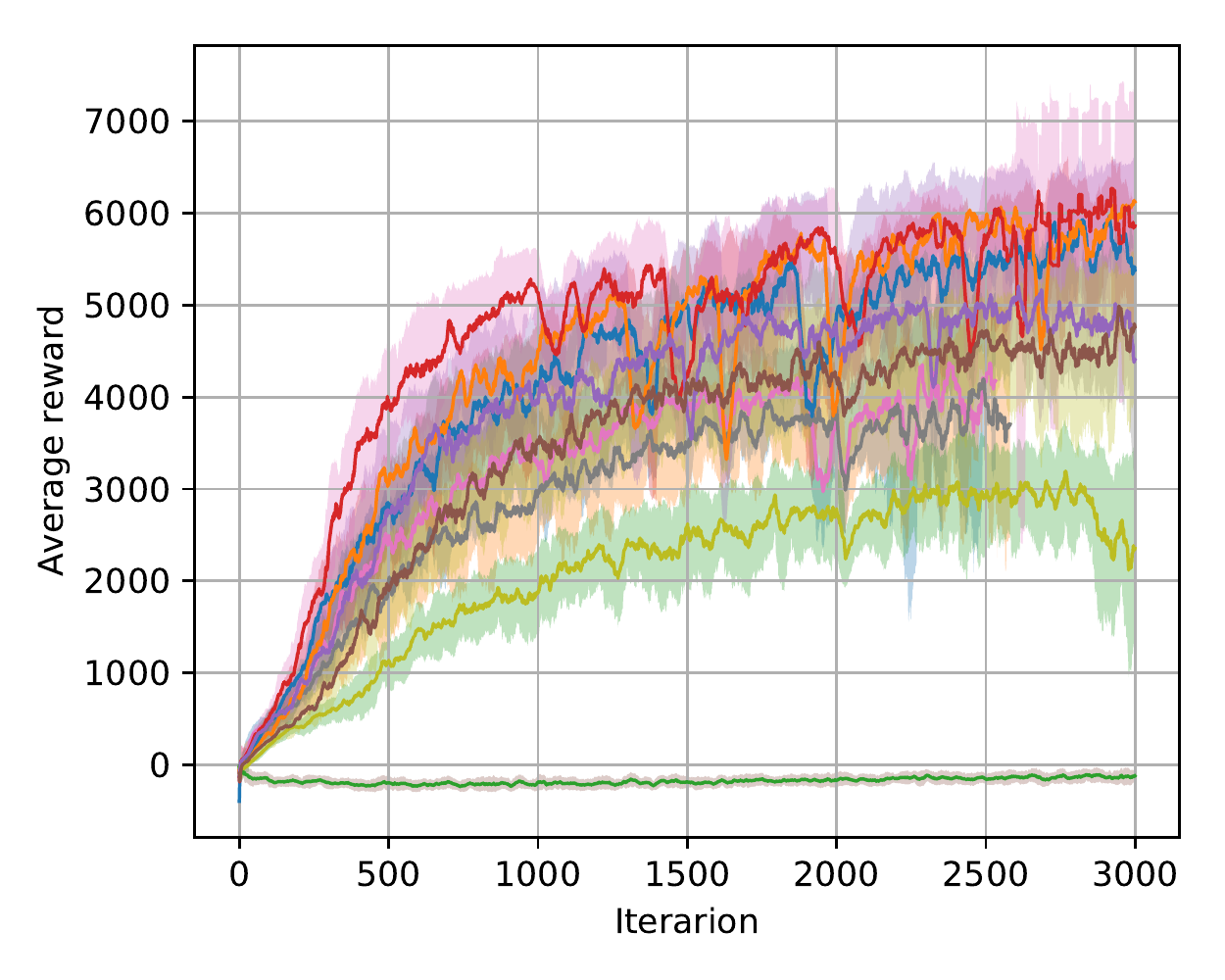}}
    \hspace{-.1in}
    \subfloat[Walker-v2]{
    \includegraphics[width=0.25\textwidth] {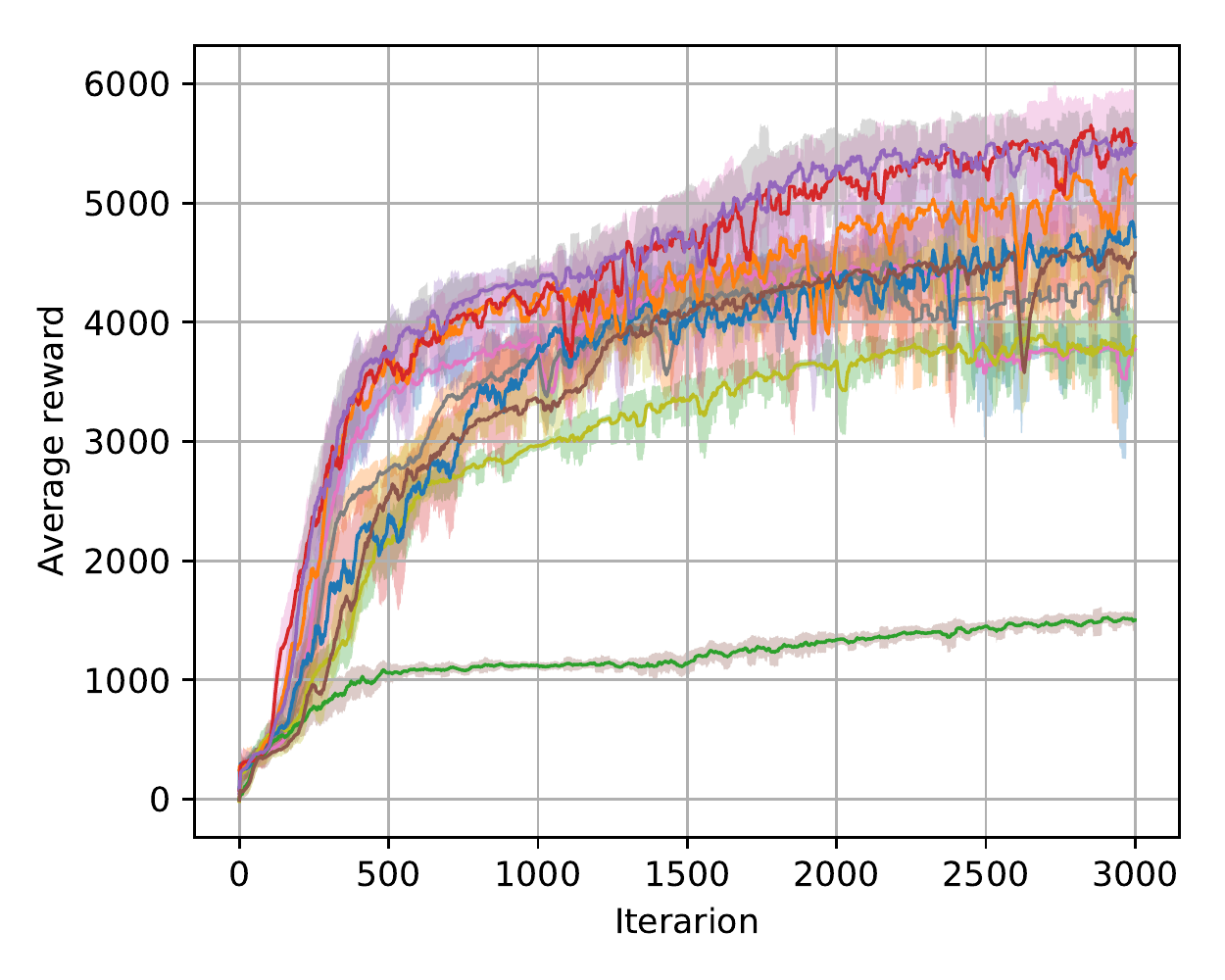}}
    \hspace{-.1in}
    \subfloat[Hopper-v2]{
    \includegraphics[width=0.25\textwidth] {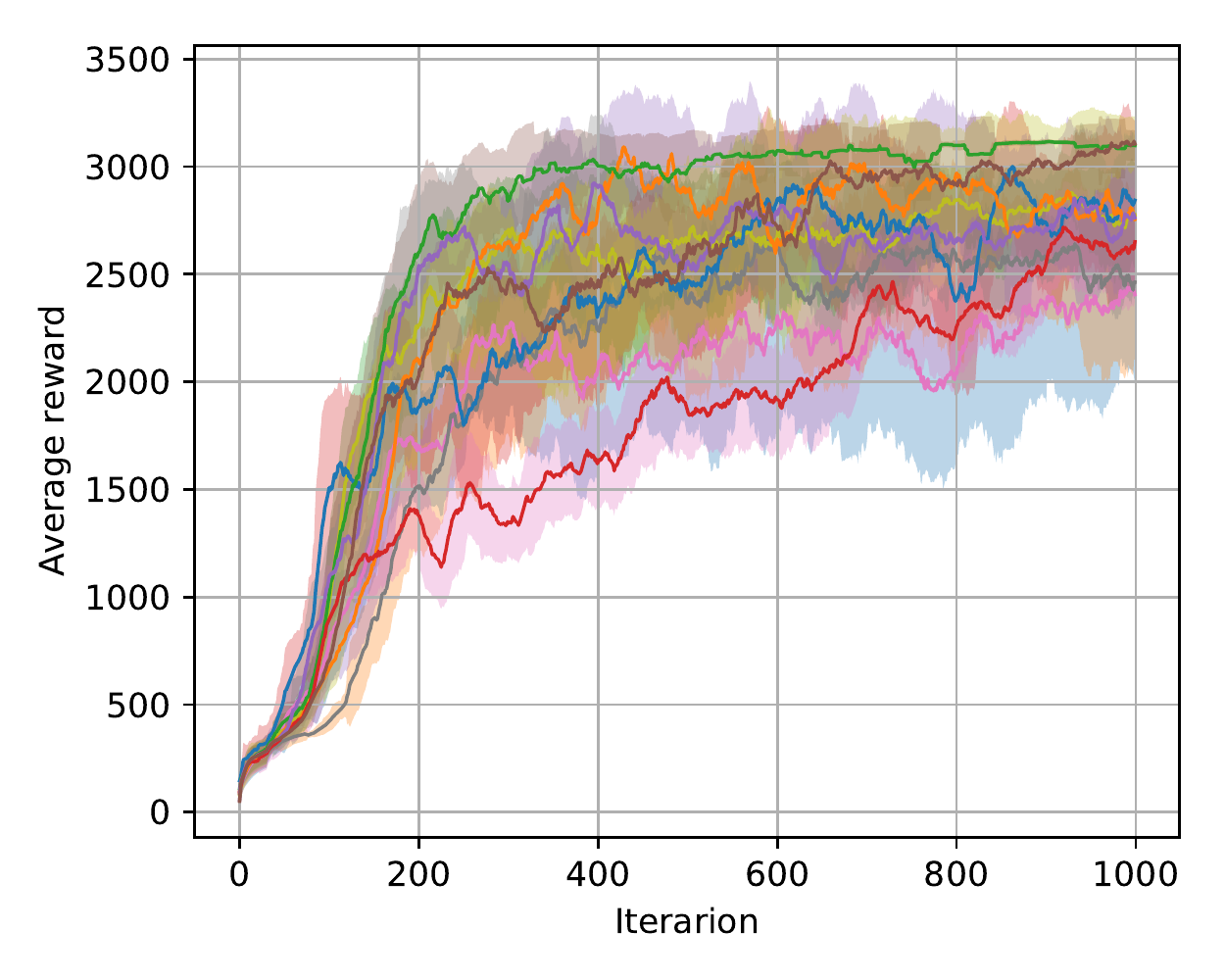}}
    \hspace{-.1in} 
    \subfloat[HalfCheetah-v2]{
    \includegraphics[width=0.25\textwidth] {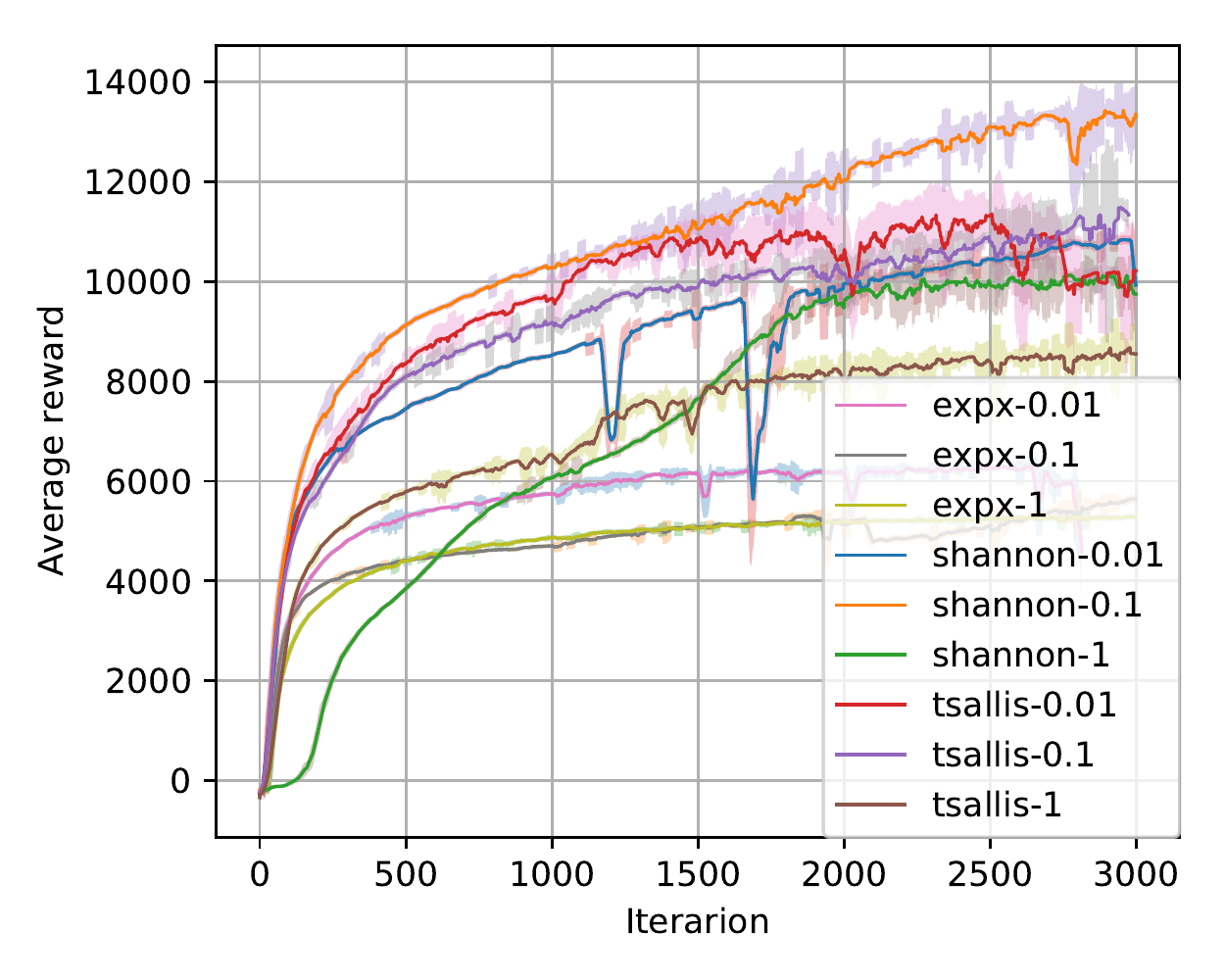}}
    \hspace{-.1in} 
    \caption{Training curves on continuous control benchmarks. Each curve is the average of four experiments with different seeds. Each entry in the legend is named with the rule $\texttt{the regularization form} + \lambda$. The score is smoothed with 30 windows while the shaded area is the one standard deviation.}
    \label{fig:mujoco_all}
\end{figure*}

\end{appendix}

\end{document}